\def \EE {\mathbb{E}}
\def \RR {\mathbb{R}}
\def \diag {\mathrm{diag}}
\def \xb {\mathbf{x}}
\def \yb {\mathbf{y}}
\def \zb {\mathbf{z}}
\def \cb {\mathbf{c}}
\def \eb {\mathbf{e}}
\def \mb {\mathbf{m}}
\def \gb {\mathbf{g}}
\def \gbt {\Tilde{\mathbf{g}}}
\def \gbh {\hat{\mathbf{g}}}
\def \vb {\mathbf{v}}
\def \bdelta {\bm{\delta}}
\def \Vb {\mathbf{V}}
\def \Vbh {\hat{\mathbf{V}}}
\def \Ib {\mathbf{I}}
\definecolor{carnelian}{rgb}{0.7, 0.11, 0.11}
\newcommand{\ifcomments}{\iftrue}
\begin{document}

\twocolumn[

\aistatstitle{Communication-Compressed Adaptive Gradient Method for Distributed Nonconvex Optimization}

\aistatsauthor{ Yujia Wang \And Lu Lin \And  Jinghui Chen }

\aistatsaddress{ Pennsylvania State University \And  University of Virginia \And Pennsylvania State University } ]

\begin{abstract}
Due to the explosion in the size of the training datasets, distributed learning has received growing interest in recent years. One of the major bottlenecks is the large communication cost between the central server and the local workers. While error feedback compression has been proven to be successful in reducing communication costs with stochastic gradient descent (SGD), there are much fewer attempts in building  communication-efficient adaptive gradient methods with provable guarantees, which are widely used in training large-scale machine learning models. In this paper, we propose a new communication-compressed AMSGrad for distributed nonconvex optimization problem, which is provably efficient. Our proposed distributed learning framework features an effective gradient compression strategy and a worker-side model update design. We prove that the proposed communication-efficient distributed adaptive gradient method converges to the first-order stationary point with the same iteration complexity as uncompressed vanilla AMSGrad in the stochastic nonconvex optimization setting. Experiments on various benchmarks back up our theory.
\end{abstract}

\section{Introduction}
The recent success of deep learning and large-scale training made it possible to use machine learning to solve complicated real-world tasks, such as computer vision \citep{he2016deep}, speech recognition \citep{hinton2012deep}, natural language processing \citep{devlin2018bert}, etc. Due to the explosion in the size of training datasets in recent years, single GPU training on such models can easily take up to weeks or even months to finish. As a consequence, distributed training algorithms have attracted growing interest over the years. 
In standard distributed settings with one central server and $n$ workers, local workers parallelly compute local gradients, while the server aggregates the gradients from the workers, updates the model, and sends the new model back to the workers. However, data transmissions across the machines can be quite expansive in terms of both communication costs (especially for cellular networks) and latency. Therefore, various methods were studied in order to achieve communication-efficient distributed learning by either reducing communication bits \citep{efsparse,basu2019qsparse,karimireddy2019error} or saving the number of communication rounds \citep{zheng2017asynchronous,chen2021cada}.
One of the principled approaches for effectively reducing the communication cost is to perform gradient compression before transmissions. It directly compresses the local fresh gradient on each worker before uploading the gradient to the server. However, the compression would slow down the convergence or even diverge \citep{beznosikov2020biased} due to the loss of information at each compression step. Later on, error feedback \citep{efsparse,karimireddy2019error} was proposed to alleviate this problem and reduce the information loss by proposing a compensating error sequence. Each worker compresses and uploads the combination of the last step compensating error and the local fresh gradient instead of compressing the gradient directly. Recent studies \citep{1bitsgd, karimireddy2019error, efsparse} show that error feedback has been widely used in distributed SGD to communicate efficiently and ensure the same convergence rate as vanilla SGD. 

While communication-efficient distributed SGD has been widely studied, there are much fewer attempts in building communication-efficient distributed adaptive gradient methods \citep{kingma2014adam,j.2018on}. It has been shown that SGD works less efficiently compared with adaptive gradient methods when training large-scale models such as BERT \citep{devlin2018bert}, GPT-3 \citep{brown2020language} and GAN \citep{goodfellow2014generative}. One of the major challenges is that the traditional error feedback mechanism is not compatible with adaptive gradient methods since the variance term (i.e., second-order moments of the historical gradients) in the adapted gradient method can be unstable due to the accumulated compression error \citep{tang20211}. 1-bit Adam \citep{tang20211} partially solve this problem by first using vanilla Adam at the beginning of training to get an estimate of variance term, then freezing this variance term and performing distributed Adam with the fixed variance. While this solution indeed gets rid of the unstable variance issue, the adaptivity is no longer changing and the major part of the algorithm is actually more similar to SGD with momentum. Due to the same reason, the convergence guarantee provided in \cite{tang20211} is not related to distributed Adam but is an extension to distributed SGD with momentum.

In this paper, we develop a new compression method which is provably efficient, namely \textbf{C}ommunication-compressed \textbf{D}istributed \textbf{Ada}ptive gradient \textbf{M}ethod (CD-Adam). We revisit the unstable variance issue in distributed AMSGrad \citep{j.2018on} and identify the need for an improved gradient compression strategy. Specifically, we adopt the Markov compression sequence recently proposed by \citet{richtarik2021ef21}, which ideally could lead to contractive compression error for gradient descent if the gradient sequence (to be compressed) is convergent. Note that this is not simply applying the conclusion of \citet{richtarik2021ef21} to the distributed AMSGrad setting, as \citet{richtarik2021ef21} only deals with worker-to-server compression. While extending \citet{richtarik2021ef21} to both worker-to-server and server-to-worker compression may seem straightforward for standard gradient descent, it is particularly tricky for the adaptive gradient method (see Section \ref{sec:cdadam} for details). Therefore, it requires carefully designed algorithms and analyses to build a provably efficient distributed adaptive gradient method.

We summarize our contribution as follows:
\begin{itemize}
    \item We propose a new communication-compressed distributed AMSGrad approach which solves the bottleneck of applying communication compression strategies for fully-functional\footnote{Here we use ``fully-functional'' to differentiate with the variance-freezed Adam used in \citet{tang20211}.} adaptive gradient methods in the distributed setting. The proposed method largely reduces the communication cost by enforcing both worker-to-server and server-to-worker communication compression without any warm-up stages.
    \item We theoretically prove the convergence of our proposed algorithm in the nonconvex stochastic optimization setting. Specifically, we show that our proposed fully compressed  CD-Adam can reach its first-order $\epsilon$-stationary point within $ O(1/\epsilon^2)$ iterations, which is the same iteration complexity as the uncompressed vanilla AMSGrad.
    This suggests that without applying any variance-freezing tricks, the fully compressed distributed adaptive gradient method can still provably converge to its $\epsilon$-stationary point.
    \item Thorough experiments on various real-world benchmarks show that our proposed CD-Adam reduces the communication cost by around $32\times$ over the original AMSGrad and around $5\times$ over 1-bit Adam.
\end{itemize}

\section{Related Work}
\noindent\textbf{Stochastic gradient descent and adaptive gradient methods:}
Stochastic gradient descent (SGD) \citep{sgd1951} is broadly used in training large-scale machine learning problems. Despite being simple to implement, SGD can be sensitive to parameters such as learning rate and slow to converge. Adaptive gradient methods were proposed to further improve over SGD. Adam \citep{kingma2014adam}, one of the most popular adaptive gradient methods, has shown to be fast convergent and also robust to hyper-parameters like learning rate. It designs an adaptive learning rate for each different coordinate using the past gradient. Aside from Adam, AdaGrad \citep{Duchi2011AdaptiveSM} applied the second moment of the gradient to adaptive the learning rate, RMSProp \citep{tieleman2012lecture} further used the second moment of the gradient with a decay rate, AdaDelta \citep{zeiler2012adadelta} is an extension of AdaGrad with a non-increasing learning rate. Later on, \citet{j.2018on} pointed out a non-convergence issue of Adam, and proposed a new AMSGrad algorithm for ensuring the convergence.
\citet{zaheer2018adaptive} studied the effect of adaptive denominator constant $\nu$ and mini-batch size in the convergence of adaptive gradient methods. 
AdaBound \citep{luo2019adaptive} proposed with both upper and lower bound for the variance term of Adam.  AdamW \citep{loshchilov2018decoupled} proposed to fix the weight decay regularization in Adam by decoupling the weight decay from the gradient update.
\citet{chen2020closing} proposed a partially adaptive gradient method and proved its convergence in nonconvex settings.
\citet{chen2018on,1808.05671} showed the convergence rate of a class of adaptive gradient methods under the nonconvex stochastic optimization setting.
\citet{alacaoglu2020new} proposed a new framework to derive data-dependent regret bounds with a constant momentum parameter in various settings.

\noindent\textbf{Distributed SGD and error feedback:}
For communication-efficient distributed SGD, one of the most common strategies is to compress the gradients before uploading. \citet{alistarh2017qsgd} provided a theoretical analysis of the centralized compressed distributed SGD. \citet{1bitsgd} compressed the coordinates of the gradient into $\pm 1$ by its sign. \citet{bernstein2018signsgd} proposed signSGD and proved its convergence in the nonconvex setting. Variant works has applied kinds of compression methods such as sparsification \citep{efsparse,basu2019qsparse}, quantization \citep{karimireddy2019error}, and sketching \citep{ivkin2019communication}. \citet{mishchenko2019distributed} proposed DIANA, which adopts the compression of gradient difference in distributed settings.  \citet{horvath2019stochastic} introduced several variants of DIANA including the variance reduction version. \citet{philippenko2020artemis} introduced a bi-directional compression schemes with memory in distributed or federated setting.
Error feedback (or error compensation) largely improves the compression error bound and has been shown to be critical for ensuring fast convergence of the compression mentioned above. \citet{1bitsgd} showed that with  error feedback, even 1-bit gradient communication under SGD still obtains the convergence rate of vanilla SGD. \citet{karimireddy2019error} applied error feedback to signSGD \citep{bernstein2018signsgd} under nonconvex setting, \citet{efsparse} performed error feedback in sparsity strong convex settings, and \citet{stich2019error} proposed a error-feedback framework also works on nonconvex but single node setting. \citet{doublesqueeze} proposed bi-directional SGD method combining with error compensation. \citet{gorbunov2020linearly} propsed a framework with variants of quantized SGD including EC-SGD and EC-SGD-DIANA. \citet{liu2020double,zheng2019communication,philippenko2021preserved} also studied combining bi-directional compression and error feedback strategy. There are also more works on the distributed SGD under nonconvex optimization \citep{koloskova2019decentralized,basu2019qsparse}.

\noindent\textbf{Communication-efficient distributed adaptive gradient method:}
There are only fewer attempts in developing communication-efficient distributed adaptive gradient methods. 1-bit Adam \citep{tang20211} adopted a variance-freezed Adam by pointing out that the variance term of Adam becomes stable in later training stages. Combined with error feedback, 1-bit Adam achieves the same convergence rate as distributed SGD. \citet{zhong2021compressed} proposed a gradient compressed version of LANS \citep{zheng2020accelerated}. \citet{10.1145/3470890} developed a distributed quantized Adam with error feedback. However, the proposed algorithm in \citet{10.1145/3470890} can only converge to its $\epsilon$-stationary point with worker-to-server compression alone but not fully compressed Adam with both worker-to-server and server-to-worker compression. Another related work is CADA \citep{chen2021cada}, which reduced the communication rounds instead of performing communication compression. CADA adaptively reused the stale update parameters, and it achieved the same convergence rate as vanilla AMSGrad for nonconvex optimization. Note that CADA's strategy of reducing communication rounds is orthogonal to ours can possibly be combined with ours for further improvements.

\section{Problem Formulation}
In this paper, we aim to solve the following distributed optimization problem under nonconvex stochastic optimization setting: 
\begin{align} \label{lossfunc}
\min_{\xb \in \RR^d} f(\xb): = \frac{1}{n} \sum_{i=1}^n \underbrace{\EE_{\xi^{(i)}} f_i(\xb;\xi^{(i)})}_{:=f_i(\xb)} ,
\end{align}
where $d$ denotes the dimension of the model, $n$ denotes the total amount of workers, $\xi^{(i)}$ denotes the stochastic noise variable, and $f_i(\xb)$ denotes the loss function on the $i$-th worker. In the stochastic setting, we cannot obtain the full gradient of loss function $f_i(\xb)$. Instead, we can only obtain the unbiased estimators of $\nabla f_i(\xb)$, i.e., $\nabla f_i(\xb;\xi^{(i)})$.

First, let us revisit the vanilla distributed setting of AMSGrad \citep{j.2018on}. Consider a distributed learning system that contains a parameter server and $n$ workers, with each worker $i$ owns its local data from distribution $\cD_i$. Let $\xb_t$ denotes the current model at $t$-th iteration. The server will first broadcast $\xb_t$ to all the workers in each iteration. Each worker $i$ computes the stochastic gradient $\gb_t^{(i)} = \nabla f_i(\xb_t; \xi_t^{(i)})$ using the local samples, and then uploads $\gb_t^{(i)}$ to the server. The server then aggregates the stochastic gradients and obtains ${\gb}_t=\frac{1}{n} \sum_{i=1}^n \gb_t^{(i)}$. The model uses ${\gb}_t$ to update parameter via vanilla single node AMSGrad:
\begin{align*}
 &\mb_t=\beta_1\mb_{t-1}+(1-\beta_1){\gb}_t,   \vb_t=\beta_2\vb_{t-1}+(1-\beta_2){\gb}_t^2,\\ &\hat{\vb}_t=\max(\hat{\vb}_{t-1},\vb_t),   \xb_{t+1}=\xb_t-\alpha_t\Vbh_t^{-1/2}\mb_t,
\end{align*}
where $\hat{\Vb}_t=\diag(\hat{\vb}_t + \nu)$, $\alpha_t >0$ is the step size, $\nu>0$ is a small number for numerical stability, and $\beta_1, \beta_2 \in [0,1]$ are adaptive hyperparameters. 
This vanilla distributed AMSGrad achieves the same convergence rate as its centralized version \citep{j.2018on}. However, the worker-to-server and server-to-worker communications in each iteration can be extremely expansive especially for cellular networks, and the communication latency also makes the overall system less efficient. Therefore, attempts have been made to find novel approaches that can further reduce the communication cost while maintaining a similar convergence rate as its uncompressed counterpart. In the following, let's first revisit some traditional communication compression strategies for distributed SGD and drawbacks of applying them on distributed adaptive gradient methods.

\section{Existing Solutions and Drawbacks}
\label{sec:ef}
\textbf{Naive compression for SGD:} The simplest strategy to reduce communication cost is to directly compress the local gradient $\gb_t^{(i)}$ with a compressor $\cC(\cdot)$ before sending to the server. The server aggregates the compressed gradient $\gbh_t^{(i)} = \cC(\gb_t^{(i)})$ and updates model by
\begin{align*}
    \xb_{t+1} =  \xb_t - \frac{\alpha}{n} \sum_{i=1}^{n} \gbh_t^{(i)},
\end{align*}
where $\alpha$ denotes the learning rate. The common choice of $\cC(\cdot)$ can be top-$k$ \citep{basu2019qsparse} or sign operation \footnote{See the Supplemental \ref{appendix} for more details about the compressors.} (leads to signSGD \citep{bernstein2018signsgd}). Although this naive compression method is intuitive, it can diverge in practice, even in simple quadratic problems \citep{beznosikov2020biased} or constraint linear problems \citep{karimireddy2019error}. Intuitively speaking, one of the major drawbacks of naive compression is that the compression error is accumulating during the training process. Each step will introduce new errors that cannot be canceled later, and the accumulation of compression error leads the divergence. 

\textbf{Error feedback for SGD:} 
Error feedback (EF), or error compensation \citep{karimireddy2019error, efsparse} is widely used for correcting the bias generated by compression errors. Distributed SGD with error feedback effectively reduces the communication bits by introducing a compensating error sequence to cancel the compression error in previous iterations and obtains the same convergence rate as vanilla SGD. Specifically, error feedback introduces a new sequence $\bdelta_{t}^{(i)}$, which denotes the accumulated compression error at iteration $t$. At $t$-th iteration, the $i$-th worker computes the compressed gradient $\gbh_t^{(i)}$ based on the previous iteration's error $\bdelta_{t-1}^{(i)}$ and the current local gradient $\gb_t^{(i)}$, i.e., $\gbh_t^{(i)} = \cC(\gb_t^{(i)}+\bdelta_{t-1}^{(i)})$. And the new compensating error is updated by $\bdelta_{t}^{(i)}=\gb_t^{(i)}+\bdelta_{t-1}^{(i)}-\gbh_t^{(i)}$. Upon compression, the server collects the compressed gradients $\gbh_t^{(i)}$ from all workers and update model parameters via vanilla SGD.  \citet{karimireddy2019error} showed that the compression error of error feedback is bounded by constant if the following assumption hold for the biased compressor.  
\begin{assumption}[Biased compressor] \label{as:compressor}
Consider a biased operator $\cC: \RR^d \rightarrow \RR^d$, there exists a constant $0<\pi\leq 1$ such that
\begin{align}\label{operator}
\EE_\cC[\|\cC(\xb)-\xb\|^2_2]\leq \pi\|\xb\|^2_2, \quad \forall \xb \in \RR ^d.
\end{align}
\end{assumption}
Note that $\pi = 0$ leads to $\cC(\xb)=\xb$. Assumption \ref{as:compressor} is a common assumption for biased compressor \citep{richtarik2021ef21,karimireddy2019error,efsparse}. Canonical examples of the compressor satisfying Assumption \ref{as:compressor} include
top-$k$ or random-$k$ as well as scaled sign compressor\footnote{See the supplemental materials for more details about the above three compressors as well as other compressors satisfying Assumption \ref{as:compressor}.}. With Assumption \ref{as:compressor}, the distributed SGD with error feedback \citep{karimireddy2019error} in nonconvex setting achieves the same convergence rate as vanilla SGD.

\textbf{Drawbacks of error feedback on adaptive gradient methods:}
While error feedback guarantees bounded accumulated gradient compression error, in adaptive gradient methods \citep{kingma2014adam, j.2018on}, the variance term $\vb_t$, which is the moving average of the quadratic of gradient, can be unstable due to accumulating gradient compression error \citep{tang20211}. Specifically, let's  denote $\gb_t$ as the averaged fresh gradient without compression (average of all $\gb_t^{(i)}$), denote $\gbh_t$ as the averaged compressed gradient (average of all $\gbh_t^{(i)}$). The updating rule for $\vb_{t+1}$ follows:
\begin{align}\label{eq:variance}
\vb_{t+1} & = \beta_2\vb_t+(1-\beta_2)\gbh_t^2 \notag\\
& = \beta_2\vb_t+ (1-\beta_2)[\gbh_t-\gb_t+\gb_t]^2 \notag\\
& = \beta_2\vb_t+ (1-\beta_2)\gb_t^2+ \underbrace{(1-\beta_2)(\gbh_t-\gb_t)^2}_{\text{accumulating error term}} \notag\\
& \quad +2(1-\beta_2)\langle \gb_t,\gbh_t-\gb_t\rangle.
\end{align}
\citet{tang20211} claims that the inner product term in \eqref{eq:variance} can possibly be canceled out during training, while the quadratic term will certainly accumulate. Since the traditional error feedback can only guarantee constant compression error, the accumulation of the quadratic error will make the variance diverge. Therefore, the communication-efficient adaptive gradient method actually requires a stronger gradient compression error bound to obtain a stable variance term.
\section{Proposed Method}\label{sec:cdadam}
In this section, we formally develop our proposed \textbf{C}ommunication-compressed \textbf{D}istributed \textbf{Ada}ptive gradient \textbf{M}ethod (CD-Adam). Our proposed method consists of two key components: Markov compression sequence and worker-side model update design, which jointly provide a better compression error bound. We first investigate the definition and property of Markov compression sequence \citep{richtarik2021ef21}.

\textbf{Markov compression sequence:} Markov compression sequence is introduced{\footnote{Note that \citet{kunstner2017fully} also proposed QDGD-F algorithm which is similar to the Markov compression sequence.}} in \citet{richtarik2021ef21}.
Given a biased compressor $\cC(\cdot)$ and a sequence of vectors $\{\wb_t\}$, Markov compression sequence $\{\hat\wb_t\}$ can be recursively defined as: $\hat\wb_0 = \cC(\wb_0),\quad \hat\wb_{t+1}=\hat\wb_t+\cC(\wb_{t+1}-\hat\wb_t)$. The main advantage of Markov compression sequence lies in that the compression error can be largely reduced if the underlying sequence $\{\wb_t\}$ is convergent:
\begin{align}\label{eq:markov}
\big\|\hat\wb_{t+1}-\wb_{t+1}\big\|^2  &= \big\|\hat\wb_{t}+\cC(\wb_{t+1}-\hat\wb_t)-\wb_{t+1}\big\|^2 \notag\\
& \leq \pi \big\|\wb_{t+1}-\hat\wb_t\|^2 \notag \\
& \leq \pi(1+\gamma)\big\|\hat\wb_t-\wb_{t}\|^2 \\ & \qquad +\pi(1+\gamma^{-1})\big\|\wb_{t+1}-\wb_t\big\|^2,\notag 
\end{align}
where the last inequality holds due to Young's inequality. As can be seen from \eqref{eq:markov}, the compression error $\big\|\hat\wb_{t+1}-\wb_{t+1}\big\|$ is directly controlled by   $\|\wb_{t+1}-\wb_t\|$. If the sequence of $\{\wb_t\}$ is convergent, Markov compression error at $t$-step will be much smaller than the constant error bound obtained by plain error feedback. In particular, \citet{richtarik2021ef21} showed that if the sequence of $\{\wb_t\}$ achieves a linear rate of convergence, the compression error of the Markov compression can converge to $0$ as $t\rightarrow \infty$.

\textbf{Worker-side model update:}
\citet{richtarik2021ef21} only considered a one-way compression, i.e., workers compress the gradient difference $\gb_t^{(i)}-\gbh_{t-1}^{(i)}$ and update $\cC(\gb_t^{(i)}-\gbh_{t-1}^{(i)})$ to the server, while the server only updates model $\xb_{t+1}$ and broadcasts it to workers. In this case, the server broadcast still results in high communication overhead. Therefore, to further reduce the communication cost, we change the model update style from server-side update to worker-side update, i.e., the model $\xb_t$ is stored and updated on each worker instead of on the server. 
The reason is that, in server-side update style, the server will need to compress the model update parameter $\Vbh_t^{-1/2}\mb_t$ for reducing the communication cost, however, it is difficult to obtain the convergence of the underlying sequence $\Vbh_t^{-1/2}\mb_t$, which is essential for the proof of the Markov compression sequence. Thus, it is difficult to get a better compression error bound for the model update. In our design, the server simply aggregates the compressed gradients and compressed it again before sending back to the workers. The compression errors via Markov compressor are directly related to fresh stochastic gradients or aggregated gradients, whose convergences are much easier to obtain. 
This two-way compression together with worker-side model update take the full advantage of the Markov compressor for establishing the theoretical guarantees of our proposed method.

\textbf{Algorithm overview:}
Combining Markov compression sequence and worker-side model update design, we propose a new communication-compressed distributive gradient method, which is summarized in Algorithm \ref{alg:compadam1}. Specifically, at $t$-th iteration, each worker first computes the stochastic gradient $\gb_t^{(i)}$  with a mini-batch size $\tau$, then builds the Markov compression sequence $\gbh_{t}^{(i)}$ with $\cb_t^{(i)} = \cC(\gb_t^{(i)}-\gbh_{t-1}^{(i)})$ and $\gbh_{t-1}^{(i)}$. Note that the Markov compression sequence $\gbh_{t}^{(i)}$ is no longer bits-compressed, therefore, only $\cb_t^{(i)}$ is being uploaded to the server. On the other hand, the server updates the aggregated compressed gradient by $\gbh_t = \gbh_{t-1} + \frac{1}{n}\sum_{i=1}^n \cb_t^{(i)}$ as the last iterate $\gbh_{t-1}$ has been stored locally in the server. The server then builds another Markov compression sequence $\gbt_{t}$ based on $\gbt_{t-1}$ and $\cb_t= \cC(\gbh_t - \gbt_{t-1})$ and sends $\cb_t$ to all workers. Similarly, each worker can recover the Markov compression sequence $\gbt_t$ upon receiving $\cb_t$, since the last iterate $\gbt_{t-1}$ has been locally stored in each worker. Then each worker uses the double-compressed gradient $\gbt_t$ for updating model $\xb_{t+1}$ as in standard AMSGrad.

Algorithm \ref{alg:compadam1} is indeed communication-efficient for distributed learning. It improves the communication efficiency by reducing the communication bits during transmission. Only the compressed vectors instead of the full precision vectors are transferred for both server-to-worker and worker-to-server communications. 
Specifically, if we adopt the scaled sign compressor, CD-Adam only takes 1-bit\footnote{Rigorously, the scaling number will also take 32-bits for communication, so the overall cost for compressing a $d$-dimensional vector should be $32 + d$ bits.} instead of 32-bits per dimension in each communication round. This greatly reduces the communication costs for distributed implementation of adaptive methods. Note that compared with 1-bit Adam \citep{tang20211}, which performs a few epochs of uncompressed Adam at the beginning of training, our proposed method is much more communication-efficient (see Figure \ref{fig:bit_example}) as we start the compression from the very first iteration.

\begin{algorithm}[h!]
  \caption{ 
  \textbf{C}ommunication-Compressed \ \qquad\qquad \\ \textbf{D}istributed \textbf{Ada}ptive Gradient \textbf{M}ethod (CD-Adam)}
  \label{alg:compadam1}
  \begin{flushleft}
        \textbf{Input:} initial point $\xb_1$, non-increasing step size $\{\alpha_t\}_{t=1}^T$,  $\beta_1,  \beta_2, \nu$, batch size $\tau$, compressor $\cC(\cdot)$.
        \end{flushleft}
  \begin{algorithmic}[1]
        \STATE $\gb_0^{(i)} \gets 0, \mb_0\gets 0$, $\vb_0\gets 0, \gbh_0^{(i)}=\cC(\gb_0^{(i)}), \gbt_0=\cC(\frac{1}{n} \sum_{i=1}^n \gbh_0^{(i)})$
      \FOR{$t=1$ to $T$}
        \STATE \textbf{(On $i$-th worker)}
        \STATE Compute local stochastic gradient \texttt{$\gb_t^{(i)}$} with batch size $\tau$
        \STATE Compress \texttt{ $\cb_t^{(i)}=\cC(\gb_{t}^{(i)}-\gbh_{t-1}^{(i)}) $}
        \STATE Send $\cb_t^{(i)}$ to the server and update local state \texttt{ $\gbh_t^{(i)}= \gbh_{t-1}^{(i)}+\cb_t^{(i)}$}
        \STATE \textbf{(On Server)}
        \STATE Update \texttt{ $\gbh_t= \gbh_{t-1}+ \frac{1}{n} \sum_{i=1}^{n} \cb_t^{(i)}$}
        \STATE Compress $\cb_t=\cC(\gbh_t-\gbt_{t-1})$
        \STATE Send $\cb_t$ to all the workers and update local state $\gbt_t: \gbt_t= \gbt_{t-1}+\cb_t$
        \STATE \textbf{(On $i$-th worker)}
        \STATE Update \texttt{$\gbt_t= \gbt_{t-1}+\cb_t$ }
        \STATE \texttt{$\mb_t=\beta_1\mb_{t-1}+(1-\beta_1)\gbt_t\;$}
        \STATE \texttt{$\vb_t=\beta_2\vb_{t-1}+(1-\beta_2)\gbt_t^2\;$}
        \STATE \texttt{$\hat{\vb}_t=\max(\hat{\vb}_{t-1},\vb_t)$}
  \STATE Update \texttt{$\xb_{t+1}=\xb_t-\alpha_t\Vbh_t^{-1/2}\mb_t $} with \texttt{$\hat{\Vb}_t=\diag(\hat{\vb}_t + \nu)$}
      \ENDFOR

  \end{algorithmic}
\end{algorithm}

\begin{figure}[t]
 \centering
  \includegraphics[width=0.5\textwidth]{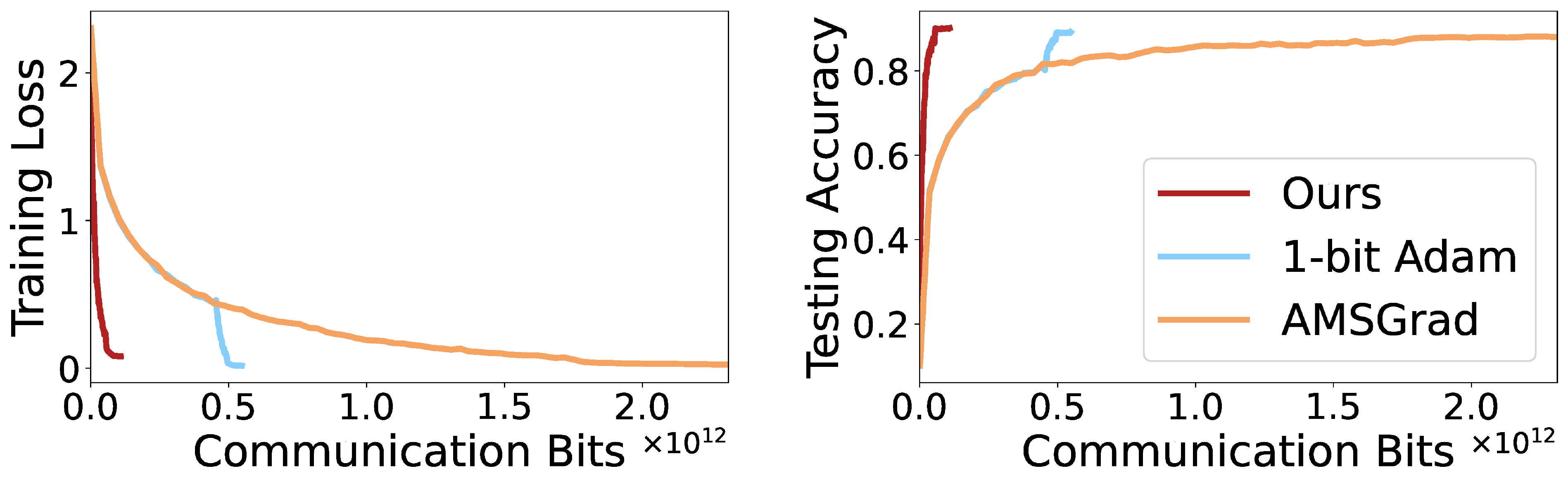}
   \setlength{\abovecaptionskip}{-5pt}
  \setlength{\belowcaptionskip}{-5pt}
 \caption{Comparison of training loss and testing accuracy against communication bits of training ResNet-18 on CIFAR-10. Our proposed method achieves around $32\times$ communication cost improvement over the original AMSGrad and around $5\times$ over 1-bit Adam.}
 \label{fig:bit_example}
\end{figure}

\section{Convergence Analysis}
In this section, we present the convergence results of our proposed Communication-Compressed Distributed Adaptive Gradient Method (CD-Adam). 
Before we jump into the main theorem, let us first introduce the additional assumptions needed.

\begin{assumption}[Smoothness]\label{as:smooth}
Each component loss function on the $i$-th worker $f_i(\xb)=\EE[f(\xb;\xi^{(i)})]$ is $L$-smooth, i.e., $ \forall \bm{x}, \bm{y} \in \RR^d$,
    \begin{align*}
    \big|f_i(\xb)-f_i(\yb)-\langle\nabla f_i(\yb), \xb-\yb\rangle\big| \leq \frac{L}{2}\|\xb-\yb\|_2^2.
    \end{align*}
\end{assumption}

Assumption~\ref{as:smooth} is a standard assumption for nonconvex stochastic optimization  \citep{koloskova2019decentralized,basu2019qsparse,richtarik2021ef21}.
Note that the $L$-smoothness assumption on each worker's loss $f_i(\cdot)$ implies the $L$-smoothness condition on $f(\cdot)$. Assumption~\ref{as:smooth} also implies the $L$-gradient Lipschitz condition, i.e., $\|\nabla f_i(\bm{x}) - \nabla f_i(\bm{y}) \|_2 \leq L \|\bm{x} - \bm{y} \|_2$.

\begin{assumption}[Bounded gradient]\label{as:bound-g}
Each component loss function on the $i$-th worker $f_i(\xb)$ 
has $G$-bounded stochastic gradient on $\ell_2$ and has $G_\infty$-bounded stochastic gradient on $\ell_\infty$, i.e., for all $\xi$,
\begin{align*}
\|\nabla f_i(\xb ; \xi)\|_2 \leq G, \qquad \|\nabla f_i(\xb ; \xi)\|_\infty \leq G_\infty.
\end{align*}
\end{assumption}
Assumption \ref{as:bound-g} is a common assumption in adaptive gradient literature \citet{kingma2014adam,chen2018on,1808.05671,1808.02941}. It also implies the bounded gradient for $f(\xb)$, i.e., $\|\nabla f(\xb ; \xi)\|_2 \leq G$, $\|\nabla f(\xb ; \xi)\|_\infty \leq G_\infty$. 

\begin{assumption}[Bounded local variance]\label{as:bound-variance}
Each stochastic gradient on the $i$-th worker
$\gb^i = \nabla f_i(\xb, \xi^i)$ has a bounded variance, i.e., $\forall \xb$,
\begin{align*}
\EE_{\xi^i \sim \cD_i} \big\|\nabla f_i (\xb,\xi^i)-\nabla f_i(\xb)\big\|^2\leq \sigma^2.
\end{align*}

\end{assumption}
Assumption~\ref{as:bound-variance} implies that the variance of stochastic gradients is bounded on each worker.  
The assumption of bounded variance for stochastic gradient has also been used in \citep{ basu2019qsparse,karimireddy2019error,horvath2019stochastic}.

\begin{figure*}[ht!]
 \centering
 \subfigure{\includegraphics[width=1.0\textwidth]{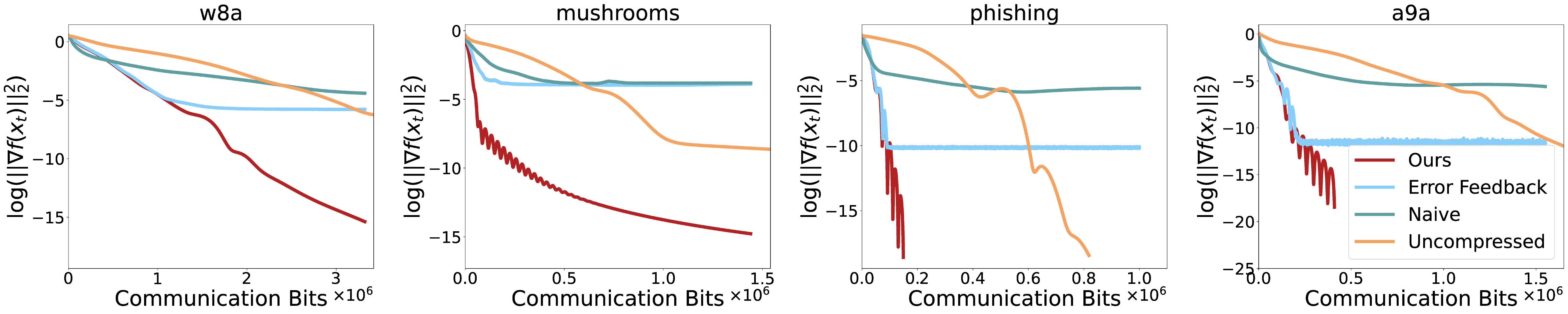}}
  \subfigure{\includegraphics[width=1.0\textwidth]{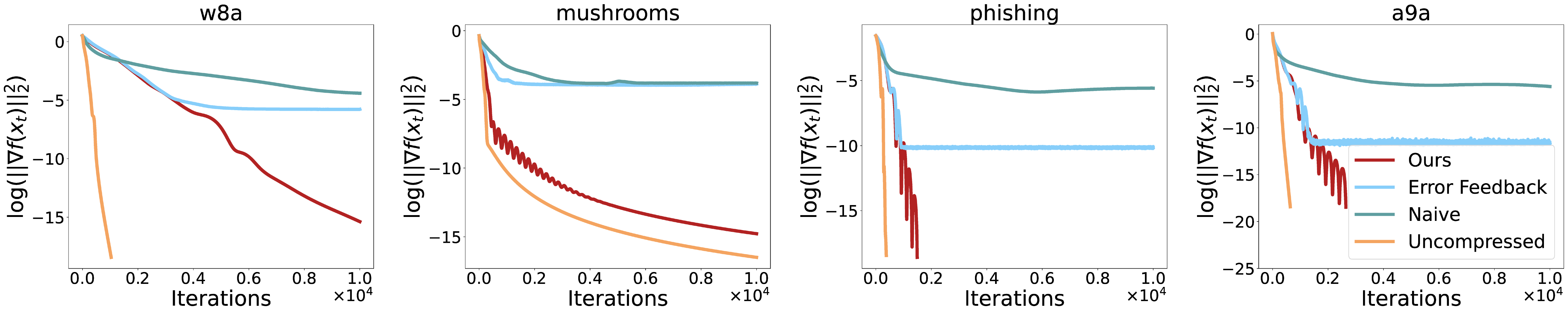}}
  \setlength{\abovecaptionskip}{-5pt}
  \setlength{\belowcaptionskip}{-5pt}
 \caption{Gradient norm comparison of different compressing strategies on nonconvex logistic regression trained by AMSGrad with scaled sign compressor. The upper row shows the norm convergence with respect to the communication cost, and the lower row is with respect to the training iteration.}
 \label{fig:lr_sign_bit}
 \smallskip
\vspace{2pt}
\end{figure*}

Now we are ready to present our main theorem.
\begin{theorem}\label{thm}
Under Assumptions~\ref{as:compressor}, \ref{as:smooth}, \ref{as:bound-g} and \ref{as:bound-variance}, suppose $\alpha_t = \alpha$, denote $N$ as the number of data samples on each worker, the iterates of Algorithm \ref{alg:compadam1} achieves an $\epsilon$-stationary point of \eqref{lossfunc}, i.e., $\min_{t \in [T]} \EE \big[\| \nabla f(\xb_{t}) \|_2^2 \big]\leq \epsilon$, if
\begin{align}\label{eq:thm}
    \alpha &\leq\frac{n\epsilon}{6nM_3+
6M_4\sigma^2}, \tau \geq \bigg\lceil \frac{N(3M_5 \sigma)^2}{(N-1)\epsilon^2+ (3M_5 \sigma)^2} \bigg\rceil, \notag\\
    T &\geq \bigg\lceil\frac{36M_1 M_3}{\epsilon^2}+ \frac{36M_1M_4\sigma^2}{n\epsilon^2}+\frac{3M_2}{\epsilon} \bigg\rceil,
\end{align}
where
\begin{align*}
M_1 & = C\cdot\Delta f,\quad
M_2 = \frac{C G \Tilde{G}}{(1-\beta_1)\sqrt{\nu}}, \\
M_3 & = \frac{32CC_1 \Tilde{G}^2}{\nu} + \frac{2\sqrt{\pi}C L G \Tilde{G} \sqrt{d}}{\nu(1-\sqrt{\pi})^2} \notag\\
    M_4 & = \frac{4CC_1}{\nu},
    M_5 = \frac{4\sqrt{\pi} C G}{\sqrt{\nu}(1-\sqrt{\pi})^2},
\end{align*}
and 
\begin{align*}
&\Delta f = f(\xb_1)- \inf_x f(\xb), 
\Tilde{G} = \frac{(1+\sqrt{\pi})^2}{(1-\sqrt{\pi})^2} G,\\
&C = 2 (\Tilde{G}_\infty^2+\nu)^{1/2}, C_1 = 2L+3L\bigg(\frac{\beta_1}{1-\beta_1}\bigg)^2,\\ 
&\Tilde{G}_\infty = \frac{(1+\sqrt{\pi})^2}{(1-\sqrt{\pi})^2} G_\infty.
\end{align*}
\end{theorem}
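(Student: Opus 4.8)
The plan is to run a nonconvex AMSGrad-style descent argument in which the doubly-compressed surrogate $\gbt_t$ plays the role of the averaged gradient, and to pay for that substitution using the geometric contraction in \eqref{eq:markov}. Write $\gb_t := \tfrac1n\sum_{i=1}^n\nabla f_i(\xb_t;\xi_t^{(i)})$ for the uncompressed averaged minibatch gradient and let $\mathcal{F}_{t-1}$ be the filtration before the $t$-th sampling, so $\EE[\gb_t\mid\mathcal{F}_{t-1}]=\nabla f(\xb_t)$. First I would establish effective boundedness: apply \eqref{eq:markov} with the Young parameter $\gamma=(1-\sqrt\pi)/\sqrt\pi$, which makes the contraction factor $\pi(1+\gamma)=\sqrt\pi$ and the driver coefficient $\pi(1+\gamma^{-1})=\pi/(1-\sqrt\pi)$, and unroll the recursion layer by layer — for each worker sequence $\gbh_t^{(i)}$ driven by $\|\gb_s^{(i)}-\gb_{s-1}^{(i)}\|\le2G$ (using $\gb_0^{(i)}=0$), then for the aggregate $\gbh_t$, then for the server sequence $\gbt_t$ driven by $\|\gbh_s-\gbh_{s-1}\|$ — and sum the geometric series to get $\|\gbt_t\|_2\le\tilde{G}$ and $\|\gbt_t\|_\infty\le\tilde{G}_\infty$. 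Hence $\|\mb_t\|_2\le\tilde{G}$ and $\nu\le\hat v_{t,j}+\nu\le\tilde{G}_\infty^2+\nu$, so each coordinate of $\alpha_t\Vbh_t^{-1/2}$ lies in $[\alpha_t(\tilde{G}_\infty^2+\nu)^{-1/2},\alpha_t\nu^{-1/2}]$; the constant $C=2(\tilde{G}_\infty^2+\nu)^{1/2}$ records this normalization, and $\tilde{G},\tilde{G}_\infty$ are exactly the two-layer amplifications of $G,G_\infty$.

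Next I would carry out the descent step. Introduce the auxiliary iterate $\zb_t=\tfrac1{1-\beta_1}\xb_t-\tfrac{\beta_1}{1-\beta_1}\xb_{t-1}$ (with $\zb_1=\xb_1$), for which $\zb_{t+1}-\zb_t=-\alpha_t\Vbh_t^{-1/2}\gbt_t+\tfrac{\beta_1}{1-\beta_1}(\alpha_{t-1}\Vbh_{t-1}^{-1/2}-\alpha_t\Vbh_t^{-1/2})\mb_{t-1}$. Apply $L$-smoothness to $f(\zb_{t+1})$, split $\gbt_t=\nabla f(\xb_t)+(\gb_t-\nabla f(\xb_t))+(\gbt_t-\gb_t)$, and — to make the stochastic term conditionally mean-zero — swap $\Vbh_t^{-1/2}$ for the $\mathcal{F}_{t-1}$-measurable $\Vbh_{t-1}^{-1/2}$ plus a correction whose norm telescopes in $t$ (each coordinate of $\Vbh_t^{-1/2}$ is non-increasing and bounded). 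Control the $\zb$-versus-$\xb$ gap via $\|\nabla f(\zb_t)-\nabla f(\xb_t)\|\le L\|\zb_t-\xb_t\|=\tfrac{L\beta_1}{1-\beta_1}\alpha_{t-1}\|\Vbh_{t-1}^{-1/2}\mb_{t-1}\|$ and $\|\Vbh_{t-1}^{-1/2}\mb_{t-1}\|_2\le\sqrt d\,\tilde{G}_\infty/\sqrt\nu$; squaring these gaps is what produces the $(\beta_1/(1-\beta_1))^2$ pieces of $C_1$. Taking expectations yields
\begin{align*}
\EE[f(\zb_{t+1})]&\le\EE[f(\zb_t)]-\tfrac{\alpha_t}{(\tilde{G}_\infty^2+\nu)^{1/2}}\EE\|\nabla f(\xb_t)\|_2^2+(\text{telescoping }\Vbh\text{ terms})\\
&\quad+O(\alpha_t^2)+c_1\alpha_t\,\EE\|\gbt_t-\gb_t\|_2^2+c_2\alpha_t\,\EE\|\gbt_t-\gb_t\|_2,
\end{align*}
with $c_1,c_2$ explicit in $L,G,\tilde{G},\nu,\beta_1$.

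Then I would bound the compression error and telescope. Running the two-layer Markov recursion once more gives $\EE\|\gbt_t-\gb_t\|_2^2\le\tfrac{\pi}{(1-\sqrt\pi)^2}\sum_{s\le t}(\sqrt\pi)^{t-s}\,\EE\|\gb_s-\gb_{s-1}\|_2^2$ up to the two-layer constants, and then $\EE\|\gb_s-\gb_{s-1}\|_2^2\lesssim\tfrac{\sigma^2}{\tau}\cdot\tfrac{N-\tau}{N-1}+L^2\EE\|\xb_s-\xb_{s-1}\|_2^2$ with $\EE\|\xb_s-\xb_{s-1}\|_2^2\le\alpha_{s-1}^2 d\,\tilde{G}_\infty^2/\nu$; the first-moment bound $\EE\|\gbt_t-\gb_t\|_2$ then follows by Jensen. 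This is where the geometric factor $\sqrt\pi/(1-\sqrt\pi)^2$, the $\sqrt d$ in $M_3$, and the entire $M_5$ term originate, and where $N$ enters through the without-replacement minibatch variance $\sigma^2(N-\tau)/(\tau(N-1))$. Summing the displayed inequality over $t=1,\dots,T$ with $\alpha_t\equiv\alpha$: the $f(\zb_{t+1})-f(\zb_t)$ terms telescope to at most $\Delta f$, the $\Vbh$-difference terms telescope to an $O(1)$ constant feeding $M_2$, and the double sum against the Markov kernel collapses as a geometric series. Dividing by $T\alpha(\tilde{G}_\infty^2+\nu)^{-1/2}$ bounds $\min_{t\in[T]}\EE\|\nabla f(\xb_t)\|_2^2$ by a sum of a few terms of shape $\tfrac{\Delta f}{\alpha T}$, $\tfrac1T$, $\alpha(M_3+M_4\sigma^2/n)$ and $M_5\sigma\sqrt{(N-\tau)/(\tau(N-1))}$ times universal constants; substituting the stated $\alpha$, $\tau$, $T$ from \eqref{eq:thm} drives each term below its allotted fraction of $\epsilon$ (the $6$'s and $36$ doing this bookkeeping), which gives the claim.

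The main obstacle is the coupling hidden in the compression-error step: the driver $\|\xb_s-\xb_{s-1}\|$ of the Markov error itself depends on $\mb_{s-1}$, hence on earlier $\gbt$'s, so the descent inequality and the compression-error estimate are mutually dependent and must be closed simultaneously — after summing over $t$ one is left with a $\sum_t\alpha^2 L^2\,\EE\|\xb_s-\xb_{s-1}\|_2^2=O(\alpha^2)$ contribution that has to be reabsorbed once $\alpha$ is taken small, which is precisely why $\alpha$ is bounded by a ratio in \eqref{eq:thm}. Stacking two Markov compressions so that the inner layer's error becomes part of the outer layer's driver $\|\mathbf{w}_{t+1}-\mathbf{w}_t\|$ without degrading the $\sqrt\pi$-contraction, and performing the $\Vbh_t\!\to\!\Vbh_{t-1}$ measurability swap while keeping every correction term summable, are the other delicate points; the remaining estimates are routine.
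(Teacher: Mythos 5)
Your proposal is correct and follows essentially the same route as the paper's proof: the same auxiliary sequence $\zb_t$ (your form $\tfrac{1}{1-\beta_1}\xb_t-\tfrac{\beta_1}{1-\beta_1}\xb_{t-1}$ is algebraically identical to the paper's $\xb_t+\tfrac{\beta_1}{1-\beta_1}(\xb_t-\xb_{t-1})$), the same smoothness-based descent with the decomposition of $\gbt_t$ into the true gradient, the sampling noise, and the two layers of Markov compression error, the same geometric $\sqrt{\pi}$-contraction yielding the $\sqrt{\pi}/(1-\sqrt{\pi})^2$ and $\sqrt{d}\,\nu^{-1/2}$ factors, the same without-replacement minibatch variance $\sigma^2(N-\tau)/(\tau(N-1))$, and the same telescoping and bookkeeping of the constants $M_1,\dots,M_5$. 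The only cosmetic differences are that the paper splits the compression error explicitly into worker-side and server-side pieces ($\gbh_t-\gb_t$ and $\gbt_t-\gbh_t$) and runs the contraction on first moments via the triangle inequality rather than on squared norms via Young's inequality, but these lead to the same bounds.
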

\smallskip

\begin{remark}
Theorem \ref{thm} suggests that Algorithm \ref{alg:compadam1} reaches the $\epsilon$-stationary point of \eqref{lossfunc} with  $O(1/\epsilon^2)$ iterations, which matches the iteration complexity of uncompressed vanilla AMSGrad. This suggests that our proposed method is indeed provably efficient, and the fully compressed distributed adaptive gradient method can reach $\epsilon$-stationary point with the same iteration complexity as the uncompressed counterpart without applying any variance-freezing tricks. 
\end{remark}

\begin{remark}
Theorem \ref{thm} also suggests that a larger number of workers $n$ can lead to a better iteration complexity as shown in \eqref{eq:thm}.
In addition, the condition on the mini-batch size $\tau$ is needed for controlling the stochastic variance in a mini-batch. Note that $\tau$ is always smaller than the number of data samples on each worker $N$ as the full batch gradient does not incur stochastic variance. 
\end{remark}

\vspace{-5pt}
\section{Experiments}
In this section, we present empirical results of our proposed communication-compressed distributed adaptive gradient method and compare it with other communication-compressed distributed baselines. Specifically, we first present the experimental results on a nonconvex logistic regression problem as an illustrative case study. Then we present the deep learning experiments of image classification on standard benchmarks. 

\begin{figure*}[ht]
 \centering
  \includegraphics[width=1.0\textwidth]{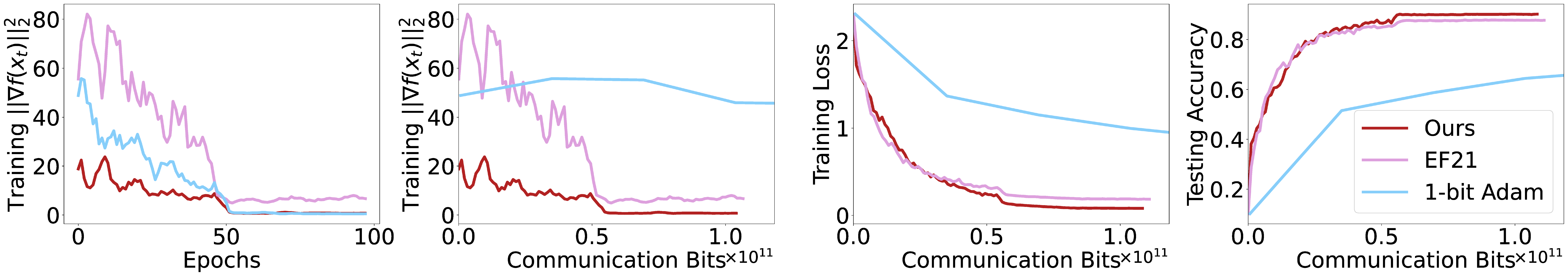}
    \setlength{\abovecaptionskip}{-5pt}
  \setlength{\belowcaptionskip}{-5pt}
 \caption{Comparison on gradient norm, training loss and testing accuracy among our proposed method and the baselines when training \texttt{ResNet-18} model on \texttt{CIFAR-10}.}
 \label{fig:resnet_showcase}
\end{figure*}

\subsection{Illustrative case study: nonconvex logistic regression}\label{subsec:ncvx-sign}
We consider minimizing the following logistic regression problem with a nonconvex regularizer similar to \citet{richtarik2021ef21},
\begin{align}\label{eq:logistic}
f(\xb) & =\frac{1}{n} \sum_{i=1}^n \log \Big(1+\exp \big(-y_i a_i^\top x\big)\Big)+\lambda \sum_{j=1}^d \frac{x_j^2}{1+x_j^2},
\end{align}
where $a_i \in \RR^d$, $y_i$ are training data with value $\pm 1$, and $\lambda$ is the regularizer parameter. We set $\lambda=0.1$ in the following nonconvex logistic regression experiments.

We use datasets \texttt{phishing}, \texttt{mushrooms}, \texttt{a9a} and \texttt{w8a} from LibSVM \citep{10.1145/1961189.1961199}. We equally separate each dataset to $n=20$ parts, and we use full batch gradients in this experiment. For each method, we choose the best step size starting from $0.001$ and increase it by adding $0.002$ till achieving $0.01$. 

Let's first study the effect of using different compression strategies on AMSGrad, though the other strategies do not enjoy any theoretical guarantees. Figure \ref{fig:lr_sign_bit} compares the gradient norm convergence of our proposed CD-Adam with AMSGrad using error feedback, naive compression and without compression at all. For all compressed methods, we use the scaled sign as a canonical example of biased compressor\footnote{Additional experiments based on the Top-$k$ biased compressor can be found in the supplemental materials.} $\cC(\cdot)$. We observe that our proposed CD-Adam achieves the best performances on all four datasets against the other three compression strategies. Specifically, if we compare the communication cost, CD-Adam achieves much better communication efficiency against uncompressed AMSGrad. Compared with the error feedback and the naive compression, CD-Adam still achieves a much smaller gradient norm under the same communication budgets. If we compare the performance of different compression strategies under the same iteration complexity, our proposed algorithm achieves roughly the same final gradient norm as the uncompressed AMSGrad across all four datasets. In fact, it achieves a much better convergence result comparing with the error feedback and the naive compression strategies, whose gradient norm stops decreasing at the early stages.  

\subsection{Deep learning experiments on image classification}\label{subsec:dl}
We compare CD-Adam with several state-of-the-art  communication-compressed distributed learning algorithms that are provably efficient, including: (1) EF21 \citep{richtarik2021ef21}\footnote{The comparison to SGD with Error Feedback has been presented in \citet{richtarik2021ef21}.} (2) 1-bit Adam \citep{tang20211}. We adopt CD-Adam with scaled sign compressor in this experiment. Note that the original EF21 paper \citep{richtarik2021ef21} adopts the top-K compressor as the base compressor $\cC(\cdot)$ with only worker-to-server compression applied. For a fair comparison, we further extend EF21 to allow both worker-to-server and server-to-worker compression and choose K$=0.016d$ such that the communication compression ratio keeps roughly the same as the scaled-sign based compressor. 

We train \texttt{CIFAR10} \citep{krizhevsky2009learning} using three popular models: \texttt{ResNet-18} \citep{he2016deep}, \texttt{VGG-16} \citep{simonyan2014very} and \texttt{WideResNet-16-4} \citep{zagoruyko2016wide}. The image classification dataset \texttt{CIFAR10} includes a training set of $50000$ images and a test set of $10000$ images, where each image is assigned one of the $10$ labels. The dataset is split into $n=8$ equal parts, which are distributed to $8$ workers. The mini-batch size for each worker is set to be 128. We set the learning rate as $1\times 10^{-1}$ for SGD, which is used in EF21, and $1\times 10^{-4}$ for 1-bit Adam and our proposed CD-Adam. We also set $\beta_1 = 0.9$ and $\beta_2 = 0.99$ for 1-bit Adam and our CD-Adam. For 1-bit Adam, its warm-up epochs are set as $13$ according to its original paper \citep{tang20211}. The normalized weight decay is set to $5\times 10^{-4}$ for all methods. We test all methods for $100$ epochs and decay the learning rate by $0.1$ at the 50th and 75th epoch. Due to the space limit, we leave the experimental results on \texttt{VGG-16} and \texttt{WideResNet-16-4} models in the supplemental materials.

We show that our proposed algorithm enjoys a fast convergence speed with high accuracy and less communication cost comparing with EF21 and 1-bit Adam. The first and second plots in Figure \ref{fig:resnet_showcase} demonstrate the training gradient norm in terms of epochs and communication bits respectively. We can observe that our proposed CD-Adam obtains a smaller gradient norm than EF21 and 1-bit Adam under the same epoch or communication budget. Note that since 1-bit Adam will need to run uncompressed Adam for a few epochs as a warm-up, its per communication bits performance is actually much worse (has been shown in Figure \ref{fig:bit_example}). The third plot in Figure \ref{fig:resnet_showcase} shows the training loss against communication bits on \texttt{CIFAR-10} dataset. At the early stage of the training process, CD-Adam and EF21 obtain a similar speed of reducing the training loss, while 1-bit Adam is much slower due to the warm-up process. At later stages, CD-Adam shows a clear advantage compared with EF21. The last plot in Figure \ref{fig:resnet_showcase} shows the test accuracy against communication bits. Similar phenomenon can be observed that  CD-Adam achieves the overall best test accuracy compared with EF21 and 1-bit Adam, under the same communication budget. These results suggest that our proposed CD-Adam is indeed much more communication-efficient while maintaining a high accuracy compared with other communication compression baselines.

\section{Conclusion}
In this paper, we propose a communication-compressed distributed adaptive gradient method which solves the bottleneck of applying communication compression strategies for adaptive gradient methods in distributed settings. We provide theoretical convergence analysis in the nonconvex stochastic optimization setting and show that our proposed algorithm converges to an $\epsilon$-stationary point with the same iteration complexity as the uncompressed vanilla AMSGrad. Furthermore, compared with prior work which adopts variance-freezed Adam, our proposed algorithm is fully adaptive during the entire training process. It does not require any warm-up procedure with uncompressed communication, leading to better empirical results in terms of both performance and communication cost.

\bibliographystyle{apalike}
\bibliography{reference.bib}

\begin{thebibliography}{}

\bibitem[Alacaoglu et~al., 2020]{alacaoglu2020new}
Alacaoglu, A., Malitsky, Y., Mertikopoulos, P., and Cevher, V. (2020).
\newblock A new regret analysis for adam-type algorithms.
\newblock In {\em International Conference on Machine Learning}, pages
  202--210. PMLR.

\bibitem[Alistarh et~al., 2017]{alistarh2017qsgd}
Alistarh, D., Grubic, D., Li, J., Tomioka, R., and Vojnovic, M. (2017).
\newblock Qsgd: Communication-efficient sgd via gradient quantization and
  encoding.
\newblock {\em Advances in Neural Information Processing Systems},
  30:1709--1720.

\bibitem[Basu et~al., 2019]{basu2019qsparse}
Basu, D., Data, D., Karakus, C., and Diggavi, S. (2019).
\newblock Qsparse-local-sgd: Distributed sgd with quantization, sparsification,
  and local computations.
\newblock {\em arXiv preprint arXiv:1906.02367}.

\bibitem[Bernstein et~al., 2018]{bernstein2018signsgd}
Bernstein, J., Wang, Y.-X., Azizzadenesheli, K., and Anandkumar, A. (2018).
\newblock signsgd: Compressed optimisation for non-convex problems.
\newblock In {\em International Conference on Machine Learning}, pages
  560--569. PMLR.

\bibitem[Beznosikov et~al., 2020]{beznosikov2020biased}
Beznosikov, A., Horv{\'a}th, S., Richt{\'a}rik, P., and Safaryan, M. (2020).
\newblock On biased compression for distributed learning.
\newblock {\em arXiv preprint arXiv:2002.12410}.

\bibitem[Brown et~al., 2020]{brown2020language}
Brown, T.~B., Mann, B., Ryder, N., Subbiah, M., Kaplan, J., Dhariwal, P.,
  Neelakantan, A., Shyam, P., Sastry, G., Askell, A., et~al. (2020).
\newblock Language models are few-shot learners.
\newblock {\em arXiv preprint arXiv:2005.14165}.

\bibitem[Chang and Lin, 2011]{10.1145/1961189.1961199}
Chang, C.-C. and Lin, C.-J. (2011).
\newblock Libsvm: A library for support vector machines.
\newblock {\em ACM Trans. Intell. Syst. Technol.}, 2(3).

\bibitem[Chen et~al., 2021a]{10.1145/3470890}
Chen, C., Shen, L., Huang, H., and Liu, W. (2021a).
\newblock Quantized adam with error feedback.
\newblock {\em ACM Trans. Intell. Syst. Technol.}, 12(5).

\bibitem[Chen et~al., 2020]{chen2020closing}
Chen, J., Zhou, D., Tang, Y., Yang, Z., Cao, Y., and Gu, Q. (2020).
\newblock Closing the generalization gap of adaptive gradient methods in
  training deep neural networks.
\newblock In {\em International Joint Conferences on Artificial Intelligence}.

\bibitem[Chen et~al., 2021b]{chen2021cada}
Chen, T., Guo, Z., Sun, Y., and Yin, W. (2021b).
\newblock Cada: Communication-adaptive distributed adam.
\newblock In {\em International Conference on Artificial Intelligence and
  Statistics}, pages 613--621. PMLR.

\bibitem[Chen et~al., 2018]{1808.02941}
Chen, X., Liu, S., Sun, R., and Hong, M. (2018).
\newblock On the convergence of a class of adam-type algorithms for non-convex
  optimization.

\bibitem[Chen et~al., 2019]{chen2018on}
Chen, X., Liu, S., Sun, R., and Hong, M. (2019).
\newblock On the convergence of a class of adam-type algorithms for non-convex
  optimization.
\newblock In {\em International Conference on Learning Representations}.

\bibitem[Devlin et~al., 2018]{devlin2018bert}
Devlin, J., Chang, M.-W., Lee, K., and Toutanova, K. (2018).
\newblock Bert: Pre-training of deep bidirectional transformers for language
  understanding.
\newblock {\em arXiv preprint arXiv:1810.04805}.

\bibitem[Duchi et~al., 2011]{Duchi2011AdaptiveSM}
Duchi, J.~C., Hazan, E., and Singer, Y. (2011).
\newblock Adaptive subgradient methods for online learning and stochastic
  optimization.
\newblock In {\em J. Mach. Learn. Res.}

\bibitem[Goodfellow et~al., 2014]{goodfellow2014generative}
Goodfellow, I., Pouget-Abadie, J., Mirza, M., Xu, B., Warde-Farley, D., Ozair,
  S., Courville, A., and Bengio, Y. (2014).
\newblock Generative adversarial nets.
\newblock {\em Advances in neural information processing systems}, 27.

\bibitem[Gorbunov et~al., 2020]{gorbunov2020linearly}
Gorbunov, E., Kovalev, D., Makarenko, D., and Richt{\'a}rik, P. (2020).
\newblock Linearly converging error compensated sgd.
\newblock {\em Advances in Neural Information Processing Systems},
  33:20889--20900.

\bibitem[He et~al., 2016]{he2016deep}
He, K., Zhang, X., Ren, S., and Sun, J. (2016).
\newblock Deep residual learning for image recognition.
\newblock In {\em Proceedings of the IEEE conference on computer vision and
  pattern recognition}, pages 770--778.

\bibitem[Hinton et~al., 2012]{hinton2012deep}
Hinton, G., Deng, L., Yu, D., Dahl, G., Mohamed, A.-r., Jaitly, N., Senior, A.,
  Vanhoucke, V., Nguyen, P., Kingsbury, B., et~al. (2012).
\newblock Deep neural networks for acoustic modeling in speech recognition.
\newblock {\em IEEE Signal processing magazine}, 29.

\bibitem[Horv{\'a}th et~al., 2019]{horvath2019stochastic}
Horv{\'a}th, S., Kovalev, D., Mishchenko, K., Stich, S., and Richt{\'a}rik, P.
  (2019).
\newblock Stochastic distributed learning with gradient quantization and
  variance reduction.
\newblock {\em arXiv preprint arXiv:1904.05115}.

\bibitem[Ivkin et~al., 2019]{ivkin2019communication}
Ivkin, N., Rothchild, D., Ullah, E., Braverman, V., Stoica, I., and Arora, R.
  (2019).
\newblock Communication-efficient distributed sgd with sketching.
\newblock {\em arXiv preprint arXiv:1903.04488}.

\bibitem[Karimireddy et~al., 2019]{karimireddy2019error}
Karimireddy, S.~P., Rebjock, Q., Stich, S., and Jaggi, M. (2019).
\newblock Error feedback fixes signsgd and other gradient compression schemes.
\newblock In {\em International Conference on Machine Learning}, pages
  3252--3261. PMLR.

\bibitem[Kingma and Ba, 2014]{kingma2014adam}
Kingma, D.~P. and Ba, J. (2014).
\newblock Adam: A method for stochastic optimization.
\newblock {\em arXiv preprint arXiv:1412.6980}.

\bibitem[Koloskova et~al., 2019]{koloskova2019decentralized}
Koloskova, A., Lin, T., Stich, S.~U., and Jaggi, M. (2019).
\newblock Decentralized deep learning with arbitrary communication compression.
\newblock {\em arXiv preprint arXiv:1907.09356}.

\bibitem[Krizhevsky et~al., 2009]{krizhevsky2009learning}
Krizhevsky, A., Hinton, G., et~al. (2009).
\newblock Learning multiple layers of features from tiny images.

\bibitem[K{\"u}nstner, 2017]{kunstner2017fully}
K{\"u}nstner, F. (2017).
\newblock Fully quantized distributed gradient descent.
\newblock Technical report.

\bibitem[Liu et~al., 2020]{liu2020double}
Liu, X., Li, Y., Tang, J., and Yan, M. (2020).
\newblock A double residual compression algorithm for efficient distributed
  learning.
\newblock In {\em International Conference on Artificial Intelligence and
  Statistics}, pages 133--143. PMLR.

\bibitem[Loshchilov and Hutter, 2019]{loshchilov2018decoupled}
Loshchilov, I. and Hutter, F. (2019).
\newblock Decoupled weight decay regularization.
\newblock In {\em International Conference on Learning Representations}.

\bibitem[Luo et~al., 2019]{luo2019adaptive}
Luo, L., Xiong, Y., Liu, Y., and Sun, X. (2019).
\newblock Adaptive gradient methods with dynamic bound of learning rate.
\newblock {\em arXiv preprint arXiv:1902.09843}.

\bibitem[Mishchenko et~al., 2019]{mishchenko2019distributed}
Mishchenko, K., Gorbunov, E., Tak{\'a}{\v{c}}, M., and Richt{\'a}rik, P.
  (2019).
\newblock Distributed learning with compressed gradient differences.
\newblock {\em arXiv preprint arXiv:1901.09269}.

\bibitem[Philippenko and Dieuleveut, 2020]{philippenko2020artemis}
Philippenko, C. and Dieuleveut, A. (2020).
\newblock Artemis: tight convergence guarantees for bidirectional compression
  in federated learning.
\newblock {\em arXiv preprint arXiv:2006.14591}.

\bibitem[Philippenko and Dieuleveut, 2021]{philippenko2021preserved}
Philippenko, C. and Dieuleveut, A. (2021).
\newblock Preserved central model for faster bidirectional compression in
  distributed settings.
\newblock {\em Advances in Neural Information Processing Systems}, 34.

\bibitem[Reddi et~al., 2018]{j.2018on}
Reddi, S.~J., Kale, S., and Kumar, S. (2018).
\newblock On the convergence of adam and beyond.
\newblock In {\em International Conference on Learning Representations}.

\bibitem[Richt{\'a}rik et~al., 2021]{richtarik2021ef21}
Richt{\'a}rik, P., Sokolov, I., and Fatkhullin, I. (2021).
\newblock Ef21: A new, simpler, theoretically better, and practically faster
  error feedback.
\newblock {\em Advances in Neural Information Processing Systems}, 34.

\bibitem[Robbins and Monro, 1951]{sgd1951}
Robbins, H. and Monro, S. (1951).
\newblock {A Stochastic Approximation Method}.
\newblock {\em The Annals of Mathematical Statistics}, 22(3):400 -- 407.

\bibitem[Seide et~al., 2014]{1bitsgd}
Seide, F., Fu, H., Droppo, J., Li, G., and Yu, D. (2014).
\newblock 1-bit stochastic gradient descent and application to data-parallel
  distributed training of speech dnns.
\newblock In {\em Interspeech 2014}.

\bibitem[Simonyan and Zisserman, 2014]{simonyan2014very}
Simonyan, K. and Zisserman, A. (2014).
\newblock Very deep convolutional networks for large-scale image recognition.
\newblock {\em arXiv preprint arXiv:1409.1556}.

\bibitem[Stich et~al., 2018]{efsparse}
Stich, S.~U., Cordonnier, J.-B., and Jaggi, M. (2018).
\newblock Sparsified sgd with memory.
\newblock In Bengio, S., Wallach, H., Larochelle, H., Grauman, K.,
  Cesa-Bianchi, N., and Garnett, R., editors, {\em Advances in Neural
  Information Processing Systems 31}, pages 4447--4458. Curran Associates, Inc.

\bibitem[Stich and Karimireddy, 2019]{stich2019error}
Stich, S.~U. and Karimireddy, S.~P. (2019).
\newblock The error-feedback framework: Better rates for sgd with delayed
  gradients and compressed communication.
\newblock {\em arXiv preprint arXiv:1909.05350}.

\bibitem[Tang et~al., 2021]{tang20211}
Tang, H., Gan, S., Awan, A.~A., Rajbhandari, S., Li, C., Lian, X., Liu, J.,
  Zhang, C., and He, Y. (2021).
\newblock 1-bit adam: Communication efficient large-scale training with
  adam’s convergence speed.
\newblock In {\em International Conference on Machine Learning}, pages
  10118--10129. PMLR.

\bibitem[Tang et~al., 2019]{doublesqueeze}
Tang, H., Yu, C., Lian, X., Zhang, T., and Liu, J. (2019).
\newblock $\texttt{DoubleSqueeze}$: Parallel stochastic gradient descent with
  double-pass error-compensated compression.
\newblock In Chaudhuri, K. and Salakhutdinov, R., editors, {\em Proceedings of
  the 36th International Conference on Machine Learning}, volume~97 of {\em
  Proceedings of Machine Learning Research}, pages 6155--6165, Long Beach,
  California, USA. PMLR.

\bibitem[Tieleman et~al., 2012]{tieleman2012lecture}
Tieleman, T., Hinton, G., et~al. (2012).
\newblock Lecture 6.5-rmsprop: Divide the gradient by a running average of its
  recent magnitude.
\newblock {\em COURSERA: Neural networks for machine learning}, 4(2):26--31.

\bibitem[Zagoruyko and Komodakis, 2016]{zagoruyko2016wide}
Zagoruyko, S. and Komodakis, N. (2016).
\newblock Wide residual networks.
\newblock {\em arXiv preprint arXiv:1605.07146}.

\bibitem[Zaheer et~al., 2018]{zaheer2018adaptive}
Zaheer, M., Reddi, S., Sachan, D., Kale, S., and Kumar, S. (2018).
\newblock Adaptive methods for nonconvex optimization.
\newblock In {\em Advances in neural information processing systems}, pages
  9793--9803.

\bibitem[Zeiler, 2012]{zeiler2012adadelta}
Zeiler, M.~D. (2012).
\newblock Adadelta: an adaptive learning rate method.
\newblock {\em arXiv preprint arXiv:1212.5701}.

\bibitem[Zheng et~al., 2019]{zheng2019communication}
Zheng, S., Huang, Z., and Kwok, J. (2019).
\newblock Communication-efficient distributed blockwise momentum sgd with
  error-feedback.
\newblock {\em Advances in Neural Information Processing Systems}, 32.

\bibitem[Zheng et~al., 2020]{zheng2020accelerated}
Zheng, S., Lin, H., Zha, S., and Li, M. (2020).
\newblock Accelerated large batch optimization of bert pretraining in 54
  minutes.
\newblock {\em arXiv preprint arXiv:2006.13484}.

\bibitem[Zheng et~al., 2017]{zheng2017asynchronous}
Zheng, S., Meng, Q., Wang, T., Chen, W., Yu, N., Ma, Z.-M., and Liu, T.-Y.
  (2017).
\newblock Asynchronous stochastic gradient descent with delay compensation.
\newblock In {\em International Conference on Machine Learning}, pages
  4120--4129. PMLR.

\bibitem[Zhong et~al., 2021]{zhong2021compressed}
Zhong, Y., Xie, C., Zheng, S., and Lin, H. (2021).
\newblock Compressed communication for distributed training: Adaptive methods
  and system.
\newblock {\em arXiv preprint arXiv:2105.07829}.

\bibitem[Zhou et~al., 2018]{1808.05671}
Zhou, D., Chen, J., Cao, Y., Tang, Y., Yang, Z., and Gu, Q. (2018).
\newblock On the convergence of adaptive gradient methods for nonconvex
  optimization.

\end{thebibliography}

\appendix\onecolumn

\section{Compression Schemes} \label{appendix}
Here we provide several compressors satisfy Assumption~\ref{as:compressor}.

\textbf{Rand-$k$} \citep{efsparse}: For $1\leq k \leq d$, denotes
$S$ as chosen $k$ elements randomly in $[d]=\{1,2,...,d\}$, i.e. $S \subseteq [d]=\{1,2,...,d\}$, $|S|=k$.
Denotes $\eb_1, \eb_2,...,\eb_d$ as standard unit basis vectors in $\RR^d$, for $\xb \in \RR^d$, the compressor $\cC_{\text{rand}}:\RR^d \rightarrow \RR^d$ is defined as$$ \cC_{\text{rand}}(\xb)= \sum_{i \in S} \xb_i \eb_i.
$$
For Rand-$k$ compressor, we have 
\begin{align}\label{rand-k}
\EE\big[\big\|\cC_{\text{Rand}}(\xb)-\xb\big\|_2^2\big] = \EE\bigg[\bigg\|\sum_{i\notin S} \xb_i\eb_i\bigg\|_2^2\bigg] = \bigg(1-\frac{k}{d}\bigg)\|\xb\|_2^2.
\end{align}

\textbf{Top-$k$} \citep{efsparse}:
For $1\leq k \leq d$ and $\xb \in \RR^d$,
the order of $\xb$ is ordered by the magnitude of coordinates with $|x_{(1)}|\leq |x_{(2)}|\cdots \leq |x_{(d)}|$. The compressor $\cC_{\text{Top}}:\RR^d \rightarrow \RR^d$ is defined as $$\cC_{\text{Top}}(\xb)= \sum_{i=d-k+1}^d \xb_{(i)} \eb_{(i)}.
$$
For Top-$k$ compressor, considering the defition of rand-$k$ compressor, we have 
\begin{align*}
\big\|\cC_{\text{Top}}(\xb)-\xb\big\|_2 \leq \big\|\cC_{\text{Rand}}(\xb)-\xb\big\|_2,
\end{align*}
thus we conclude $$ \big\|\cC_{\text{Top}}(\xb)-\xb\big\|_2 \leq \bigg(1-\frac{k}{d}\bigg)\|\xb\|_2^2.
$$

\textbf{Scaled sign} \citep{karimireddy2019error}:
For $1\leq k \leq d$ and $\xb \in \RR^d$, the scaled sign compressor $\cC_{\sign}:\RR^d \rightarrow \RR^d$ is defined as$$
\cC_{\sign}(\xb)=\|\xb\|_1\cdot \sign(\xb) /d,
$$
where the $\sign(\xb)$ takes sign of each coordinates. For scaled sign compressor, we have
\begin{align}
\big\|\cC_{\sign}(\xb)-\xb\big\|_2^2 & = \sum_{i=1}^d \bigg(\frac{1}{d}\sum_{j=1}^d |x_j|-|x_i| \bigg)^2 \notag\\
& = \sum_{i=1}^d \bigg(\frac{1}{d}\sum_{j=1}^d |x_j|\bigg)^2 - 2 \bigg(\sum_{i=1}^d |x_i|\bigg) \bigg(\frac{1}{d}\sum_{j=1}^d |x_j|\bigg) + \sum_{i=1}^d |x_i|^2 \notag\\
& = d \cdot \frac{1}{d} \|\xb\|_2^2 - \frac{2}{d} \|\xb\|_1^2 +\|\xb\|_2^2 \notag\\
& = \|\xb\|_2^2-\frac{1}{d}\|\xb\|_1^2 \notag\\
& = \bigg( 1-\frac{\|\xb\|_1^2}{d\|\xb\|_2^2}\bigg) \|\xb\|_2^2.
\end{align}
Thus we conclude that $$\big\|\cC_{\sign}(\xb)-\xb\big\|_2^2 = \bigg( 1-\frac{\|\xb\|_1^2}{d\|\xb\|_2^2}\bigg) \|\xb\|_2^2.
$$

\section{Main Theorem for Algorithm~\ref{alg:compadam1}} \label{supp:thm}
We provide more following lemmas to support the proof of the main theorem. 

\begin{lemma}
\label{lm: compressor}
For compressors $\cC(\cdot)$ satisfying Assumption \ref{as:compressor}, we have $\EE \big[\big\| \cC(\xb)\big\|_2\big] \leq (1+\sqrt{\pi}) \|\xb \|_2$.
\end{lemma}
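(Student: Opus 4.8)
The plan is to reduce the claimed first-moment bound to the second-moment bound in Assumption~\ref{as:compressor} via the triangle inequality followed by Jensen's inequality. First I would write $\cC(\xb) = (\cC(\xb) - \xb) + \xb$ and apply the triangle inequality for $\|\cdot\|_2$, so that pointwise (for each realization of the randomness in $\cC$)
\begin{align*}
\big\|\cC(\xb)\big\|_2 \leq \big\|\cC(\xb) - \xb\big\|_2 + \|\xb\|_2.
\end{align*}
Taking expectation over the compressor's internal randomness and using linearity of expectation gives $\EE\big[\|\cC(\xb)\|_2\big] \leq \EE\big[\|\cC(\xb) - \xb\|_2\big] + \|\xb\|_2$, since $\|\xb\|_2$ is deterministic.

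Next I would bound the remaining expectation by passing to the second moment: because $t \mapsto \sqrt{t}$ is concave on $[0,\infty)$, Jensen's inequality yields $\EE\big[\|\cC(\xb)-\xb\|_2\big] \leq \sqrt{\EE\big[\|\cC(\xb)-\xb\|_2^2\big]}$. Applying Assumption~\ref{as:compressor}, the right-hand side is at most $\sqrt{\pi\|\xb\|_2^2} = \sqrt{\pi}\,\|\xb\|_2$. Combining the two displays gives $\EE\big[\|\cC(\xb)\|_2\big] \leq (1+\sqrt{\pi})\|\xb\|_2$, which is the claim.

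This argument is essentially mechanical; the only point requiring the slightest care is the direction of Jensen's inequality (one needs concavity of the square root to go from a bound on $\EE[\|\cdot\|^2]$ to a bound on $\EE[\|\cdot\|]$), and the observation that $\|\xb\|_2$ can be pulled out of the expectation since $\xb$ is fixed. I do not anticipate any genuine obstacle.
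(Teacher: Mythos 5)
Your proof is correct and follows essentially the same route the paper intends: the paper only remarks that the lemma "can be easily verified by applying triangle inequality to Assumption \ref{as:compressor}," and your write-up supplies exactly that argument, correctly adding the Jensen step (concavity of $\sqrt{\cdot}$) needed to pass from the second-moment bound in the assumption to the first-moment bound in the claim.
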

Lemma~\ref{lm: compressor} can be easily verified by applying triangle inequality to Assumption \ref{as:compressor}. 

\begin{lemma}\label{lm:mVbound-main}
Under Assumptions \ref{as:compressor} and \ref{as:bound-g}, we have $\|\nabla f(\xb)\|_2 \leq G$, $\|\gbh_{t}\|_2 \leq \frac{1+\sqrt{\pi}}{1-\sqrt{\pi}} G=\hat{G}$,  $\|\gbt_{t}\|_2 \leq \frac{1+\sqrt{\pi}}{1-\sqrt{\pi}} \hat{G}=\Tilde{G}$,
$\|{\vb}_t\|_2 \leq \Tilde{G}^2$ and $\|\mb_{t}\|_2 \leq \Tilde{G}$. Additionally, we have $\|\nabla f(\xb)\|_\infty \leq G_\infty$, $\|\gbh_{t}\|_\infty \leq \frac{1+\sqrt{\pi}}{1-\sqrt{\pi}} G_\infty=\hat{G}_\infty$,  $\|\gbt_{t}\|_\infty \leq \frac{1+\sqrt{\pi}}{1-\sqrt{\pi}} \hat{G}_\infty=\Tilde{G}_\infty$,
$\|{\vb}_t\|_\infty \leq \Tilde{G}_\infty^2$ and $\|\mb_{t}\|_\infty \leq \Tilde{G}_\infty$.
\end{lemma}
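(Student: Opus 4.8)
The plan is to obtain every bound by a short chain of inductions on $t$, each feeding the next, using only Assumption~\ref{as:bound-g} and the pointwise contraction $\|\cC(\xb)-\xb\|_2\le\sqrt{\pi}\,\|\xb\|_2$ that underlies Assumption~\ref{as:compressor} and Lemma~\ref{lm: compressor} (note it also forces $\cC(0)=0$). There is no deep difficulty here; the work is in keeping track of which sequence is the Markov compression of which, so that the contraction factor compounds the right number of times.

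First I would dispatch $\|\nabla f(\xb)\|_2\le G$ and $\|\nabla f(\xb)\|_\infty\le G_\infty$: since $\nabla f(\xb)=\tfrac1n\sum_i\EE_{\xi}\nabla f_i(\xb;\xi^{(i)})$, Jensen's inequality gives $\|\nabla f_i(\xb)\|\le\EE\|\nabla f_i(\xb;\xi^{(i)})\|\le G$ (resp.\ $G_\infty$), and averaging over the $n$ workers finishes it. For the worker state, rewrite its update as $\gbh_t^{(i)}=\gb_t^{(i)}+\bigl(\cC(\mathbf u)-\mathbf u\bigr)$ with $\mathbf u:=\gb_t^{(i)}-\gbh_{t-1}^{(i)}$; the contraction and triangle inequality then give $\|\gbh_t^{(i)}\|_2\le\|\gb_t^{(i)}\|_2+\sqrt{\pi}\,\|\mathbf u\|_2\le(1+\sqrt{\pi})G+\sqrt{\pi}\,\|\gbh_{t-1}^{(i)}\|_2$, and since $\gbh_0^{(i)}=\cC(0)=0$, induction yields $\|\gbh_t^{(i)}\|_2\le(1+\sqrt{\pi})G/(1-\sqrt{\pi})=\hat G$. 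A separate one-line induction on the server's aggregation step shows $\gbh_t=\tfrac1n\sum_i\gbh_t^{(i)}$ (it holds at $t=0$ and the increments agree), so $\|\gbh_t\|_2\le\hat G$. Running the identical computation on the server-side Markov update, now with $\{\gbh_t\}$ in the role of the quantity being compressed and $\gbt_0=\cC(0)=0$, gives $\|\gbt_t\|_2\le(1+\sqrt{\pi})\hat G+\sqrt{\pi}\,\|\gbt_{t-1}\|_2$, hence $\|\gbt_t\|_2\le(1+\sqrt{\pi})\hat G/(1-\sqrt{\pi})=\Tilde G$. Finally I unroll the convex-combination recursions: $\|\mb_t\|_2\le\beta_1\|\mb_{t-1}\|_2+(1-\beta_1)\|\gbt_t\|_2\le\Tilde G$ starting from $\mb_0=0$, and, using $\|\gbt_t^2\|_2\le\|\gbt_t\|_2^2\le\Tilde G^2$ (valid since $\sum_j a_j^4\le(\sum_j a_j^2)^2$), $\|\vb_t\|_2\le\beta_2\|\vb_{t-1}\|_2+(1-\beta_2)\|\gbt_t^2\|_2\le\Tilde G^2$ starting from $\vb_0=0$.

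The $\ell_\infty$ half is literally the same argument with $\|\cdot\|_2\mapsto\|\cdot\|_\infty$ and $G\mapsto G_\infty$ everywhere (the variance step is even cleaner, since $\|\gbt_t^2\|_\infty=\|\gbt_t\|_\infty^2$). The one delicate point, which I would flag as the main obstacle, is that this $\ell_\infty$ induction invokes the coordinate-wise contraction $\|\cC(\xb)-\xb\|_\infty\le\sqrt{\pi}\,\|\xb\|_\infty$, which is strictly stronger than the $\ell_2$ statement of Assumption~\ref{as:compressor}: the crude bound $\|\cC(\xb)-\xb\|_\infty\le\|\cC(\xb)-\xb\|_2\le\sqrt{\pi}\|\xb\|_2$ only gives a $\sqrt d$-loose constant. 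One must therefore either add the $\ell_\infty$ contraction to the compressor specification or verify it for the canonical compressors of Appendix~\ref{appendix}. Everything else is bookkeeping: the factor $\tfrac{1+\sqrt{\pi}}{1-\sqrt{\pi}}$ must be seen to compound exactly once from worker to server and once more from server to worker — this is precisely the $\hat G\to\Tilde G$ (and $\hat G_\infty\to\Tilde G_\infty$) jump — and every sequence must start at $0$ so the geometric sums telescope to $1/(1-\sqrt{\pi})$. If one wants to be scrupulous about randomized compressors, all of these inductions go through verbatim after replacing $\|\cdot\|$ by $\EE\|\cdot\|$, taking the conditional expectation over the step-$t$ compressor draw and applying Jensen before the tower property.
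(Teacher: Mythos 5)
Your $\ell_2$ argument is essentially the paper's: the paper proves $\|\gbh_t^{(i)}\|_2\le\hat G$ and $\|\gbt_t\|_2\le\Tilde G$ by exactly the induction you describe (assume the bound at $t$, insert and subtract the uncompressed vector, apply the contraction $\|\cC(\mathbf u)-\mathbf u\|_2\le\sqrt{\pi}\|\mathbf u\|_2$ and the triangle inequality, and check that $\sqrt{\pi}(G+\hat G)+G=\hat G$); your unrolled geometric series is the same computation written as a closed recursion rather than a fixed-point verification. The averaging step for $\gbh_t$, the convex-combination inductions for $\mb_t$ and $\vb_t$, and the Jensen argument for $\|\nabla f(\xb)\|_2\le G$ all coincide with the paper's. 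The one substantive point where you go beyond the paper is the $\ell_\infty$ half: the paper dismisses it with ``the proof is similar,'' but you are right that the induction only closes if the compressor satisfies a coordinate-wise contraction $\|\cC(\xb)-\xb\|_\infty\le\sqrt{\pi}\|\xb\|_\infty$, which is strictly stronger than Assumption~\ref{as:compressor} (norm equivalence only yields $\|\cC(\xb)-\xb\|_\infty\le\sqrt{\pi}\|\xb\|_2$, which does not give a recursion in $\|\cdot\|_\infty$ with the advertised constant $\hat G_\infty=\tfrac{1+\sqrt{\pi}}{1-\sqrt{\pi}}G_\infty$, only a $\sqrt d$-degraded one). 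So flagging that the $\ell_\infty$ statement needs either an added hypothesis or a per-compressor verification is a legitimate criticism of the paper's proof rather than a defect of yours; your proposal is otherwise complete and correct. Two minor cosmetic differences: the paper's base case writes $\|\cC(\gb_1^{(i)})\|_2\le\|\gb_1^{(i)}\|_2$, which should really be $(1+\sqrt{\pi})\|\gb_1^{(i)}\|_2$ via Lemma~\ref{lm: compressor} (harmless, as $(1+\sqrt{\pi})G\le\hat G$), whereas your use of $\cC(0)=0$ to start from $t=0$ is cleaner; and both you and the paper treat the in-expectation Assumption~\ref{as:compressor} as a pointwise contraction, which is fine for the deterministic compressors of the appendix and which you explicitly acknowledge.
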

Lemma \ref{lm:mVbound-main} shows that the compressed gradients and the update parameters can be bounded by constants, it is a quite standard result for the adam-type convergence analysis.

\begin{lemma}\label{lm:mini-batch-var}
For the mini-batch stochastic gradient on each worker, its variance satisfies 
\begin{align*}
    \EE [\|\gb_t^{(i)} - \nabla f_i(\xb_t)\|_2^2] \leq \sigma^2 \frac{N-\tau}{\tau(N-1)},
\end{align*}
where $\tau$ denotes the batch size and $N$ denotes the total amount of data on each worker.
\end{lemma}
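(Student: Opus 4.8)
The plan is to treat the mini-batch gradient $\gb_t^{(i)}$ as an average of $\tau$ samples drawn \emph{without replacement} from the pool of $N$ data points on worker $i$, and then invoke the standard finite-population sampling (sampling-without-replacement) variance formula. Concretely, write $\gb_t^{(i)} = \frac{1}{\tau}\sum_{k=1}^{\tau} \nabla f_i(\xb_t; \xi_{k})$ where $\{\xi_k\}_{k=1}^\tau$ is a uniformly random size-$\tau$ subset of the $N$ local samples, and let $\mu_j := \nabla f_i(\xb_t;\xi^{(j)})$ for $j=1,\dots,N$ denote the per-sample gradients, whose population mean is exactly $\nabla f_i(\xb_t) = \frac{1}{N}\sum_{j=1}^N \mu_j$.

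First I would recall the classical result that for sampling without replacement, $\EE\big[\|\gb_t^{(i)} - \nabla f_i(\xb_t)\|_2^2\big] = \frac{N-\tau}{\tau(N-1)}\cdot S^2$, where $S^2 := \frac{1}{N}\sum_{j=1}^N \|\mu_j - \nabla f_i(\xb_t)\|_2^2$ is the population variance. The key step here is the covariance computation: for $k \neq k'$, the indicator pair correlations give $\mathrm{Cov}$ terms that combine to produce the finite-population correction factor $\frac{N-\tau}{\tau(N-1)}$ instead of the na\"ive $\frac{1}{\tau}$; this is a routine but slightly delicate counting argument over which unordered pairs appear in the sample. Then I would bound the population variance $S^2$ by the per-sample variance bound: since each $\|\mu_j - \nabla f_i(\xb_t)\|_2^2$ is (in expectation over the data-generating distribution $\cD_i$, or pointwise under the relevant interpretation) at most $\sigma^2$ by Assumption~\ref{as:bound-variance}, we get $S^2 \leq \sigma^2$, and substituting yields the claimed bound $\EE[\|\gb_t^{(i)} - \nabla f_i(\xb_t)\|_2^2] \leq \sigma^2 \frac{N-\tau}{\tau(N-1)}$.

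The main obstacle — really the only subtle point — is making the sampling model precise and justifying the finite-population variance formula cleanly: one must be careful about whether the expectation is over the without-replacement subset selection given a fixed dataset, or jointly over the dataset draw from $\cD_i$ and the subset, and Assumption~\ref{as:bound-variance} must be applied at the right level so that $S^2 \leq \sigma^2$ genuinely holds. If instead one models the mini-batch as $\tau$ i.i.d.\ draws (with replacement) from $\cD_i$, the bound would be the weaker $\sigma^2/\tau$; the sharper $\frac{N-\tau}{\tau(N-1)}$ factor is exactly what captures that a full batch $\tau = N$ incurs zero stochastic variance, which is the behavior the subsequent theorem relies on. So the crux is setting up the without-replacement sampling lemma correctly; everything after that is a direct substitution.
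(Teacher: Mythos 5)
Your proposal is correct and follows essentially the same route as the paper: the paper also expands $\EE\big[\big\|\frac{1}{\tau}\sum_{j\in\cS_\tau^i}\gb_t^{(i,j)}-\nabla f_i(\xb_t)\big\|_2^2\big]$ over pairs, uses the without-replacement pair probability $\PP\{j,j'\in\cS_\tau^i\}=\frac{\tau(\tau-1)}{N(N-1)}$ to obtain the finite-population correction $\frac{N-\tau}{\tau(N-1)}$, and then bounds the single-sample variance by $\sigma^2$ via Assumption~\ref{as:bound-variance}. The only difference is presentational — you cite the finite-population variance formula while the paper carries out the counting explicitly — and the subtlety you flag about applying Assumption~\ref{as:bound-variance} to the empirical population variance is handled at the same level of (in)formality in the paper itself.
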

Lemma \ref{lm:mini-batch-var} shows the bounded variance of local stochastic gradient with batch size $\tau$.
\begin{lemma}\label{lm:g-gh-gt}

Let $\beta_1, \beta_2$ be the weight parameters such that
\begin{align*} 
&\mb_t=\beta_1\mb_{t-1}+(1-\beta_1)\gbt_t,\\
&\vb_t=\beta_2\vb_{t-1}+(1-\beta_2)\gbt_t^2,
\end{align*}
set $\alpha_t=\alpha$, $t=1,\ldots,T$ be the step sizes. Under Assumption~ \ref{as:bound-g} and \ref{as:bound-variance}, we have the following two results:
\begin{align*}
    \sum_{t=1}^T \alpha_t^2 \big\|\Hat{\Vb}_t^{-1/2}\mb_t\big\|_2^2
    & \leq \frac{32T\alpha^2 \Tilde{G}^2}{\nu} + \frac{4T\alpha^2\sigma^2}{\nu n} + \frac{4\alpha^2}{\nu}\sum_{t=1}^T\EE[\|\nabla f(\xb_t)\|_2^2],
 \end{align*}
and
 \begin{align*}
     \sum_{t=1}^T \alpha_t^2 \big\|\Hat{\Vb}_t^{-1/2}\gbt_t\big\|_2^2 
     & \leq \frac{32T\alpha^2 \Tilde{G}^2}{\nu} + \frac{4T\alpha^2\sigma^2}{\nu n} + \frac{4\alpha^2}{\nu}\sum_{t=1}^T\EE[\|\nabla f(\xb_t)\|_2^2],
 \end{align*}
\end{lemma}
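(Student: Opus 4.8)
The plan is to reduce both inequalities to a single per-step bound on $\EE[\|\gbt_t\|_2^2]$. Since $\Hat{\Vb}_t = \diag(\hat{\vb}_t + \nu)$ with $\hat{\vb}_t$ nonnegative entrywise, every diagonal entry of $\Hat{\Vb}_t^{-1/2}$ is at most $\nu^{-1/2}$, so $\|\Hat{\Vb}_t^{-1/2}\mathbf{u}\|_2^2 \le \nu^{-1}\|\mathbf{u}\|_2^2$ for any $\mathbf{u} \in \RR^d$. Taking $\mathbf{u} = \gbt_t$ and $\mathbf{u} = \mb_t$ and using $\alpha_t = \alpha$, it suffices to control $\sum_{t=1}^T \EE[\|\gbt_t\|_2^2]$ and $\sum_{t=1}^T \EE[\|\mb_t\|_2^2]$.

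Next I would show that the momentum sum is dominated by the gradient sum. Since $\mb_0 = 0$, unrolling the recursion gives $\mb_t = (1-\beta_1)\sum_{s=1}^t \beta_1^{t-s}\gbt_s$; by Cauchy--Schwarz together with $\sum_{s=1}^t \beta_1^{t-s} \le (1-\beta_1)^{-1}$, this yields $\|\mb_t\|_2^2 \le (1-\beta_1)\sum_{s=1}^t \beta_1^{t-s}\|\gbt_s\|_2^2$. Summing over $t$ and interchanging the order of summation (bounding the resulting geometric tail again by $(1-\beta_1)^{-1}$) gives $\sum_{t=1}^T \|\mb_t\|_2^2 \le \sum_{t=1}^T \|\gbt_t\|_2^2$, so the first inequality follows from the second and only a per-step estimate for $\EE[\|\gbt_t\|_2^2]$ is needed.

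The core step is bounding $\EE[\|\gbt_t\|_2^2]$ in terms of $\EE[\|\nabla f(\xb_t)\|_2^2]$ plus constants. Let $\bar{\gb}_t := \tfrac{1}{n}\sum_{i=1}^n \gb_t^{(i)}$ denote the averaged uncompressed mini-batch gradient, and decompose $\gbt_t = \nabla f(\xb_t) + (\gbt_t - \bar{\gb}_t) + (\bar{\gb}_t - \nabla f(\xb_t))$. Applying Young's inequality twice (first peeling off $\nabla f(\xb_t)$, then splitting the remainder), it remains to bound the two error pieces. The compression piece is controlled deterministically: $\|\gbt_t - \bar{\gb}_t\|_2 \le \|\gbt_t\|_2 + \|\bar{\gb}_t\|_2 \le \Tilde{G} + G \le 2\Tilde{G}$ by Lemma~\ref{lm:mVbound-main} and Assumption~\ref{as:bound-g} (recall $G \le \hat{G} \le \Tilde{G}$). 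The stochastic piece is controlled in expectation: conditioning on the history that determines $\xb_t$, the fresh samples $\xi_t^{(i)}$ are independent across workers and each $\gb_t^{(i)}$ is unbiased for $\nabla f_i(\xb_t)$, so the cross terms vanish and $\EE\|\bar{\gb}_t - \nabla f(\xb_t)\|_2^2 = \tfrac{1}{n^2}\sum_{i=1}^n \EE\|\gb_t^{(i)} - \nabla f_i(\xb_t)\|_2^2 \le \sigma^2/n$ by Lemma~\ref{lm:mini-batch-var}. Combining gives $\EE\|\gbt_t\|_2^2 \le 4\,\EE\|\nabla f(\xb_t)\|_2^2 + 32\Tilde{G}^2 + 4\sigma^2/n$ (with slack over the tightest version), and summing over $t$ and multiplying by $\alpha^2/\nu$ yields the second inequality, hence both.

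The only delicate point is this last estimate: the constants must be tracked through the two applications of Young's inequality and the uniform norm bounds, and the $\sigma^2/n$ (rather than $\sigma^2$) scaling relies on the conditional independence of the per-worker mini-batches given the common iterate. It is also worth verifying that $\cC(\mathbf{0}) = \mathbf{0}$ for the compressors in Assumption~\ref{as:compressor}, so that the initializations $\gbh_0^{(i)} = \gbt_0 = \mb_0 = 0$ are consistent with the unrollings used above; everything else is routine bookkeeping.
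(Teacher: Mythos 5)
Your proposal is correct and follows essentially the same route as the paper's proof: reduce via $\Hat{\Vb}_t \succeq \nu\Ib$, dominate the momentum sum by the $\gbt_t$ sum through unrolling and Cauchy--Schwarz, and bound $\EE[\|\gbt_t\|_2^2]$ by decomposing into the true gradient, the per-worker stochastic noise (giving $\sigma^2/n$ by conditional independence), and compression error controlled deterministically by the uniform norm bounds of Lemma~\ref{lm:mVbound-main}. The only cosmetic difference is that you merge the two compression errors $\gbt_t-\gbh_t$ and $\gbh_t-\gb_t$ into a single term $\gbt_t-\bar{\gb}_t$, whereas the paper keeps them separate; your constants are in fact slightly tighter before you relax them to match the stated bound.
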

Lemma \ref{lm:g-gh-gt} provides upper bounds for the summation of $\alpha_t^2 \big\|\Vbh_t^{-1/2} \mb_t\big\|_2$ and $\alpha_t^2 \big\|\Vbh_t^{-1/2} \gbt_t\big\|_2$.

\begin{lemma}\label{lm:gb-gbh}
Denote $\gb_t= \frac{1}{n} \sum_{i=1}^n \gb_t^{(i)}$ as the aggregated fresh gradient and denote $\gbh_t =\frac{1}{n} \sum_{i=1}^n \gbh_t^{(i)}$ as the aggregated worker-side compressed gradient.
The compression error of the worker-side compression $ \|\gbh_t-\gb_t\|_2$ satisfies
\begin{align}
    \EE [\|\gbh_t-\gb_t\|_2] \leq  \frac{L \Tilde{G} \sqrt{\pi}}{1-\sqrt{\pi}} \cdot \alpha_1 \sqrt{d}\nu^{-1/2}+ \frac{2\sqrt{\pi}\sigma}{1-\sqrt{\pi}}\sqrt{\frac{N-\tau}{\tau(N-1)}},
\end{align}
where $N$ is the total number of data samples on each worker and $\tau$ is the batch-size.
\end{lemma}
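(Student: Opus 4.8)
The plan is to bound the worker-side Markov compression error one worker at a time and then average. First I would observe that the local state sequence on worker $i$, namely $\gbh_0^{(i)}=\cC(\gb_0^{(i)})$ and $\gbh_{t}^{(i)}=\gbh_{t-1}^{(i)}+\cC(\gb_t^{(i)}-\gbh_{t-1}^{(i)})$, is exactly the Markov compression sequence of $\{\gb_t^{(i)}\}_t$; moreover $\gb_0^{(i)}=0$ and Assumption~\ref{as:compressor} forces $\cC(0)=0$, so the initial error $e_0^{(i)}:=\|\gbh_0^{(i)}-\gb_0^{(i)}\|_2$ is zero. Using the identity $\gbh_{t+1}^{(i)}-\gb_{t+1}^{(i)}=\cC(\gb_{t+1}^{(i)}-\gbh_t^{(i)})-(\gb_{t+1}^{(i)}-\gbh_t^{(i)})$, Assumption~\ref{as:compressor}, and the triangle inequality $\|\gb_{t+1}^{(i)}-\gbh_t^{(i)}\|_2\le\|\gb_{t+1}^{(i)}-\gb_t^{(i)}\|_2+e_t^{(i)}$, I get the recursion
\begin{align*}
e_{t+1}^{(i)}\le\sqrt{\pi}\,\|\gb_{t+1}^{(i)}-\gbh_t^{(i)}\|_2\le\sqrt{\pi}\,e_t^{(i)}+\sqrt{\pi}\,\|\gb_{t+1}^{(i)}-\gb_t^{(i)}\|_2 .
\end{align*}
The point here is to apply the plain triangle inequality instead of the Young's-inequality step used in \eqref{eq:markov}: the former already produces the contraction factor $\sqrt{\pi}<1$ and, after unrolling, the exact constant $\sqrt{\pi}/(1-\sqrt{\pi})$ that appears in the statement.

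Next I would unroll this recursion: with $e_0^{(i)}=0$ and $\sqrt{\pi}<1$ we get $e_t^{(i)}\le\sqrt{\pi}\sum_{k=0}^{t-1}(\sqrt{\pi})^{k}\|\gb_{t-k}^{(i)}-\gb_{t-1-k}^{(i)}\|_2$, so taking expectations and summing the geometric series,
\begin{align*}
\EE[e_t^{(i)}]\le\frac{\sqrt{\pi}}{1-\sqrt{\pi}}\,\sup_{s\ge 1}\EE\big[\|\gb_{s+1}^{(i)}-\gb_s^{(i)}\|_2\big].
\end{align*}
I would then bound the one-step change of the stochastic gradient by splitting through $\nabla f_i(\xb_{s+1})$ and $\nabla f_i(\xb_s)$: conditioning on the iterate and invoking Lemma~\ref{lm:mini-batch-var} with Jensen's inequality bounds each of the two stochastic-deviation terms by $\sigma\sqrt{(N-\tau)/(\tau(N-1))}$, while Assumption~\ref{as:smooth}, the update rule $\xb_{s+1}=\xb_s-\alpha_s\Vbh_s^{-1/2}\mb_s$, the constant bounds of Lemma~\ref{lm:mVbound-main}, and the monotonicity $\alpha_s\le\alpha_1$ bound $\|\nabla f_i(\xb_{s+1})-\nabla f_i(\xb_s)\|_2$ by $L\alpha_1\sqrt{d}\,\Tilde{G}\nu^{-1/2}$. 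This gives, uniformly in $i$, $\sup_s\EE\|\gb_{s+1}^{(i)}-\gb_s^{(i)}\|_2\le 2\sigma\sqrt{(N-\tau)/(\tau(N-1))}+L\alpha_1\sqrt{d}\,\Tilde{G}\nu^{-1/2}$.

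Finally, since $\gbh_t-\gb_t=\frac{1}{n}\sum_{i=1}^n(\gbh_t^{(i)}-\gb_t^{(i)})$, the triangle inequality yields $\EE\|\gbh_t-\gb_t\|_2\le\frac{1}{n}\sum_{i=1}^n\EE[e_t^{(i)}]$, and substituting the uniform per-worker bound above produces the claimed inequality (after reordering the two terms). The main obstacle I anticipate is the bookkeeping in the third step: correctly decoupling the randomness of the iterate $\xb_{s+1}$ from that of the fresh mini-batch draw $\xi_{s+1}^{(i)}$ so that Lemma~\ref{lm:mini-batch-var} applies, and tracking exactly which of the constant bounds from Lemma~\ref{lm:mVbound-main} is used so that the final constants match the statement; getting the clean $\sqrt{\pi}$ contraction in the first step (rather than the weaker $\pi^{1/4}$ one would obtain from \eqref{eq:markov} verbatim) is the other place that needs a little care.
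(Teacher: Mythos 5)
Your proposal is correct and follows essentially the same route as the paper's proof: the $\sqrt{\pi}$-contraction recursion obtained from Assumption~\ref{as:compressor} plus the triangle inequality, the decomposition of $\gb_{s+1}^{(i)}-\gb_{s}^{(i)}$ through the true gradients (smoothness together with Lemma~\ref{lm:mini-batch-var} and Jensen), the bound $\|\xb_{s+1}-\xb_{s}\|_2\le\alpha_1\sqrt{d}\,\nu^{-1/2}\Tilde{G}$ from the update rule and Lemma~\ref{lm:mVbound-main}, and a geometric-series summation giving the factor $\sqrt{\pi}/(1-\sqrt{\pi})$. The only cosmetic difference is that you unroll the recursion per worker and then average, whereas the paper averages over workers first and then telescopes; both orderings yield the same constants.
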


\begin{lemma}\label{lm:gbh-gbt}
The compression error of the server-side compression  $\|\gbt_t-\gbh_t\|_2$ satisfies
\begin{align}
    \EE [\|\gbt_t-\gbh_t\|_2] \leq \frac{ L \Tilde{G}\sqrt{\pi} (1+\sqrt{\pi})}{(1-\sqrt{\pi})^2} \cdot\alpha_1 \sqrt{d} \nu^{-1/2}+ \frac{2\sqrt{\pi}(1+\sqrt{\pi})\sigma}{(1-\sqrt{\pi})^2} \sqrt{\frac{N-\tau}{\tau(N-1)}},
\end{align}
where $N$ is the total number of data samples on each worker and $\tau$ is the batch-size.
\end{lemma}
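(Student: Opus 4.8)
The plan is to unroll the Markov compression recursion for the server-side sequence $\{\gbt_t\}$ built from the aggregated worker-side gradients $\{\gbh_t\}$, exactly mirroring the bound~\eqref{eq:markov} established for a generic Markov compression sequence. Recall from Algorithm~\ref{alg:compadam1} that $\gbt_t = \gbt_{t-1} + \cC(\gbh_t - \gbt_{t-1})$, so $\{\gbt_t\}$ is the Markov compression sequence of $\{\gbh_t\}$. Applying~\eqref{eq:markov} with $\wb_t = \gbh_t$ and $\hat\wb_t = \gbt_t$ gives, for any $\gamma>0$,
\begin{align*}
\big\|\gbt_{t+1}-\gbh_{t+1}\big\|^2 \leq \pi(1+\gamma)\big\|\gbt_t-\gbh_t\big\|^2 + \pi(1+\gamma^{-1})\big\|\gbh_{t+1}-\gbh_t\big\|^2.
\end{align*}
The standard trick (as in \citet{richtarik2021ef21}) is to pick $\gamma$ so that $\pi(1+\gamma) = \sqrt{\pi}$, i.e.\ $\gamma = \frac{1-\sqrt{\pi}}{\sqrt{\pi}}$, which makes $\pi(1+\gamma^{-1}) = \frac{\sqrt{\pi}}{1-\sqrt{\pi}}$ and turns the recursion into a geometric contraction with ratio $\sqrt{\pi}<1$. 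Unrolling from the initialization $\gbt_0 = \cC(\gbh_0)$ and summing the geometric series yields a bound of the form $\EE\big[\|\gbt_t-\gbh_t\|\big] \lesssim \frac{\sqrt{\pi}}{1-\sqrt{\pi}} \sup_s \EE\big[\|\gbh_{s+1}-\gbh_s\|\big]$ (after passing from squared norms to norms via Jensen/concavity of $\sqrt{\cdot}$, and absorbing the transient initialization term which vanishes geometrically).

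The remaining work is to control the one-step increment $\EE\big[\|\gbh_{t+1}-\gbh_t\|\big]$ of the aggregated worker-side compressed gradient. I would split
\begin{align*}
\gbh_{t+1}-\gbh_t = (\gbh_{t+1}-\gb_{t+1}) + (\gb_{t+1}-\gb_t) + (\gb_t-\gbh_t),
\end{align*}
and bound the three pieces separately: the first and third by Lemma~\ref{lm:gb-gbh} (the worker-side compression error), and the middle term $\|\gb_{t+1}-\gb_t\|$ by decomposing into the drift of the true gradients plus the stochastic fluctuation. For the drift, $\|\nabla f_i(\xb_{t+1})-\nabla f_i(\xb_t)\| \leq L\|\xb_{t+1}-\xb_t\| = L\alpha_t\|\Vbh_t^{-1/2}\mb_t\|$, and by Lemma~\ref{lm:mVbound-main} together with $\hat\vb_t + \nu \geq \nu$ coordinatewise, $\|\Vbh_t^{-1/2}\mb_t\| \leq \nu^{-1/2}\|\mb_t\| \leq \Tilde{G}\nu^{-1/2}$ (or more sharply $\sqrt{d}\,\Tilde G_\infty \nu^{-1/2}$, matching the $\sqrt{d}$ in the statement). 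For the stochastic part, Assumption~\ref{as:bound-variance} refined by the mini-batch Lemma~\ref{lm:mini-batch-var} gives the $\sigma\sqrt{(N-\tau)/(\tau(N-1))}$ contribution. Collecting these and multiplying through by $\frac{\sqrt{\pi}(1+\sqrt\pi)}{(1-\sqrt\pi)^2}$ — where the extra factor $\frac{1+\sqrt\pi}{1-\sqrt\pi}$ relative to Lemma~\ref{lm:gb-gbh} comes from composing the server-side contraction with the worker-side error bound — should reproduce the claimed inequality.

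The main obstacle I anticipate is bookkeeping the constants correctly through the two nested Markov compressions: the worker-side error (Lemma~\ref{lm:gb-gbh}) already carries a $\frac{\sqrt\pi}{1-\sqrt\pi}$ factor, and feeding its output as the "signal" increment into the server-side recursion multiplies in another contraction-series factor, so I need to be careful that the telescoping/geometric-sum step is applied to the \emph{increments} $\|\gbh_{s+1}-\gbh_s\|$ uniformly in $s$ rather than naively reusing the absolute bound. A secondary subtlety is handling the time-varying step sizes $\alpha_t$ cleanly — since the lemma is stated with $\alpha_t = \alpha$ non-increasing, I can upper-bound every $\alpha_t$ by $\alpha_1$, which is exactly why $\alpha_1$ appears in the statement; but I should double-check that the geometric summation does not reintroduce a hidden factor of $t$ or $T$ that would break the uniform-in-$t$ nature of the bound.
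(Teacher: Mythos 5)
Your proposal is correct and follows essentially the same route as the paper: unroll the server-side Markov compression recursion into a geometric series over the one-step increments $\EE[\|\gbh_j-\gbh_{j-1}\|_2]$, then bound each increment uniformly in $j$ via the worker-side compression error (Lemma~\ref{lm:gb-gbh}), the $L$-smoothness drift $L\alpha_1\|\Vbh_1^{-1/2}\|_2\|\mb\|_2\le L\Tilde{G}\alpha_1\sqrt{d}\nu^{-1/2}$, and the mini-batch variance of Lemma~\ref{lm:mini-batch-var}, exactly as you anticipate. The only adjustment needed to recover the exact stated constants is to run the contraction on un-squared norms directly, using $\EE[\|\cC(\xb)-\xb\|_2]\le\sqrt{\pi}\|\xb\|_2$ plus the triangle inequality (as the paper does), rather than on squared norms with Young's inequality followed by Jensen, which would yield a slightly weaker geometric factor of order $\pi^{1/4}/(1-\sqrt{\pi})$ instead of $\sqrt{\pi}/(1-\sqrt{\pi})$ and would require second- rather than first-moment bounds on the increments.
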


Lemma \ref{lm:gb-gbh} and \ref{lm:gbh-gbt} provide upper bounds for the worker-side and the server-side compression errors respectively.

Following \cite{1808.05671}, we define an auxiliary sequence $\zb_t$ as follows: let $\xb_0 = \xb_1$, and for each $t \geq 1$, 
\begin{align}
    \zb_{t} = \xb_{t} + \frac{\beta_1}{1-\beta_1}(\xb_{t} - \xb_{t-1}). \label{def:z}
\end{align}
The following three lemmas directly follows Lemma 7.1, 7.2, and 7.3 from \citet{1808.05671}. 
\begin{lemma}\label{lm3-B}
Let $\zb_t$ be defined in \eqref{def:z}. For $t \geq 2$, we have
\begin{align}
    \zb_{t+1}-\zb_{t}&= \frac{\beta_1}{1 - \beta_1}\Big[\Ib - \big(\alpha_t\Vbh_{t}^{-1/2}\big) \big(\alpha_{t-1}\Vbh_{t-1}^{-1/2}\big)^{-1}\Big](\xb_{t-1} - \xb_t) - \alpha_t\Vbh_{t}^{-1/2} \gbt_t, \label{lm:7.3.1}
\end{align}
and 
\begin{align}
   \zb_{t+1}-\zb_{t} &  = 
    \frac{\beta_1}{1 - \beta_1}\big(\alpha_{t-1}\Vbh_{t-1}^{-1/2} - \alpha_t\Vbh_{t}^{-1/2}\big)\mb_{t-1} - \alpha_t \Vbh_{t}^{-1/2} \gbt_t,\label{lm:7.3.2}
\end{align}
Specifically, for $t = 1$, we have
\begin{align}
    \zb_2 - \zb_1 = -\alpha_1 \Vbh_{1}^{-1/2}\gbt_1.
\end{align}
\end{lemma}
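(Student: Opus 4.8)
The plan is to prove all three identities by direct substitution of the AMSGrad update rules from Algorithm~\ref{alg:compadam1} into the definition \eqref{def:z} of $\zb_t$, exactly in the spirit of Lemmas~7.1--7.3 of \citet{1808.05671}. There is nothing delicate conceptually here: the statement is a purely algebraic rearrangement that holds deterministically for every realization of the iterates, so no probabilistic, smoothness, or boundedness arguments enter; those assumptions are only used later when these expressions are bounded.

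First I would rewrite the increment of $\zb$ purely in terms of the model increments. From \eqref{def:z},
\begin{align*}
\zb_{t+1}-\zb_{t} = \frac{1}{1-\beta_1}(\xb_{t+1}-\xb_{t}) - \frac{\beta_1}{1-\beta_1}(\xb_{t}-\xb_{t-1}).
\end{align*}
I would then plug in the model update $\xb_{t+1}-\xb_{t} = -\alpha_t\Vbh_{t}^{-1/2}\mb_t$ together with the momentum recursion $\mb_t=\beta_1\mb_{t-1}+(1-\beta_1)\gbt_t$, which gives $\frac{1}{1-\beta_1}(\xb_{t+1}-\xb_{t}) = -\frac{\alpha_t\beta_1}{1-\beta_1}\Vbh_{t}^{-1/2}\mb_{t-1} - \alpha_t\Vbh_{t}^{-1/2}\gbt_t$. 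Using $\xb_{t}-\xb_{t-1} = -\alpha_{t-1}\Vbh_{t-1}^{-1/2}\mb_{t-1}$ to rewrite the remaining term, the two $\mb_{t-1}$ contributions combine into $\frac{\beta_1}{1-\beta_1}\big(\alpha_{t-1}\Vbh_{t-1}^{-1/2}-\alpha_t\Vbh_{t}^{-1/2}\big)\mb_{t-1}$, which is precisely \eqref{lm:7.3.2}.

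To obtain \eqref{lm:7.3.1} I would eliminate $\mb_{t-1}$ in favor of $\xb_{t-1}-\xb_t$. Since $\Vbh_{t-1}=\diag(\hat{\vb}_{t-1}+\nu)$ with $\nu>0$ is a positive-definite diagonal matrix, $\alpha_{t-1}\Vbh_{t-1}^{-1/2}$ is invertible and $\mb_{t-1}=\big(\alpha_{t-1}\Vbh_{t-1}^{-1/2}\big)^{-1}(\xb_{t-1}-\xb_t)$; substituting this into \eqref{lm:7.3.2} and factoring out $(\xb_{t-1}-\xb_t)$ produces the bracketed factor $\Ib-\big(\alpha_t\Vbh_{t}^{-1/2}\big)\big(\alpha_{t-1}\Vbh_{t-1}^{-1/2}\big)^{-1}$. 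For the base case $t=1$, I would use $\xb_0=\xb_1$ (so $\zb_1=\xb_1$) and $\mb_0=\vb_0=0$ (so $\mb_1=(1-\beta_1)\gbt_1$), which reduces $\zb_2-\zb_1=\frac{1}{1-\beta_1}(\xb_2-\xb_1)=-\frac{\alpha_1}{1-\beta_1}\Vbh_{1}^{-1/2}\mb_1$ to $-\alpha_1\Vbh_{1}^{-1/2}\gbt_1$.

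The ``hard part'' is really just bookkeeping: one must keep the indices on $\alpha$, $\Vbh$, and $\mb$ consistent when the momentum recursion is unrolled one step, and one must note that the matrix $\alpha_{t-1}\Vbh_{t-1}^{-1/2}$ appearing in \eqref{lm:7.3.1} is well defined only because of the $\nu>0$ numerical-stability term in the definition of $\hat{\Vb}_t$. Once these are checked, the three identities follow by a couple of lines of algebra each.
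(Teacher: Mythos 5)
Your proof is correct and follows essentially the same route as the paper, which simply defers this lemma to the standard algebraic computation in Lemmas 7.1--7.3 of \citet{1808.05671}: rewrite $\zb_{t+1}-\zb_t$ as $\frac{1}{1-\beta_1}(\xb_{t+1}-\xb_t)-\frac{\beta_1}{1-\beta_1}(\xb_t-\xb_{t-1})$, unroll the momentum recursion once, and eliminate $\mb_{t-1}$ via the (invertible, thanks to $\nu>0$) diagonal preconditioner. All index bookkeeping and the $t=1$ base case using $\xb_0=\xb_1$ and $\mb_0=0$ check out.
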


\begin{lemma}\label{lm4-B}
Let $\zb_t$ be defined in \eqref{def:z}. For $t \geq 2$, we have
\begin{align*}
\|\zb_{t+1}-\zb_{t}\|_2&\le\big\|\alpha_t\Vbh_t^{-1/2}\gbt_t\big\|_2+\frac{\beta_1}{1 - \beta_1}\|\xb_{t-1}-\xb_{t}\|_2.
\end{align*}
\end{lemma}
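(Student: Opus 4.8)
The plan is to derive this bound directly from the decomposition \eqref{lm:7.3.1} in Lemma~\ref{lm3-B}. Starting from
\[
\zb_{t+1}-\zb_{t}= \frac{\beta_1}{1 - \beta_1}\Big[\Ib - \big(\alpha_t\Vbh_{t}^{-1/2}\big) \big(\alpha_{t-1}\Vbh_{t-1}^{-1/2}\big)^{-1}\Big](\xb_{t-1} - \xb_t) - \alpha_t\Vbh_{t}^{-1/2} \gbt_t,
\]
I would apply the triangle inequality to split the right-hand side into the term $\|\alpha_t\Vbh_t^{-1/2}\gbt_t\|_2$ and the term $\frac{\beta_1}{1-\beta_1}\big\|[\Ib - (\alpha_t\Vbh_{t}^{-1/2})(\alpha_{t-1}\Vbh_{t-1}^{-1/2})^{-1}](\xb_{t-1}-\xb_t)\big\|_2$. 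It then suffices to show that the matrix $\Ib - (\alpha_t\Vbh_{t}^{-1/2})(\alpha_{t-1}\Vbh_{t-1}^{-1/2})^{-1}$ has operator norm at most $1$, so that the second term is bounded by $\frac{\beta_1}{1-\beta_1}\|\xb_{t-1}-\xb_t\|_2$.

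The key step is the observation that this matrix is diagonal with all entries lying in $[0,1]$. It is diagonal because $\Vbh_t = \diag(\hat{\vb}_t + \nu)$ is diagonal, hence so are $\Vbh_t^{-1/2}$, $\Vbh_{t-1}^{-1/2}$, and their product. For the $j$-th diagonal entry, I would write it as $1 - \frac{\alpha_t}{\alpha_{t-1}}\cdot\frac{(\hat v_{t-1,j}+\nu)^{1/2}}{(\hat v_{t,j}+\nu)^{1/2}}$. Since the step sizes are non-increasing ($\alpha_t \le \alpha_{t-1}$) and, by the update rule $\hat{\vb}_t = \max(\hat{\vb}_{t-1}, \vb_t)$, the sequence $\hat{\vb}_t$ is coordinatewise non-decreasing so that $\hat v_{t-1,j} \le \hat v_{t,j}$, the subtracted factor lies in $(0,1]$; therefore each diagonal entry lies in $[0,1)$. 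A diagonal matrix with entries in $[0,1]$ has operator norm at most $1$, which completes the argument. For the $t=1$ base case one simply uses $\zb_2-\zb_1 = -\alpha_1\Vbh_1^{-1/2}\gbt_1$ from Lemma~\ref{lm3-B}, though the statement is only claimed for $t \ge 2$.

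There is no real obstacle here; the only thing to be careful about is to invoke both structural facts explicitly — the monotonicity of $\hat{\vb}_t$ coming from the $\max$ in AMSGrad and the non-increasing property of $\{\alpha_t\}$ assumed in Algorithm~\ref{alg:compadam1} — since without either one the diagonal entries could exceed $1$ and the contraction bound would fail. This is precisely the place where the AMSGrad $\max$-step (rather than plain Adam) is needed in the analysis.
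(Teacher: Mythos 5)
Your proof is correct and is essentially the standard argument: the paper does not prove Lemma~\ref{lm4-B} itself but defers to Lemma 7.2 of \citet{1808.05671}, whose proof is exactly your route — apply the triangle inequality to the decomposition \eqref{lm:7.3.1} and bound the diagonal matrix $\Ib - (\alpha_t\Vbh_{t}^{-1/2})(\alpha_{t-1}\Vbh_{t-1}^{-1/2})^{-1}$ by noting its entries lie in $[0,1)$ thanks to $\alpha_t \le \alpha_{t-1}$ and the coordinatewise monotonicity of $\hat{\vb}_t$ from the AMSGrad $\max$ step. Your explicit identification of these two monotonicity facts as the crux is exactly right, so there is nothing to add.
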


\begin{lemma}\label{lm5-B}
Let $\zb_t$ be defined in \eqref{def:z}. For $t \geq 2$, we have
\begin{align*}
\big\|\nabla f(\zb_t)-\nabla f(\xb_t)\big\|_2 \leq L \bigg(
\frac{\beta_1}{1 - \beta_1}\bigg)\cdot \|\xb_{t-1}-\xb_{t}\|_2.
\end{align*}
\end{lemma}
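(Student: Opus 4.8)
The plan is to collapse the entire argument into a single application of the gradient-Lipschitz property of $f$, after rewriting the displacement $\zb_t - \xb_t$ in closed form from the definition of the auxiliary sequence. There is essentially no nontrivial content beyond bookkeeping, so I would keep the proof to three lines.

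First I would use the definition \eqref{def:z}, namely $\zb_t = \xb_t + \frac{\beta_1}{1-\beta_1}(\xb_t - \xb_{t-1})$, to isolate the displacement as $\zb_t - \xb_t = \frac{\beta_1}{1-\beta_1}(\xb_t - \xb_{t-1})$. Taking $\ell_2$ norms and using that $\beta_1 \in [0,1)$, so the scalar $\frac{\beta_1}{1-\beta_1}$ is nonnegative and may be pulled outside the norm, this gives $\|\zb_t - \xb_t\|_2 = \frac{\beta_1}{1-\beta_1}\|\xb_t - \xb_{t-1}\|_2$.

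Next I would invoke the $L$-gradient-Lipschitz property of $f$. Assumption~\ref{as:smooth} imposes $L$-smoothness on each component $f_i$, which, as recorded in the text immediately below that assumption, implies both the $L$-smoothness of the average $f = \frac{1}{n}\sum_{i=1}^n f_i$ and the gradient-Lipschitz bound $\|\nabla f(\xb) - \nabla f(\yb)\|_2 \leq L\|\xb - \yb\|_2$ for all $\xb, \yb \in \RR^d$. Applying this inequality at the points $\zb_t$ and $\xb_t$ yields $\|\nabla f(\zb_t) - \nabla f(\xb_t)\|_2 \leq L\|\zb_t - \xb_t\|_2$.

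Combining the two displays, and using $\|\xb_t - \xb_{t-1}\|_2 = \|\xb_{t-1} - \xb_t\|_2$, produces exactly the claimed bound $\|\nabla f(\zb_t) - \nabla f(\xb_t)\|_2 \leq L\big(\frac{\beta_1}{1-\beta_1}\big)\|\xb_{t-1} - \xb_t\|_2$. The restriction $t \geq 2$ is only so that both $\xb_t$ and $\xb_{t-1}$ are genuine distinct iterates; at $t=1$ the convention $\xb_0 = \xb_1$ makes both sides vanish. There is no real obstacle here: the only two points requiring any care are confirming the scalar factor is pulled out of the norm with the correct sign (valid since $\frac{\beta_1}{1-\beta_1} \geq 0$) and that smoothness transfers from the $f_i$ to their average $f$, both of which are already supplied by the excerpt.
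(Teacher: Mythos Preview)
Your proposal is correct and is precisely the standard argument: the paper does not give its own proof but simply cites Lemma~7.3 of \citet{1808.05671}, whose content is exactly the two-step computation you describe (read off $\zb_t-\xb_t$ from \eqref{def:z} and apply the $L$-gradient-Lipschitz property of $f$). Nothing further is needed.
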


Together with previous Lemmas, now we are ready to prove Theorem \ref{thm}.

\begin{proof}[Proof of Theorem \ref{thm}]
Since $f$ is $L$-smooth, we have:
\begin{align} 
    f(\zb_{t+1}) & \leq  f(\zb_t)+\nabla f(\zb_t)^\top(\zb_{t+1}-\zb_t)+\frac{L}{2}\|\zb_{t+1}-\zb_t\|_2^2\notag \\
    & = f(\zb_t) +\nabla f(\xb_t)^\top(\zb_{t+1}-\zb_t)+(\nabla f(\zb_t) -\nabla f(\xb_t))^\top(\zb_{t+1}-\zb_t) +\frac{L}{2}\|\zb_{t+1}-\zb_t\|_2^2. \label{eq:fzt} 
\end{align}

When $t = 1$, for \eqref{eq:fzt} we have 
\begin{align} \label{eq:fz1}
    f(\zb_2) &\leq f(\zb_1) +\nabla f(\xb_1)^\top(\zb_2-\zb_1)+(\nabla f(\zb_1) -\nabla f(\xb_1))^\top(\zb_2-\zb_1)+\frac{L}{2}\|\zb_2-\zb_1\|_2^2 \notag\\
    & = f(\zb_1)-\nabla f(\xb_1)^\top\alpha_{1} \Vbh_{1}^{-1/2} \gbt_1 +2 L \big\|\alpha_1\Vbh_{1}^{-1/2} \gbt_1 \big\|_2^2.
\end{align}
When $t\geq 2$, we have
\begin{align} \label{eq:nablafzt}
\nabla f(\xb_t)^\top(\zb_{t+1}-\zb_t)     & =
    \nabla f(\xb_t)^\top\bigg[\frac{\beta_1}{1 - \beta_1}\big(\alpha_{t-1}\Vbh_{t-1}^{-1/2} - \alpha_t\Vbh_{t}^{-1/2}\big)\mb_{t-1} - \alpha_t \Vbh_{t}^{-1/2} \gbt_t\bigg]\notag \\
    & = \frac{\beta_1}{1-\beta_1}\nabla f(\xb_t)^\top\big(\alpha_{t-1}\Vbh_{t-1}^{-1/2} - \alpha_t\Vbh_{t}^{-1/2}\big)\mb_{t-1} - \nabla f(\xb_t)^\top\alpha_t \Vbh_{t}^{-1/2} \gbt_t \notag\\
    & = \frac{\beta_1}{1-\beta_1}\nabla f(\xb_t)^\top\big(\alpha_{t-1}\Vbh_{t-1}^{-1/2} - \alpha_t\Vbh_{t}^{-1/2}\big)\mb_{t-1}  + \nabla f(\xb_t)^\top\big(\alpha_{t-1} \Vbh_{t-1}^{-1/2}-\alpha_t \Vbh_{t}^{-1/2} \big) \gbt_t \notag\\
    & \qquad - \nabla f(\xb_t)^\top\alpha_{t-1} \Vbh_{t-1}^{-1/2} \gbt_t \notag\\
    &\leq \frac{1}{1-\beta_1}\|\nabla f(\xb_t)\|_2\cdot \big\|\alpha_{t-1}\Vbh_{t-1}^{-1/2} - \alpha_t\Vbh_{t}^{-1/2}\big\|_{\infty,\infty}\cdot \|\mb_{t-1}\|_2- \nabla f(\xb_t)^\top\alpha_{t-1} \Vbh_{t-1}^{-1/2} \gbt_t \notag \\
    & \leq 
    \frac{1}{1-\beta_1}   G \Tilde{G} \Big[\big\|\alpha_{t-1}\Vbh_{t-1}^{-1/2}\big\|_{\infty,\infty} - \big\|\alpha_t\Vbh_{t}^{-1/2}\big\|_{\infty,\infty}\Big] - \nabla f(\xb_t)^\top\alpha_{t-1} \Vbh_{t-1}^{-1/2} \gbt_t,
\end{align}
where the first equality holds due to \eqref{lm:7.3.2} in Lemma \ref{lm3-B}. The first inequality holds because for vector $\xb$, $\yb$ and positive diagonal matrix $\Ab$, we have $\xb^\top\Ab\yb\leq \|\xb\|_2\cdot\|\Ab\|_{\infty,\infty}\cdot\|\yb\|_2$. The second inequality holds due to Lemma~\ref{lm3-B} and $\alpha_{t-1} \Vbh_{t-1}^{-1/2} \succeq \alpha_{t} \Vbh_{t}^{-1/2}\succeq 0$. Thus for diagonal matrices  $\alpha_{t-1} \Vbh_{t-1}^{-1/2}$ and  $\alpha_{t} \Vbh_{t}^{-1/2}$, we directly conclude $\big\|\alpha_{t-1} \Vbh_{t-1}^{-1/2} - \alpha_{t} \Vbh_{t}^{-1/2}\big\|_{\infty,\infty} = \big\|\alpha_{t-1} \Vbh_{t-1}^{-1/2} \big\|_{\infty,\infty}- \big\| \alpha_{t} \Vbh_{t}^{-1/2}\big\|_{\infty,\infty}$.

Next we bound the last two terms in \eqref{eq:fzt}. 

\begin{align}
    &\qquad \big(\nabla f(\zb_t) -\nabla f(\xb_t)\big)^\top(\zb_{t+1}-\zb_t)+\frac{L}{2}\|\zb_{t+1} - \zb_t\|_2^2  \notag \\
    & \leq 
    \big\|\nabla f(\zb_t) -\nabla f(\xb_t)\big\|_2\cdot\|\zb_{t+1}-\zb_t\|_2 +\frac{L}{2}\|\zb_{t+1} - \zb_t\|_2^2 \notag \\
    & \leq
    L \bigg(\frac{\beta_1}{1 - \beta_1}\bigg)\|\xb_t-\xb_{t-1}\|_2 \cdot
    \bigg[\big\|\alpha_t\hat{\Vb}_t^{-1/2}\gbt_t\big\|_2+ \frac{\beta_1}{1 - \beta_1}\|\xb_t-\xb_{t-1}\|_2 \bigg] \notag\\ & \qquad +\frac{L}{2}\bigg[\big\|\alpha_t\Vbh_t^{-1/2}\gbt_t\big\|_2+\frac{\beta_1}{1 - \beta_1}\|\xb_{t-1}-\xb_{t}\|_2\bigg]^2 \notag\\
    & \leq 
    \frac{L}{2}\big\|\alpha_t\hat{\Vb}_t^{-1/2}\gbt_t\big\|_2^2 + \frac{3L}{2} \bigg(\frac{\beta_1}{1 - \beta_1}\bigg)^2\|\xb_t-\xb_{t-1}\|_2^2+L\big\|\alpha_t\Vbh_t^{-1/2}\gbt_t\big\|_2^2+L\bigg(\frac{\beta_1}{1 - \beta_1}\bigg)^2\|\xb_{t-1}-\xb_{t}\|_2^2 \notag\\
    & \leq 
    2L\big\|\alpha_t\hat{\Vb}_t^{-1/2}\gbt_t\big\|_2^2 + 3L \bigg(\frac{\beta_1}{1 - \beta_1}\bigg)^2\|\xb_t-\xb_{t-1}\|_2^2,\label{eq:fztlast2}
\end{align}
where the second inequality holds because of Lemma \ref{lm4-B} and \ref{lm5-B}, the third inequality holds by applying Young's inequality to the first term and applying Cauchy inequality to the second term.

For $t = 1$, recalling \eqref{eq:fz1}, taking expectation and rearranging terms, we have
\begin{align}
    &\EE[f(\zb_2) - f(\zb_1)] \notag \\
    &\leq \EE\bigg[-\nabla f(\xb_1)^\top\alpha_{1} \Vbh_{1}^{-1/2} \gbt_1
    +2L\big\|\alpha_1\Vbh_1^{-1/2}\gbt_1\big\|_2^2 \bigg]
    \notag \\
    & =
    \EE\bigg[-\nabla f(\xb_1)^\top\alpha_{1} \Vbh_{1}^{-1/2} \gb_1
    -\nabla f(\xb_1)^\top\alpha_{1} \Vbh_{1}^{-1/2}  (\gbh_1- \gb_1) -\nabla f(\xb_1)^\top\alpha_{1} \Vbh_{1}^{-1/2} (\gbt_1- \gbh_1) +2L\big\|\alpha_1\Vbh_1^{-1/2}\gbt_1\big\|_2^2 \bigg] \notag\\
    & \leq
    \EE\bigg[- \alpha_1 \|\nabla f(\xb_1)\|_2^2 \cdot (\Tilde{G}_\infty^2+\nu)^{-1/2}
    -\nabla f(\xb_1)^\top\alpha_{1} \Vbh_{1}^{-1/2} (\gbh_1- \gb_1)-\nabla f(\xb_1)^\top\alpha_{1} \Vbh_{1}^{-1/2} (\gbt_1- \gbh_1)  \notag\\
    & \qquad  +2L\big\|\alpha_1\Vbh_1^{-1/2}\gbt_1\big\|_2^2 \bigg],
    \label{thm10}
\end{align}
where the first term of the last inequality holds because 
\begin{align}
    \EE \big[-\alpha_1 \nabla f(\xb_1)^\top\Vbh_{1}^{-1/2} \gb_1 \big]  
    & = \EE \big[ -\alpha_1 \nabla f(\xb_1)^\top\Vbh_{1}^{-1/2} \nabla f(\xb_1) \big]\notag\\
    & \leq - \alpha_1 (\Tilde{G}_\infty^2+\nu)^{-1/2} \EE[\|\nabla f(\xb_1)\|_2^2]. \notag 
\end{align}

For $t\geq 2$, substituting \eqref{eq:nablafzt} and \eqref{eq:fztlast2} into \eqref{eq:fzt}, taking expectation and rearranging terms, we have
\begin{align}
    &\EE\bigg[f(\zb_{t+1})+\frac{G\Tilde{G} \big\|\alpha_{t} \Vbh_{t}^{-1/2}\big\|_{\infty,\infty}}{1-\beta_1} - \bigg(f(\zb_t) + \frac{G \Tilde{G} \big\|\alpha_{t-1} \Vbh_{t-1}^{-1/2}\big\|_{\infty,\infty}}{1-\beta_1}\bigg)\bigg] \notag \\
    &\leq \EE\bigg[-\nabla f(\xb_t)^\top\alpha_{t-1} \Vbh_{t-1}^{-1/2} \gbt_t +  2L\big\|\alpha_t\Vbh_t^{-1/2}\gbt_t\big\|_2^2 + 3L \bigg(\frac{\beta_1}{1 - \beta_1}\bigg)^2\|\xb_t-\xb_{t-1}\|_2^2\bigg]\notag \\
    & = \EE\bigg[-\nabla f(\xb_t)^\top\alpha_{t-1} \Vbh_{t-1}^{-1/2} \gb_t- \nabla f(\xb_t)^\top\alpha_{t-1} \Vbh_{t-1}^{-1/2}(\gbh_t-\gb_t) - \nabla f(\xb_t)^\top\alpha_{t-1} \Vbh_{t-1}^{-1/2}(\gbt_t-\gbh_t)\notag\\
    & \quad + 2L\big\|\alpha_t\Vbh_t^{-1/2}\gbt_t\big\|_2^2 + 3L \bigg(\frac{\beta_1}{1 - \beta_1}\bigg)^2\big\|\alpha_{t-1}\Vbh_{t-1}^{-1/2}\mb_{t-1}\big\|_2^2 \bigg]\notag\\
    & \leq \EE\bigg[-\alpha_{t-1}\big\|\nabla f(\xb_t)\big\|_2^2 (\Tilde{G}_\infty^2+\nu)^{-1/2}+2L\big\|\alpha_t\Vbh_t^{-1/2}\gbt_t\big\|_2^2 + 3L \bigg(\frac{\beta_1}{1 - \beta_1}\bigg)^2\big\|\alpha_{t-1}\Vbh_{t-1}^{-1/2}\mb_{t-1}\big\|_2^2\notag\\
     & \quad+ \alpha_{t-1} \nu^{-1/2} G \|\gbh_t-\gb_t\|_2 +\alpha_{t-1} \nu^{-1/2} G \|\gbt_t-\gbh_t\|_2
    \bigg],\label{thm11}
\end{align}
where the equality holds because $\gbt_t=
\gb_t+\big(\gbh_t-\gb_t\big)+\big(\gbt_t-\gbh_t\big) $. The second inequality holds because of 
\begin{align}
    \EE \big[ -\nabla f(\xb_t)^\top\alpha_{t-1} \Vbh_{t-1}^{-1/2} \gb_t \big] & =\EE\big[-\nabla f(\xb_t)^\top\alpha_{t-1} \Vbh_{t-1}^{-1/2} \EE[\gb_t] \big] \notag\\
    & = \EE \big[-\nabla f(\xb_t)^\top\alpha_{t-1} \Vbh_{t-1}^{-1/2} \nabla f(\xb_t) \big] \notag\\
    & \leq -\alpha_{t-1}  (\Tilde{G}_\infty^2+\nu)^{-1/2} \EE[\big\|\nabla f(\xb_t)\big\|_2^2],\\
    \EE \big[- \nabla f(\xb_t)^\top\alpha_{t-1} \Vbh_{t-1}^{-1/2}(\gbh_t-\gb_t) \big]
    & \leq \alpha_{t-1} \nu^{-1/2} \big\|\nabla f(\xb_t)\big\|_2 \cdot \EE [\|\gbh_t-\gb_t\|_2] \notag\\
    & \leq \alpha_{t-1} \nu^{-1/2} G \cdot \EE [\|\gbh_t-\gb_t\|_2], \\
    \EE \big[- \nabla f(\xb_t)^\top\alpha_{t-1} \Vbh_{t-1}^{-1/2}(\gbt_t-\gbh_t) \big]
    & \leq \alpha_{t-1} \nu^{-1/2} \big\|\nabla f(\xb_t)\big\|_2 \cdot \EE \|\gbt_t-\gbh_t\|_2 \notag\\
    & \leq \alpha_{t-1} \nu^{-1/2} G \cdot \EE [\|\gbt_t-\gbh_t\|_2].
\end{align}
Summing up \eqref{thm11} for $t=2$ to $T$, rearranging and adding with \eqref{thm10}, we have
\begin{align}
    & \quad (\Tilde{G}_\infty^2+\nu)^{-1/2}
    \sum_{t=1}^T \alpha_t \EE\big[\big\|\nabla f(\xb_t)\big\|_2^2\big] \notag \\
    &\leq 
    \EE\bigg[f(\zb_1) + \frac{G \Tilde{G} \big\|\alpha_1 \Vbh_{1}^{-1/2}\big\|_{\infty,\infty}}{1-\beta_1}
    - \bigg(f(\zb_{T+1})+\frac{G \Tilde{G} \big\|\alpha_T \Vbh_{T}^{-1/2}\big\|_{\infty,\infty}}{1-\beta_1}\bigg)\bigg]\notag \\
    & \qquad + 2L\sum_{t=1}^T \EE\big[\big\|\alpha_t\Vbh_t^{-1/2}\gbt_t\big\|_2^2\big] + 3L \bigg(\frac{\beta_1}{1 - \beta_1}\bigg)^2\sum_{t=1}^T \EE\big[\big\|\alpha_{t-1}\Vbh_{t-1}^{-1/2}\mb_{t-1}\big\|_2^2\big] \notag\\
    & \qquad+ \alpha_{t-1} \nu^{-1/2} G\cdot \sum_{t=1}^T \EE [\|\gbh_t-\gb_t\|_2] +\alpha_{t-1} \nu^{-1/2} G \cdot \sum_{t=1}^T \EE [\|\gbt_t-\gbh_t\|_2] \notag\\
    & \leq \EE\bigg[\Delta f + \frac{\alpha_1 G \Tilde{G} \nu^{-1/2}}{1-\beta_1}\bigg] +2L\sum_{t=1}^T \EE\big[\big\|\alpha_t\Vbh_t^{-1/2}\gbt_t\big\|_2^2\big] + 3L \bigg(\frac{\beta_1}{1 - \beta_1}\bigg)^2\sum_{t=1}^T \EE\big[\big\|\alpha_{t-1}\Vbh_{t-1}^{-1/2}\mb_{t-1}\big\|_2^2\big] \notag\\
    & \qquad+ \alpha_{t-1} \nu^{-1/2} G \cdot \sum_{t=1}^T \EE [\|\gbh_t-\gb_t\|_2] +\alpha_{t-1} \nu^{-1/2} G \cdot \sum_{t=1}^T \EE [\|\gbt_t-\gbh_t\|_2], \label{nabla-gbh-gbt}
\end{align}
where $\Delta f = f(\xb_1)- \inf_x f(\xb)$.

By Lemma \ref{lm:g-gh-gt}, with the constant stepsize $\alpha_t = \alpha$, we have
\begin{align} \label{Vmend}
    & \sum_{t=1}^T \alpha_t^2 \EE\big[ \big\|\Hat{\Vb}_t^{-1/2}\mb_t\big\|_2^2\big] \leq \frac{32T\alpha^2 \Tilde{G}^2}{\nu} + \frac{4T\alpha^2 \sigma^2}{\nu n} + \frac{4\alpha^2}{\nu}\sum_{t=1}^T\EE[\|\nabla f(\xb_t)\|_2^2],
\end{align}
and
\begin{align} \label{Vgend}
    & \sum_{t=1}^T \alpha_t^2 \EE\big[ \big\|\Hat{\Vb}_t^{-1/2}\gbt_t\big\|_2^2\big] \leq \frac{32T\alpha^2 \Tilde{G}^2}{\nu} + \frac{4T\alpha^2\sigma^2}{\nu n} + \frac{4\alpha^2}{\nu}\sum_{t=1}^T\EE[\|\nabla f(\xb_t)\|_2^2].
\end{align}

By Lemma~\ref{lm:gb-gbh} and \ref{lm:gbh-gbt}, we conclude the last two terms of \eqref{nabla-gbh-gbt} as
\begin{align} \label{gb-gbh-gbt}
    & \alpha_{t-1} \nu^{-1/2} G \cdot \sum_{t=1}^T  \EE[ \|\gbh_t-\gb_t\|_2] +
    \alpha_{t-1} \nu^{-1/2} G \cdot \sum_{t=1}^T \EE[\|\gbt_t-\gbh_t\|_2] \notag\\
    & \leq \frac{\sqrt{\pi}}{1-\sqrt{\pi}} L T G \Tilde{G} \alpha_1\alpha_{t-1} \sqrt{d} \nu^{-1} + \frac{\sqrt{\pi}(1+\sqrt{\pi})}{(1-\sqrt{\pi})^2}  L T G\Tilde{G} \alpha_1\alpha_{t-1} \sqrt{d} \nu^{-1} \notag\\
    & \qquad + \frac{2\sigma\sqrt{\pi}}{1-\sqrt{\pi}} T G \alpha_{t-1} \nu^{-1/2} \sqrt{\frac{N-\tau}{\tau (N-1)}}  + \frac{2\sigma\sqrt{\pi}(1+\sqrt{\pi})}{(1-\sqrt{\pi})^2} T G\alpha_{t-1} \nu^{-1/2} \sqrt{\frac{N-\tau}{\tau (N-1)}} \notag\\
    & \leq \frac{2\sqrt{\pi}}{\nu (1-\sqrt{\pi})^2} L G\Tilde{G} T \alpha_1\alpha_{t-1} \sqrt{d} + \frac{4\sqrt{\pi}}{\sqrt{\nu}(1-\sqrt{\pi})^2}  G T \alpha_{t-1} \cdot \sigma \sqrt{\frac{N-\tau}{\tau (N-1)}} .
\end{align}
Substituting \eqref{Vmend}, \eqref{Vgend} and \eqref{gb-gbh-gbt} into \eqref{nabla-gbh-gbt}, set the constant stepsize $\alpha_t=\alpha$ and rearranging, we have

\begin{align}
    & \quad (\Tilde{G}_\infty^2+\nu)^{-1/2}
    \sum_{t=1}^T \alpha \EE\big[\big\|\nabla f(\xb_t)\big\|_2^2\big] \notag \\
    & \leq \EE\bigg[\Delta f + \frac{\alpha G \Tilde{G} \nu^{-1/2}}{1-\beta_1}\bigg] + \bigg[2L+3L\bigg(\frac{\beta_1}{1-\beta_1}\bigg)^2\bigg] \frac{32T\alpha^2 \Tilde{G}^2}{\nu} \notag\\
    & \qquad +\bigg[2L+3L\bigg(\frac{\beta_1}{1-\beta_1}\bigg)^2\bigg] \frac{4T\alpha^2\sigma^2}{\nu n} +  \bigg[2L+3L\bigg(\frac{\beta_1}{1-\beta_1}\bigg)^2\bigg] \frac{4\alpha^2}{\nu}\sum_{t=1}^T\EE[\|\nabla f(\xb_t)\|_2^2] \notag\\
    & \qquad + \frac{2\sqrt{\pi}}{\nu (1-\sqrt{\pi})^2} L G\Tilde{G} T\alpha^2 \sqrt{d} +  \frac{4\sqrt{\pi}}{\sqrt{\nu}(1-\sqrt{\pi})^2} G T \alpha \cdot\sigma \sqrt{\frac{N-\tau}{\tau (N-1)}}.
\end{align} 
Hence we obtain
\begin{align}
    & \Bigg[(\alpha(\Tilde{G}_\infty^2+\nu)^{-1/2} - \bigg[2L+3L\bigg(\frac{\beta_1}{1-\beta_1}\bigg)^2\bigg]\frac{4\alpha^2}{\nu} \Bigg] \sum_{t=1}^T \alpha \EE\big[\big\|\nabla f(\xb_t)\big\|_2^2\big] \notag\\
    & \leq \EE\bigg[\Delta f + \frac{\alpha G \Tilde{G} \nu^{-1/2}}{1-\beta_1}\bigg] + \bigg[2L+3L\bigg(\frac{\beta_1}{1-\beta_1}\bigg)^2\bigg] \frac{32T\alpha^2 \Tilde{G}^2}{\nu} + \bigg[2L+3L\bigg(\frac{\beta_1}{1-\beta_1}\bigg)^2\bigg] \frac{4T\alpha^2\sigma^2}{\nu n} \notag\\
    & \qquad + \frac{2\sqrt{\pi}}{\nu(1-\sqrt{\pi})^2} L G\Tilde{G} T\alpha^2 \sqrt{d} +  \frac{4\sqrt{\pi}}{\sqrt{\nu}(1-\sqrt{\pi})^2} G T \alpha \cdot \sigma\sqrt{\frac{N-\tau}{\tau (N-1)}}.
\end{align}
Let $[2L+3L(\frac{\beta_1}{1-\beta_1})^2]\frac{4\alpha^2}{\nu} \leq \frac{\alpha}{2(\Tilde{G}_\infty^2+\nu)^{1/2}}$, we obtain the condition on learning rate $\alpha \leq \frac{\nu}{8(\Tilde{G}_\infty^2+\nu)^{1/2}[2L+3L(\frac{\beta_1}{1-\beta_1})^2]} = \frac{\nu}{4CC_1}$. Then we get
\begin{align}
    \min_{t\in[T]} \EE\big[\big\|\nabla f(\xb_t)\big\|_2^2\big] \leq\frac{1}{T}\sum_{t=1}^T \EE\big[\big\|\nabla f(\xb_t)\big\|_2^2\big] \leq \frac{M_1}{T \alpha} + \frac{M_2}{T} + \alpha M_3 + \frac{\alpha M_4 \sigma^2}{n} + M_5 \sigma \sqrt{\frac{N-\tau}{\tau(N-1)}},
\end{align}
where
\begin{align*}
    M_1 & = C\cdot\Delta f,\quad
    M_2 = \frac{C G \Tilde{G}}{(1-\beta_1)\sqrt{\nu}},\quad
    M_3 = \frac{32CC_1 \Tilde{G}^2}{\nu} + \frac{2\sqrt{\pi}C L G \Tilde{G} \sqrt{d}}{\nu(1-\sqrt{\pi})^2}, \notag\\
    M_4 & = \frac{4CC_1}{\nu}, \quad M_5 = \frac{4\sqrt{\pi} C G}{\sqrt{\nu}(1-\sqrt{\pi})^2},
\end{align*}
and 
\begin{align*}
    \Delta f & = f(\xb_1)- \inf_x f(\xb), \Tilde{G} = C_2G, \Tilde{G}_\infty =  C_2G_\infty, \\
    C & = 2 (\Tilde{G}_\infty^2+\nu)^{1/2}, C_1 = 2L+3L\bigg(\frac{\beta_1}{1-\beta_1}\bigg)^2, 
    C_2 = \frac{(1+\sqrt{\pi})^2}{(1-\sqrt{\pi})^2}.
\end{align*}
After $T$ iterations, it requires 
\begin{align*}
    \frac{M_1}{T\alpha}+\alpha \bigg(M_3+\frac{M_4\sigma^2}{n}\bigg) \leq \frac{\epsilon}{3},\quad \frac{M_2}{T} \leq \frac{\epsilon}{3}, \quad M_5 \sigma \sqrt{\frac{N-\tau}{\tau(N-1)}} \leq \frac{\epsilon}{3}.
\end{align*}
Note that $\frac{n\epsilon}{6nM_3+6M_4\sigma^2} \leq \frac{\epsilon \nu}{32CC_1 \tilde{G}^2} \leq \frac{\nu}{4CC_1} $, therefore, we get
\begin{align}
    \alpha & \leq \frac{n\epsilon}{6nM_3+6M_4\sigma^2}, \quad
    \tau \geq \bigg\lceil \frac{N(3M_5 \sigma)^2}{(N-1)\epsilon^2+ (3M_5 \sigma)^2}\bigg\rceil,\quad
    T \geq \bigg\lceil\frac{36 M_1 M_3}{\epsilon^2} +\frac{36 M_1 M_4 \sigma^2}{n\epsilon^2}+ \frac{3M_2}{\epsilon} \bigg\rceil.
\end{align}
This concludes the proof.

\end{proof}

\section{Supporting Lemmas for Algorithm~\ref{alg:compadam1}}

\begin{proof}[Proof of Lemma~\ref{lm:mVbound-main}]
Since by Assumption~\ref{as:smooth}, $f$ has $G$-bounded stochastic gradient, for any $\xb$ and $\xi$, there is $\|\nabla f(\xb; \xi)\|_2 \leq G$. We have 
\begin{align}
     \|\nabla f(\xb)\|_2 = \|\EE_\xi\nabla f(\xb;\xi)\|_2 \leq \EE_\xi\|\nabla f(\xb;\xi)\|_2 \leq G. \notag
\end{align}

First we prove the upper bound for compressed gradient on $i$-th worker by induction.  For $t=1$, it is obvious that $\big\|\gbh^{(i)}_1\big\|_2\leq \big\|\cC(\gb^{(i)}_1)\big\|_2 \leq \big\|\gb^{(i)}_1\big\|_2 \leq G$.
For $t>1$, suppose we have $\big\|\gbh_{t}^{(i)}\big\|_2 \leq \frac{1+\sqrt{\pi}}{1-\sqrt{\pi}} G=\hat{G}$, our goal is to show that $\big\|\gbh_{t+1}^{(i)}\big\|_2 \leq \frac{1+\sqrt{\pi}}{1-\sqrt{\pi}} G$:
\begin{align}
    \big\|\gbh_{t+1}^{(i)}\big\|_2& = \big\|\gbh_t^{(i)}+\cC(\gb_{t+1}^{(i)}-\gbh_t^{(i)})\big\|_2 \notag\\
    & = \big\|\gbh_t^{(i)}+\cC(\gb_{t+1}^{(i)}-\gbh_t^{(i)})-\gb_{t+1}^{(i)}+\gb_{t+1}^{(i)}\big\|_2 \notag\\
    & \leq \big\|\gbh_t^{(i)}+\cC(\gb_{t+1}^{(i)}-\gbh_t^{(i)})-\gb_{t+1}^{(i)}\big\|_2+\big\|\gb_{t+1}^{(i)}\big\|_2 \notag\\
    & \leq \sqrt{\pi}\big\| \gb_{t+1}^{(i)}-\gbh_t^{(i)}\big\|_2+\big\|\gb_{t+1}^{(i)}\big\|_2 \notag\\
    & \leq \sqrt{\pi}(G+\frac{1+\sqrt{\pi}}{1-\sqrt{\pi}} G)+G\notag\\
    & \leq \frac{1+\sqrt{\pi}}{1-\sqrt{\pi}} G:=\hat{G},
\end{align}
where the first inequality holds due to triangle inequality and the second inequality holds by Assumption~\ref{as:compressor}.

Then we bound the aggregated compressed gradient $\gbh_t$:
\begin{align}
\|\gbh_t\|_2=\Big\| \frac{1}{n} \sum_{i=1}^n\gbh_t^{(i)}  \Big\|_2 \leq \frac{1}{n} \sum_{i=1}^n \big\|\gbh_t^{(i)}  \big\|_2 \leq \hat{G}.
\end{align}

Then we prove the upper bound for compressed aggregated gradient $\gbt$ on the server by induction. Similarly, for $t=1$, we have $\|\gbt_1\|_2\leq \|\cC(\gbh_1)\|_2 \leq \|\gbh_1\|_2 \leq \hat{G}$. For $t>1$, suppose we have $\|\gbt_t\|_2 \leq \frac{1+\sqrt{\pi}}{1-\sqrt{\pi}}\hat{G}$, then we have 

\begin{align}
\|\gbt_{t+1}\|_2 & = \|\gbt_t+\cC(\gbh_{t+1}-\gbt_t)\|_2 \notag\\
& \leq \|\gbt_t+\cC(\gbh_{t+1}-\gbt_t)-\gbh_{t+1}\|_2+ \|\gbh_{t+1}\|_2 \notag\\
& \leq \sqrt{\pi} \|\gbh_{t+1}-\gbt_t\|_2 + \|\gbh_{t+1}\|_2 \notag\\
& \leq \frac{1+\sqrt{\pi}}{1-\sqrt{\pi}} \hat{G}:=\Tilde{G}.
\end{align}
Next we bound $\|\mb_t\|_2$ by induction. We prove this by induction. For $t=0$, we have $\|\mb_0\|_2 = 0 \leq \Tilde{G}$. For $t \geq 1$, suppose that $\|\mb_t\|_2 \leq \Tilde{G}$, then for $\mb_{t+1}$, we have
\begin{align*}
    \|\mb_{t+1}\|_2 &=  \|\beta_1\mb_{t}+(1-\beta_1)\gbt_{t+1}\|_2\\ 
    &\leq \beta_1\|\mb_t\|_2  + (1-\beta_1)\|\gbt_{t+1}\|_2 \\
    &\leq \beta_1\Tilde{G} + (1-\beta_1) \Tilde{G} \\
    &= \Tilde{G}.\notag
\end{align*}
Thus, for any $t \geq 0$, we have $\|\mb_t\|_2 \leq \Tilde{G}$. 

Finally we bound $\|\hat{\vb}_t\|_2$ again by induction. For $t=1$, we have $\|\vb_0\|_2 = \|\hat{\vb}_0\|_2=0\leq \Tilde{G}^2$. For $t>1$, suppose we have $\|\hat{\vb}_t\|_2 \leq \Tilde{G}^2$, then for $\vb_{t+1}$,
\begin{align*}
     \|\vb_{t+1}\|_2&= \|\beta_2\vb_t+(1-\beta_2)\gbt_{t+1}^2\|_2 \\
    &\leq \beta_2\|\vb_t\|_2+(1-\beta_2)\|\gbt_{t+1}^2\|_2 \\
    &\leq \beta_2 \Tilde{G}^2+(1-\beta_2)\Tilde{G}^2\\
    &=\Tilde{G}^2,\notag
\end{align*}
and by definition, we have $\|\hat\vb_{t+1}\|_2 = \max\{\|\hat\vb_t\|_2, \|\vb_{t+1}\|_2\} \leq \Tilde{G}^2$. 
Thus, for any $t \geq 0$, we have $\|\hat{\vb}_t\|_2 \leq \Tilde{G}^2$.
The proof of $\ell_\infty$-boundedness is similar to the proof of $\ell_2$-boundedness. 
This concludes the proof. 
\end{proof}

\begin{proof}[Proof of Lemma~\ref{lm:mini-batch-var}] 

By definition, we have 
\begin{align*}
    \EE [\|\gb_t^{(i)} - \nabla f_i(\xb_t)\|_2^2] = \EE\bigg[\bigg\|\frac{1}{\tau}\sum_{j \in \cS_\tau^i} \gb_t^{(i,j)} - \nabla f_i(\xb_t) \bigg\|_2^2 \bigg],
\end{align*}
where $\cS_\tau^i$ is a set of batch index with $\tau$ samples on the $i$-th worker, then for $\forall j \in \cS_\tau^i$, we have 
\begin{align}\label{eq:gi-fi}
    & \qquad \EE\bigg[\bigg\|\frac{1}{\tau}\sum_{j \in \cS_\tau^i} \gb_t^{(i,j)} - \nabla f_i(\xb_t) \bigg\|_2^2 \bigg] \notag\\
    & = \frac{1}{\tau^2} \EE\bigg[\sum_{j,j'\in \cS_\tau^i}[\gb_t^{(i,j)} - \nabla f_i(\xb_t)]^\top [\gb_t^{(i,j')} - \nabla f_i(\xb_t)] \bigg] \notag\\
    & = \frac{1}{\tau^2} \EE\bigg[\sum_{j\neq j' \in \cS_\tau^i}[\gb_t^{(i,j)} - \nabla f_i(\xb_t)]^\top [\gb_t^{(i,j')} - \nabla f_i(\xb_t)] \bigg] + \frac{1}{\tau} \EE\big[\big\|\gb_t^{(i,j)} - \nabla f_i(\xb_t)\big\|_2^2\big] \notag\\
    & = \frac{\tau-1}{\tau N(N-1)} \bigg[\sum_{j\neq j'\in[N] }[\gb_t^{(i,j)} - \nabla f_i(\xb_t)]^\top [\gb_t^{(i,j')} - \nabla f_i(\xb_t)] \bigg] + \frac{1}{\tau} \EE\big[\big\|\gb_t^{(i,j)} - \nabla f_i(\xb_t)\big\|_2^2\big] \notag\\
    & = \frac{\tau-1}{\tau N(N-1)} \bigg[\sum_{j, j'\in[N]}[\gb_t^{(i,j)} - \nabla f_i(\xb_t)]^\top [\gb_t^{(i,j')} - \nabla f_i(\xb_t)] \bigg]\notag\\
    & \qquad  -\frac{\tau-1}{\tau(N-1)}\EE\big[\big\|\gb_t^{(i,j)} - \nabla f_i(\xb_t)\big\|_2^2\big] + \frac{1}{\tau} \EE\big[\big\|\gb_t^{(i,j)} - \nabla f_i(\xb_t)\big\|_2^2\big] \notag\\
    & = \frac{N-\tau}{\tau (N-1)}\EE\big[\big\|\gb_t^{(i,j)} - \nabla f_i(\xb_t)\big\|_2^2\big] \notag\\
    & \leq \frac{N-\tau}{\tau (N-1)} \sigma^2,
\end{align}
where the second equality holds by the sampling strategy, the third one follows by $\PP\{j,j' \in \cS_\tau^i \}= \frac{\tau(\tau-1)}{N(N-1)}$. The inequality holds due to Assumption \ref{as:bound-variance}. Therefore, we have 
\begin{align*}
    \EE\bigg[\bigg\|\frac{1}{\tau}\sum_{j \in \cS_\tau^i} \gb_t^{(i,j)} - \nabla f_i(\xb_t) \bigg\|_2^2 \bigg] \leq \frac{N-\tau}{\tau (N-1)} \sigma^2.
\end{align*}
This concludes the proof.

\end{proof}

\begin{proof}[Proof of Lemma~\ref{lm:g-gh-gt}]

For the squared norm of compressed gradient, we have
\begin{align}
    \EE[\|\gbt_t\|_2^2] & = \EE[\|\gbt_t - \gbh_t + \gbh_t - \gb_t + \gb_t - \nabla f(\xb_t) + \nabla f(\xb_t)\|_2^2] \notag\\
    & \leq 4\EE[\|\gb_t - \nabla f(\xb_t)\|_2^2] + 4\EE[\|\gbt_t - \gbh_t\|_2^2] + 4\EE[\|\gbh_t - \gb_t\|_2^2] + 4\EE[\|\nabla f(\xb_t)\|_2^2].
\end{align}
For the stochastic variance, we have
\begin{align} \label{eq:Eg-f}
    \EE[\|\gb_t - \nabla f(\xb_t)\|_2^2] & = \EE\bigg[\bigg\|\frac{1}{n}\sum_{i=1}^n [\gb_t^{(i)} - \nabla f_i(\xb_t)] \bigg\|_2^2\bigg]\notag\\
    & = \frac{1}{n^2} \sum_{i=1}^n \EE [\|\gb_t^{(i)} - \nabla f_i(\xb_t)\|_2^2] \notag\\
    & = \frac{1}{n} \EE [\|\gb_t^{(i)} - \nabla f_i(\xb_t)\|_2^2] \notag\\
    & \leq \frac{\sigma^2(N-\tau)}{n\tau(N-1)} \notag\\
    & \leq \frac{\sigma^2}{n},
\end{align}
where the first inequality holds by Lemma \ref{lm:mini-batch-var}, and the second one holds by $1\leq \tau\leq N$. For the squared norm of gradient compression error, we have 
\begin{align}
    \EE[\|\gbh_t-\gb_t\|_2^2] & \leq \EE[\|\gbh_t\|_2^2] + \EE[\|\gb_t\|_2^2] \notag\\
    & \leq 2\hat{G}^2 + 2G^2,
\end{align}
where the second inequality follows by Lemma \ref{lm:mVbound-main}. Similarly, for the squared norm of gradient compression error on the server side, we have
\begin{align}
    \EE[\|\gbt_t-\gbh_t\|_2^2] & \leq \EE[\|\gbt_t\|_2^2] + \EE[\|\gbh_t\|_2^2] \notag\\
    & \leq 2\hat{G}^2 + 2\Tilde{G}^2,
\end{align}
where the second inequality follows by Lemma \ref{lm:mVbound-main}.
Summing over $t=1,...,T$, we have
\begin{align}
    \sum_{t=1}^T \EE[\|\gbt_t\|_2^2] & \leq 4T(2\hat{G}^2 + 2\Tilde{G}^2)+4T(2\hat{G}^2 + 2G^2) + \frac{4T\sigma^2}{n} + 4\sum_{t=1}^T\EE[\|\nabla f(\xb_t)\|_2^2] \notag\\
    & \leq 32T\Tilde{G}^2+ \frac{4T\sigma^2}{n} + 4\sum_{t=1}^T\EE[\|\nabla f(\xb_t)\|_2^2],
\end{align}
where the second inequality follows by $G\leq\hat{G}\leq\tilde{G}$. By the updating rule, we have
\begin{align}
    \EE[\|\mb_{t}\|^{2}] 
    & = \EE\bigg[\|(1-\beta_1) \sum_{\tau=1}^{t} \beta_{1}^{t-\tau} \gbt_{\tau}\|^{2}\bigg] \notag\\
    & \leq (1-\beta_1)^2 \sum_{i=1}^{d} \EE\bigg[\bigg(\sum_{\tau=1}^{t} \beta_{1}^{t-\tau} \gbt_{\tau, i}\bigg)^{2}\bigg] \notag\\
    & \leq (1-\beta_1)^{2} \sum_{i=1}^{d} \EE\bigg[\bigg(\sum_{\tau=1}^{t} \beta_{1}^{t-\tau}\bigg)\bigg(\sum_{\tau=1}^{t} \beta_{1}^{t-\tau} \gbt_{\tau, i}^{2}\bigg)\bigg] \notag \\
    & \leq (1-\beta_1) \sum_{\tau=1}^{t} \beta_{1}^{t-\tau} \EE[\|\gbt_{\tau}\|^{2}] \notag\\
    & \leq 32\Tilde{G}^2 + \frac{4\sigma^2}{n} + 4(1-\beta_1)\sum_{\tau=1}^t \beta_1^{t-\tau}\EE[\|\nabla f(\xb_t)\|_2^2],
\end{align}
where the second inequality holds by applying Cauchy-Schwarz inequality, and the third inequality holds by summation of series. 
Hence summing over $t=1,\cdots, T$, we have
\begin{align}
    \sum_{t=1}^T \EE\|\mb_t\|^2 & \leq 32 T 
    \Tilde{G}^2 + \frac{4T\sigma^2}{n} + 4\sum_{t=1}^T\EE[\|\nabla f(\xb_t)\|_2^2].
\end{align}
Therefore, with the condition $\alpha_t = \alpha$ in Lemma \ref{lm:g-gh-gt}, we have
\begin{align}
    & \sum_{t=1}^T \alpha_t^2 \EE\big[ \big\|\Hat{\Vb}_t^{-1/2}\mb_t\big\|_2^2\big] 
    \leq \frac{\alpha^2}{\nu} \sum_{t=1}^T \EE[\|\mb_t\|_2^2] \leq \frac{32T\alpha^2 \Tilde{G}^2}{\nu} + \frac{4T\alpha^2\sigma^2}{\nu n} + \frac{4\alpha^2}{\nu}\sum_{t=1}^T\EE[\|\nabla f(\xb_t)\|_2^2].
\end{align}
We also have
\begin{align}
    & \sum_{t=1}^T \alpha_t^2 \EE\big[ \big\|\Hat{\Vb}_t^{-1/2}\gbt_t\big\|_2^2\big] 
     \leq \frac{\alpha^2}{\nu} \sum_{t=1}^T \EE[\|\gbt_t\|_2^2] \leq \frac{32T\alpha^2 \Tilde{G}^2}{\nu} + \frac{4T\alpha^2\sigma^2}{\nu n} + \frac{4\alpha^2}{\nu}\sum_{t=1}^T\EE[\|\nabla f(\xb_t)\|_2^2].
\end{align}

\end{proof}

\begin{proof}[Proof of Lemma~\ref{lm:gb-gbh}]
Considering on the $i$-th worker, we have
\begin{align}
\EE [\|\gbh_t^{(i)}-\gb_t^{(i)}\|_2] 
& = \EE [\|\gbh_{t-1}^{(i)}+ \cC(\gb_t^{(i)} - \gbh_{t-1}^{(i)})-\gb_t^{(i)}\|_2] \notag\\
& \leq \sqrt{\pi} \EE [\|\gb_t^{(i)}-\gbh_{t-1}^{(i)}\|_2] \notag\\
& \leq \sqrt{\pi} \EE [\|\gbh_{t-1}^{(i)}-\gb_{t-1}^{(i)}\|_2]  +\sqrt{\pi} \EE [\|\gb_t^{(i)}-\gb_{t-1}^{(i)}\|_2],  \label{eq:gbhi-nfi}
\end{align}
where the first inequality holds because of Assumption~\ref{as:compressor}.
For the compression error between the local stochastic gradient and the gradient of loss function, by Lemma \ref{lm:mini-batch-var} we have
\begin{align*}
\EE [\|\gb_{t}^{(i)}- \nabla f_i(\xb_{t}) \|_2^2] & = \frac{N-\tau}{\tau(N-1)}\sigma^2.
\end{align*} 
Since for all variables $\mathbf{X}$, there is $\big[\EE \mathbf{X} \big]^2 \leq \EE \mathbf{X}^2$, we have
\begin{align*}
    \EE [\|\gb_{t}^{(i)}- \nabla f_i(\xb_{t}) \|_2] \leq \big[\EE \|\gb_{t}^{(i)}- \nabla f_i(\xb_{t}) \|_2^2\big]^{\frac{1}{2}} \leq \sigma \sqrt{\frac{N-\tau}{\tau (N-1)}} .
\end{align*}
Then the expectation of gradient compression error on all workers is
\begin{align}\label{eq:gbh-gb}
    \EE [\|\gbh_t-\gb_t\|_2] 
    & =\EE \bigg[\bigg\|\frac{1}{n} \sum_{i=1}^n \gbh_t^{(i)} -\frac{1}{n} \sum_{i=1}^n \gb_t^{(i)}\bigg\|_2\bigg] \notag\\
    & \leq \frac{1}{n}\sum_{i=1}^n \EE_{\xi^{(i)}} [\|\gbh_t^{(i)}-\gb_t^{(i)}\|_2] \notag\\
    & \leq \sqrt{\pi} \frac{1}{n}\sum_{i=1}^n \EE_{\xi^{(i)}} [\|\gbh_{t-1}^{(i)}-\gb_{t-1}^{(i)}\|_2] + \sqrt{\pi}  \frac{1}{n}\sum_{i=1}^n \EE_{\xi^{(i)}} [\|\gb_{t}^{(i)}- \gb_{t-1}^{(i)} \|_2] \notag \\
    & \leq \sqrt{\pi} \frac{1}{n}\sum_{i=1}^n \EE [\|\gbh_{t-1}^{(i)}-\gb_{t-1}^{(i)}\|_2]+ \sqrt{\pi}  \frac{1}{n}\sum_{i=1}^n \EE [\|\nabla f_i(\xb_t)- \nabla f_i(\xb_{t-1})\|_2]  \notag\\
    & \qquad + \sqrt{\pi} \frac{1}{n}\sum_{i=1}^n \EE [\|\gb_{t}^{(i)}- \nabla f_i(\xb_t) \|_2] + \sqrt{\pi}  \frac{1}{n}\sum_{i=1}^n \EE [\|\gb_{t-1}^{(i)}- \nabla f_i(\xb_{t-1}) \|_2] \notag \\
    & \leq \sqrt{\pi} \frac{1}{n}\sum_{i=1}^n \EE [\|\gbh_{t-1}^{(i)}-\gb_{t-1}^{(i)}\|_2]  +\sqrt{\pi} L \|\xb_t-\xb_{t-1}\|_2 + 2\sqrt{\pi}\sigma \sqrt{\frac{N-\tau}{\tau (N-1)}},
\end{align}
where the first inequality holds by extended triangle inequality, the second inequality holds by applying \eqref{eq:gbhi-nfi} and the last inequality holds by $L$-smoothness. Telescoping the right hand side of \eqref{eq:gbh-gb}, we have 
\begin{align}
    \EE [\|\gbh_t-\gb_t\|_2] 
    & \leq \sqrt{\pi} \frac{1}{n}\sum_{i=1}^n \EE [\|\gbh_{t-1}^{(i)}-\gb_{t-1}^{(i)}\|_2]  +\sqrt{\pi} L \|\xb_t-\xb_{t-1}\|_2 + 2\sqrt{\pi}\sigma \sqrt{\frac{N-\tau}{\tau (N-1)}} \notag\\
    & \leq L \sum_{j=1}^t (\sqrt{\pi})^{t+1-j}  \|\xb_j-\xb_{j-1}\|_2 +2\sigma \sqrt{\frac{N-\tau}{\tau (N-1)}} \sum_{j=1}^t (\sqrt{\pi})^{t+1-j}  \notag\\
    & = L \sum_{j=1}^t (\sqrt{\pi})^{t+1-j}  \big\|\alpha_{j-1} \Vbh_{j-1}^{-1/2} \mb_{j-1}\big\|_2 +2\sigma \sqrt{\frac{N-\tau}{\tau (N-1)}} \sum_{j=1}^t (\sqrt{\pi})^{t+1-j} \notag\\
    & \leq L\sum_{j=1}^t (\sqrt{\pi})^{t+1-j}  \big\| \alpha_{j-1} \Vbh_{j-1}^{-1/2}\big\|_2 \big\|\mb_{j-1}\big\|_2 + 2\sigma \sqrt{\frac{N-\tau}{\tau (N-1)}} \sum_{j=1}^t (\sqrt{\pi})^{t+1-j}  \notag\\
    & \leq L \Tilde{G} \sum_{j=2}^t (\sqrt{\pi})^{t+1-j}  \big\| \alpha_{j-1}\Vbh_{j-1}^{-1/2}\big\|_2 + 2\sigma \sqrt{\frac{N-\tau}{\tau (N-1)}} \sum_{j=1}^t (\sqrt{\pi})^{t+1-j} , \label{g=theta*v}
\end{align}

where the second inequality holds by $L$-smooth and the recursion \eqref{eq:gbh-gb}. The third inequality holds by taking iteration of $\|\gbh_t -\gb_t\|$. The forth inequality holds by the update rule of $\xb$ and for matrix $\mathbf{A}$ and vector $\xb$, we have $\|\mathbf{A}\xb\|_2\leq\|\mathbf{A}\|_2\|\xb\|_2$.
The last inequality holds because we set the initial momentum $\mb_0=0$ and by Lemma~\ref{lm:mVbound-main}, there is an upper bound $\|\mb_t\|_2\leq \Tilde{G}$. 

Since $\alpha_{j-1} \Vbh_{j-1}^{-1/2} \succeq \alpha_{j} \Vbh_{j}^{-1/2} \succeq 0$, we have 
\begin{align}
    \EE [\|\gbh_t-\gb_t\|_2] & \leq L \Tilde{G} \sum_{j=2}^t (\sqrt{\pi})^{t+1-j}  \big\| \alpha_{1}\Vbh_{1}^{-1/2}\big\|_2 + 2\sigma \sqrt{\frac{N-\tau}{\tau (N-1)}} \sum_{j=1}^t (\sqrt{\pi})^{t+1-j}\notag\\
    & \leq  \frac{\sqrt{\pi}}{1-\sqrt{\pi}} L \Tilde{G}\alpha_1 \sqrt{d} \nu^{-1/2}+\frac{2\sigma\sqrt{\pi}}{1-\sqrt{\pi}} \sqrt{\frac{N-\tau}{n\tau (N-1)}},
\end{align}
where the last inequality holds because $\big\|\Vbh_{1}^{-1/2}\big\|_2 \leq \sqrt{d} \nu^{-1/2}$. This completes the proof.
\end{proof}

\begin{proof}[Proof of Lemma~\ref{lm:gbh-gbt}] 
By Assumption~\ref{as:compressor}, we have 
\begin{align}\label{gbt-gbh1}
    \EE[\|\gbt_t-\gbh_t\|_2] 
    & \leq \EE [\|\gbt_{t-1}+\cC(\gbh_t-\gbt_{t-1})-\gbh_{t}\|_2] \notag\\
    &  \leq \sqrt{\pi} \EE[\|\gbh_t-\gbt_{t-1}\|_2] \notag\\
    & \leq \sqrt{\pi} \EE [\|\gbh_{t-1}-\gbt_{t-1}\|_2] + \sqrt{\pi} \EE[\|\gbh_t-\gbh_{t-1}\|_2] \notag\\
    & \leq \sum_{j=1}^t (\sqrt{\pi})^{t+1-j} \cdot \EE[\|\gbh_j-\gbh_{j-1}\|_2],
\end{align}
the last inequality holds by telescoping the right hand side of the previous inequality, with the similar approach in \eqref{g=theta*v}, we have
\begin{align}\label{gbhj-gbhj-1}
    \EE [\|\gbh_j-\gbh_{j-1}\|_2] & = \EE \bigg[\bigg\| \frac{1}{n}\sum_{i=1}^n \gbh_j^{(i)}-\frac{1}{n}\sum_{i=1}^n \gbh_{j-1}^{(i)}\bigg\|_2\bigg] \notag\\
    & = \EE \bigg[\bigg\|\frac{1}{n}\sum_{i=1}^n \cC(\gb_{j}^{(i)}-\gbh_{j-1}^{(i)}) \bigg\|_2\bigg] \notag\\
    & \leq \frac{1}{n}\sum_{i=1}^n \EE \big[\big\| \cC(\gb_{j}^{(i)}-\gbh_{j-1}^{(i)}) \big\|_2\big] \notag\\
    & \leq \frac{1+\sqrt{\pi}}{n}\sum_{i=1}^n \EE \big[\big\| \gb_{j}^{(i)}-\gbh_{j-1}^{(i)}\big\|_2\big] \notag\\
    & \stackrel{(a)}{\leq} \frac{1+\sqrt{\pi}}{n}\sum_{i=1}^n \EE \big[\big\|\gbh_{j-1}^{(i)}-\gb_{j-1}^{(i)}\big\|_2\big] +  \frac{1+\sqrt{\pi}}{n}\sum_{i=1}^n \EE \big[\big\| \gb_{j-1}^{(i)}-\gb_{j}^{(i)}\big\|_2\big],
\end{align}
where we apply Jensen's inequality to get the first inequality. The second inequality holds due to Lemma~\ref{lm: compressor}, the third inequality holds by triangle inequality, then similar to previous proof, we have
\begin{align}
    \EE[\|\gbh_j-\gbh_{j-1}\|_2] 
    & \leq \frac{1+\sqrt{\pi}}{n}\sum_{i=1}^n \EE \big[\big\|\gbh_{j-1}^{(i)}-\gb_{j-1}^{(i)}\big\|_2\big]+ \frac{1+\sqrt{\pi}}{n}\sum_{i=1}^n \EE \big[\big\| \nabla f_i(\xb_{j-1})-\nabla f_i(\xb_j)\big\|_2\big]\notag\\
    & \qquad + \frac{1+\sqrt{\pi}}{n}\sum_{i=1}^n \EE \big[\big\| \gb_{j}^{(i)}-\nabla f_i(\xb_j)\big\|_2\big] + \frac{1+\sqrt{\pi}}{n}\sum_{i=1}^n \EE \big\| \gb_{j-1}^{(i)}-\nabla f_i(\xb_{j-1})\big\|_2\notag\\
    & \leq \frac{(1+\sqrt{\pi})(1-\sqrt{\pi}^j)}{1-\sqrt{\pi}}L \Tilde{G} \alpha_1  \big\|\Vbh_{1}^{-1/2}\big\|_2+ \frac{(1+\sqrt{\pi})(1-\sqrt{\pi}^j)}{1-\sqrt{\pi}}\cdot 2\sigma \sqrt{\frac{N-\tau}{\tau (N-1)}} \notag\\
    & \leq \frac{1+\sqrt{\pi}}{1-\sqrt{\pi}}L \Tilde{G} \alpha_1  \big\|\Vbh_{1}^{-1/2}\big\|_2+ \frac{1+\sqrt{\pi}}{1-\sqrt{\pi}} \cdot 2\sigma \sqrt{\frac{N-\tau}{\tau (N-1)}},
\end{align}
where inequalities hold by proof of Lemma~\ref{lm:gb-gbh}.
Substituting \eqref{gbhj-gbhj-1} into \eqref{gbt-gbh1}
\begin{align} \label{gbt-gbh2}
    \EE [\|\gbt_t-\gbh_t\|_2] & \leq  \frac{1+\sqrt{\pi}}{1-\sqrt{\pi}} L \Tilde{G} \alpha_1 \cdot \big\|\Vbh_{1}^{-1/2}\big\|_2 \sum_{j=1}^t (\sqrt{\pi})^{t+1-j}+ \frac{1+\sqrt{\pi}}{1-\sqrt{\pi}} \cdot 2\sigma\sqrt{\frac{N-\tau}{\tau (N-1)}} \sum_{j=1}^t (\sqrt{\pi})^{t+1-j} \notag\\
    & \leq \frac{\sqrt{\pi}(1+\sqrt{\pi})}{(1-\sqrt{\pi})^2}  L \Tilde{G} \alpha_1 \big\|\Vbh_{1}^{-1/2}\big\|_2 + \frac{\sqrt{\pi}(1+\sqrt{\pi})}{(1-\sqrt{\pi})^2} \cdot 2\sigma\sqrt{\frac{N-\tau}{\tau (N-1)}} \sum_{j=1}^t (\sqrt{\pi})^{t+1-j} \notag\\
    & \leq \frac{\sqrt{\pi}(1+\sqrt{\pi})}{(1-\sqrt{\pi})^2}  L \Tilde{G} \alpha_1 \sqrt{d} \nu^{-1/2} + \frac{\sqrt{\pi}(1+\sqrt{\pi})}{(1-\sqrt{\pi})^2}\cdot 2\sigma \sqrt{\frac{N-\tau}{\tau (N-1)}} .
\end{align}
where the last inequality holds because $\big\|\Vbh_{1}^{-1/2}\big\|_2 \leq \sqrt{d} \nu^{-1/2}$. This concludes the proof.
\end{proof}

\section{Dependency on the Compression Constant}

We track the impact of the compressor related parameter $\pi$. Recall in Assumption \ref{as:compressor}, we assume we have compressors satisfy $\EE_\cC[\|\cC(\xb)-\xb\|_2^2] \leq \pi \xb^2 , \forall \xb \in \RR^d$. Note that $\pi = 0$ leads to $\cC(\xb) = \xb$. In fact, a stronger compression means a larger $\pi$, i.e., $\pi \to 1$, may leads to worse convergence. For our convergence bound
\begin{align}\label{eq:thm-appendix-D}
    \min_{t\in[T]} \EE\big[\big\|\nabla f(\xb_t)\big\|_2^2\big] &\leq\frac{1}{T}\sum_{t=1}^T \EE\big[\big\|\nabla f(\xb_t)\big\|_2^2\big] \notag\\
    &\leq \frac{M_1}{T \alpha} + \frac{M_2}{T} + \alpha M_3 + \frac{\alpha M_4 \sigma^2}{n} + M_5 \sigma \sqrt{\frac{N-\tau}{\tau(N-1)}},
\end{align}
where
\begin{align*}
    M_1 & = C\cdot\Delta f,\quad
    M_2 = \frac{C G \Tilde{G}}{(1-\beta_1)\sqrt{\nu}},\quad
    M_3 = \frac{32CC_1 \Tilde{G}^2}{\nu} + \frac{2\sqrt{\pi}C L G \Tilde{G} \sqrt{d}}{\nu(1-\sqrt{\pi})^2}, \quad\\
    M_4 &= \frac{4CC_1}{\nu}, \quad M_5 = \frac{4\sqrt{\pi} C G}{\sqrt{\nu}(1-\sqrt{\pi})^2},
\end{align*}
and 
\begin{align*}
    \Delta f & = f(\xb_1)- \inf_x f(\xb), \Tilde{G} = C_2G, \Tilde{G}_\infty =  C_2G_\infty, \\
    C & = 2 (\Tilde{G}_\infty^2+\nu)^{1/2}, C_1 = 2L+3L\bigg(\frac{\beta_1}{1-\beta_1}\bigg)^2, 
    C_2 = \frac{(1+\sqrt{\pi})^2}{(1-\sqrt{\pi})^2}.
\end{align*}
Following by Lemma \ref{lm:mVbound-main}, as $\pi \to 1$, we conclude the series of compressed gradient bounds $\hat{G}$, $\tilde{G}$, $\hat{G}_\infty$, $\tilde{G}_\infty$ as follows
\begin{align*}
    \hat{G} & = \frac{1+\sqrt{\pi}}{1-\sqrt{\pi}}G = O\bigg(\frac{G}{1-\pi} \bigg), \quad \hat{G}_\infty = \frac{1+\sqrt{\pi}}{1-\sqrt{\pi}}G_\infty = O\bigg(\frac{G_\infty}{1-\pi} \bigg),\\
    \tilde{G} & = C_2 G = \frac{(1+\sqrt{\pi})^2}{(1-\sqrt{\pi})^2} G = O\bigg(\frac{G}{(1-\pi)^2} \bigg),
    \tilde{G}_\infty = C_2 G_\infty = \frac{(1+\sqrt{\pi})^2}{(1-\sqrt{\pi})^2} G_\infty = O\bigg(\frac{G_\infty}{(1-\pi)^2} \bigg),  \\
    C & = 2 (\Tilde{G}_\infty^2+\nu)^{1/2} = O(\tilde{G}_\infty) = O\bigg(\frac{G_\infty}{(1-\pi)^2} \bigg).
\end{align*}

\begin{table}[ht] 
    \centering
    \begin{tabular}{c|c}
        \toprule
        Const & Order of $\pi$ \\
        \midrule
        $M_1$ &  $O\left(\Delta f G_\infty(1-\pi)^{-2}\right)$ \\
        $M_2$ &   $O\left( G^2 G_\infty(1-\pi)^{-4} \right)$\\
        $M_3$ & $O\left(G^2G_\infty (1-\pi)^{-6}+L \sqrt{d} G^2 G_\infty(1-\pi)^{-6} \right)$\\
        $M_4$ & $O\left(G_\infty(1-\pi)^{-2}\right)$ \\
        $M_5$ &  $O\left(G G_\infty(1-\pi)^{-4} \right)$\\
        \hline
        $T$ & $O\left(M_1 M_3 \epsilon^{-2} + M_1 M_4\sigma^2 n^{-1} \epsilon^{-2} + M_2\epsilon^{-1} \right)$\\
        \bottomrule
    \end{tabular}
    \caption{Theorem constants with dependency on $\pi$.}
    \label{tab:pi-dependency}
\end{table}
Table \ref{tab:pi-dependency} summarize the dependency of the theorem constant on $\pi$. Note that here the total number of iterates $T$ scales as $(1-\pi)^{-8}$, which is higher than that of distributed SGD \citep{philippenko2020artemis, philippenko2021preserved}. This is mainly due to the fact that the convergence of adaptive gradient methods and the theoretical proof of adaptive gradient methods rely heavily on the bounded gradient assumption. While for communication-compressed adaptive methods, the gradient bound for compressed gradient $\gbh$ and $\gbt$ is already dependent on $\pi$ as shown in the above derivations. In fact, our dependency on gradient bounds matches the existing works studying vanilla adaptive gradient methods \citep{1808.05671, chen2018on}, which suggests that our bound is indeed tight.

Note that in practice, the high dependency on $\pi$ does not make much difference. In fact, the actual $\pi$ computed using real compressor is in a constant order. For example, in our ResNet-18 experiment with scaled sign compressor, the actual $\pi$ ranges from $[0.597, 0.713]$ during the entire training procedure, which is indeed a reasonable constant. 
\clearpage

\section{Additional Experiments}
In this section, we present the additional experiments of our proposed method. In Section \ref{subsec:ncvx-topk}, we provide experiments using the nonconvex logistic regression with Top-$k$ compressor. In Section \ref{subsec:dl-more}, we present additional deep learning experiments on other popular state-of-the-art model architectures.

\subsection{Nonconvex Logistic Regression Based on Top-k Compressor}\label{subsec:ncvx-topk}
The nonconvex logistic regression problem setting follows exactly the same as in Section \ref{subsec:ncvx-sign}. Here we apply the proposed Markov compression sequence based on the Top-$k$ compressor, where $k=1$ for the nonconvex logistic regression with $d=300$ parameters.

\begin{figure*}[ht]
\centering
\subfigure{\includegraphics[width=1.0\textwidth]{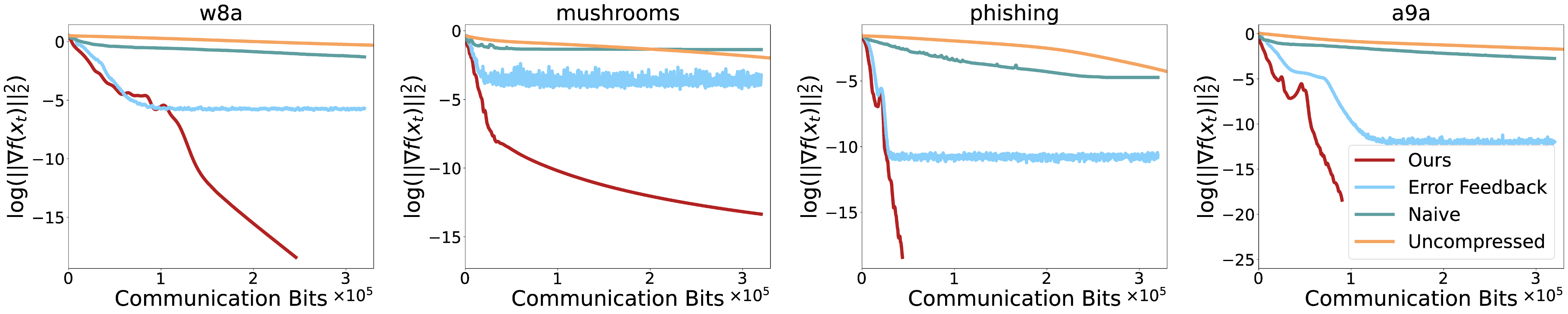}}
\subfigure{\includegraphics[width=1.0\textwidth]{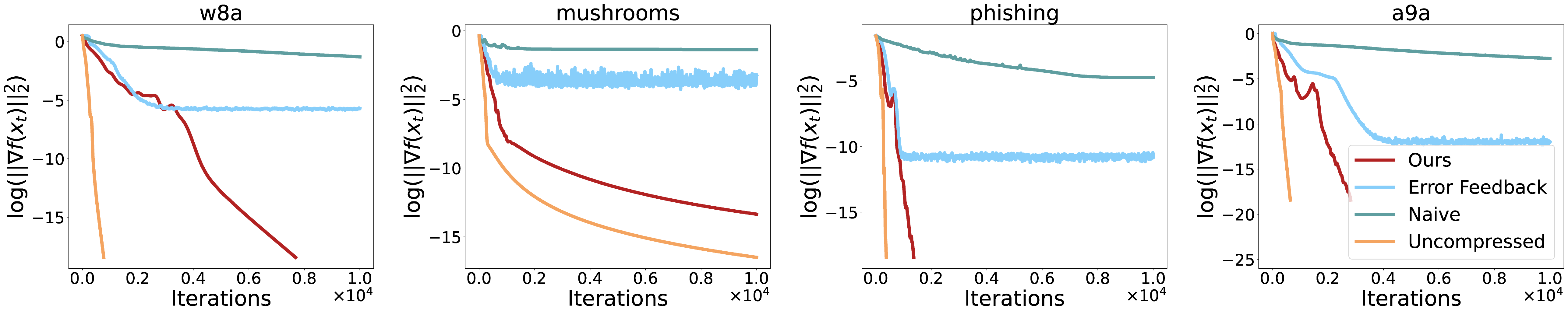}}
\caption{Gradient norm comparison of different compression strategies on nonconvex logistic regression trained by AMSGrad with Top-$1$ based Markov compression sequence. The upper row is the norm convergence with respect to the communication cost, and the lower row is with respect to the training iteration.}
 \label{fig:lr_topk_iter}
\end{figure*}

Figure \ref{fig:lr_topk_iter} shows that our proposed CD-Adam achieved the best performances for all four datasets against the other three compression strategies. Specifically, we can observe that CD-Adam achieves a much better result than AMSGrad with error feedback and naive compression in terms of both communication efficiency and the final performances. Furthermore, compared with uncompressed AMSGrad, our communication efficiency is greatly improved while maintaining similar final performances.

\subsection{Deep Learning Experiments on Image Classification}\label{subsec:dl-more}
In this part, we provide additional deep learning experiments on other state-of-the-art model architectures. The problem setting follows exactly the same as in Section \ref{subsec:dl}. In the following, we show that our proposed algorithm enjoys a fast convergence speed with high accuracy and less communication cost compared with EF21 and 1-bit Adam.

Figure \ref{fig:resnet_full}, \ref{fig:vgg_bits} and \ref{fig:widern_bits} compare the performance with respect to the communication cost using different model architectures. The experiments show that our proposed method obtains better training/testing loss and accuracy, especially much better training/testing gradient norm on various CNN model architectures, including ResNet, VGG, and WideResNet. Notice that under this comparison, EF21, which is also equipped with Markov compression sequences on SGD, performs worse than our method, especially in later training stages. While for 1-bit Adam, since it needs an uncompressed warm-up procedure for a few epochs, the communication efficiency is much worse compared with CD-Adam and EF21. 

Figure \ref{fig:resnet_epoch}, \ref{fig:vgg_epoch} and \ref{fig:widern_epoch} compare the performance with respect to the iteration epochs using different model architectures. Compared with EF21, our proposed method obtains a much better convergence speed in terms of training/testing loss, accuracy, and the gradient norm. Our method also shows a better performance than 1-bit Adam in terms of the final testing accuracy and gradient norm. 

\begin{figure*}[ht]
 \centering
  \includegraphics[width=1.0\textwidth]{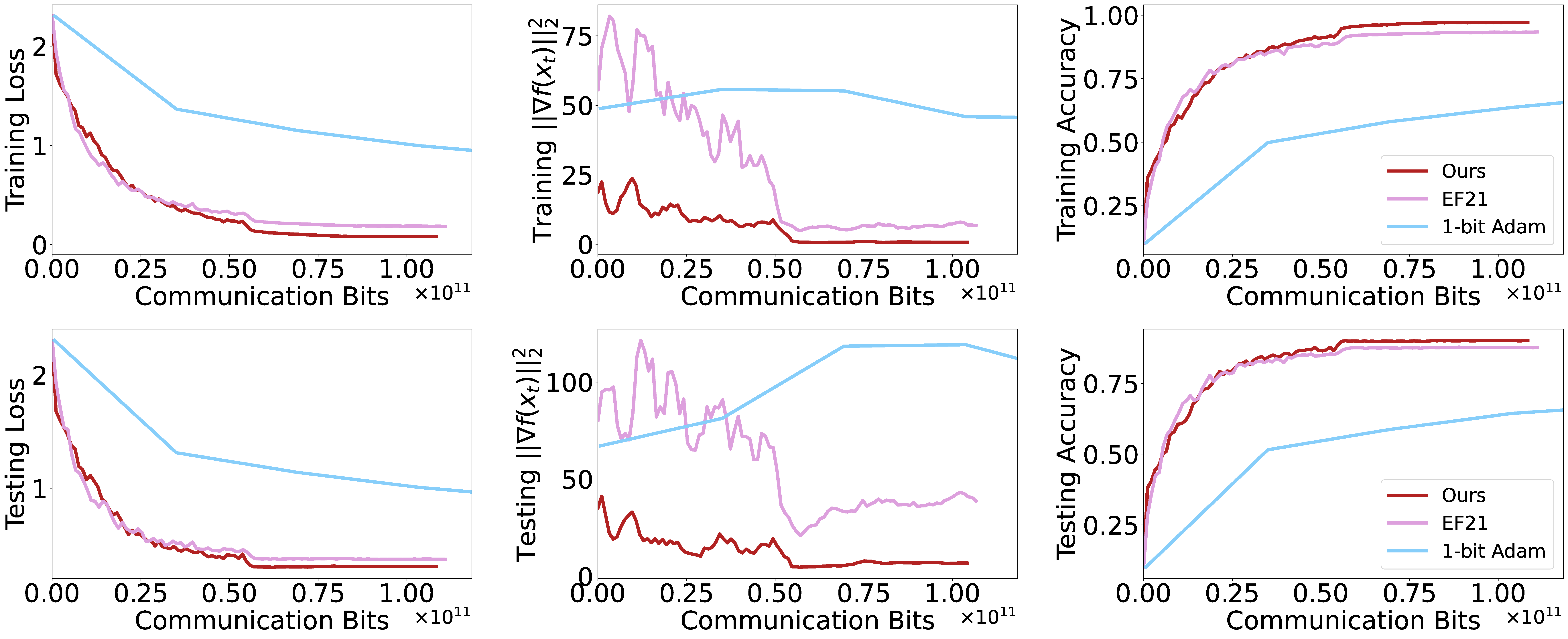}
  \caption{Comparison on training/testing loss, gradient norm and accuracy with respect to communication cost among our proposed method and the baselines when training \texttt{ResNet-18} model on \texttt{CIFAR10}.}
  \label{fig:resnet_full}
\end{figure*}

Figure \ref{fig:resnet_full} is the continuous of Figure \ref{fig:resnet_showcase}. It shows that our proposed CD-Adam achieves better performance than EF21 and 1-bit Adam under the same communication cost. 

\begin{figure*}[ht]
 \centering
  \includegraphics[width=1.0\textwidth]{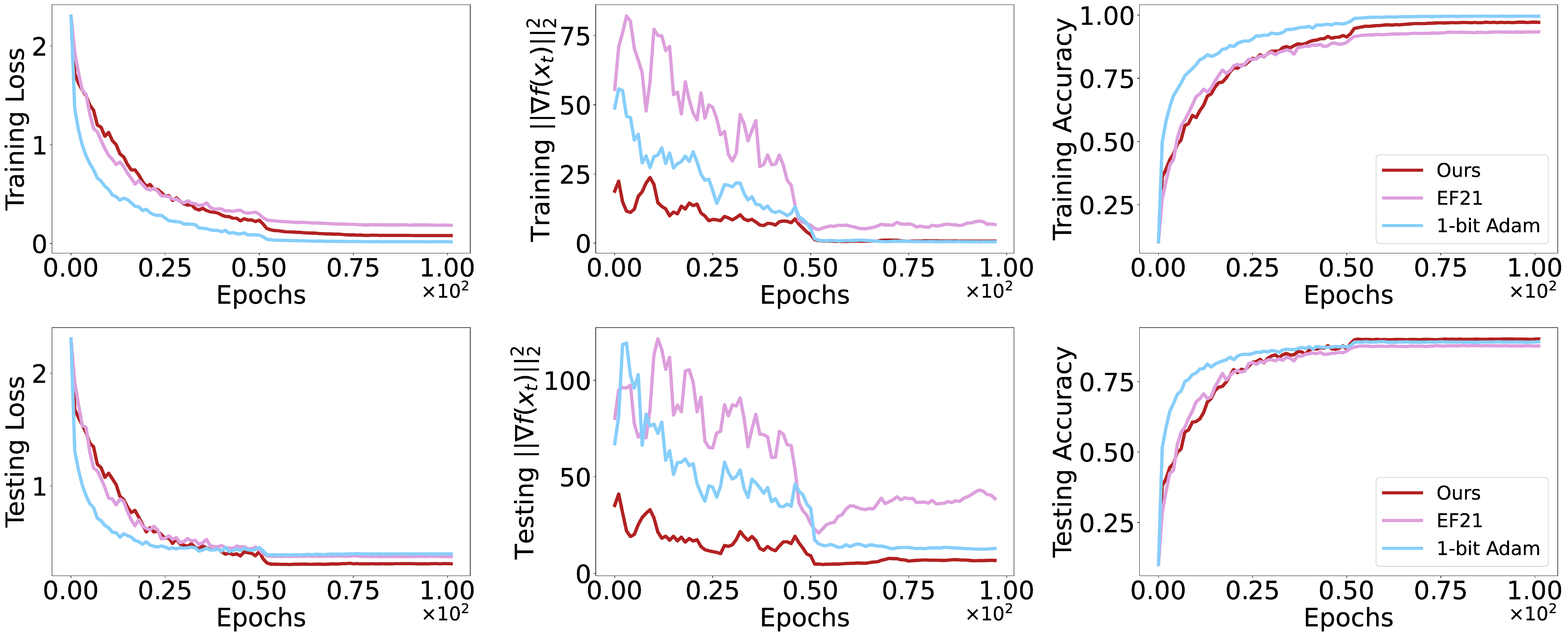}
  \caption{Comparison on training/testing loss, gradient norm and accuracy with respect to epochs among our proposed method and the baselines when training \texttt{ResNet-18} model on \texttt{CIFAR10}.}
  \label{fig:resnet_epoch}
\end{figure*}

In Figure \ref{fig:resnet_epoch}, the left two plots show that at the early stage of the training process, CD-Adam and EF21 obtain a similar speed of reducing the training loss, while 1-bit Adam is quite faster due to the warm-up process. At later stages, CD-Adam shows a clear advantage compared with EF21. Our method also shows better performance on testing loss than 1-bit Adam. From the middle two plots in Figure \ref{fig:resnet_epoch}, we can observe that our proposed CD-Adam obtains a smaller gradient norm than EF21 and 1-bit Adam under the same epochs when training and testing. The right two plots show that our proposed method achieves high accuracy on both the training and testing sets. At the early stage of the training process, CD-Adam and EF21 obtain a similar speed of increasing the accuracy, while 1-bit Adam is faster due to the warm-up process. At later stages, CD-Adam shows a clear advantage compared with EF21. Our method shows better performance on testing accuracy compared with 1-bit Adam.

\begin{figure*}[ht]
 \centering
  \includegraphics[width=1.0\textwidth]{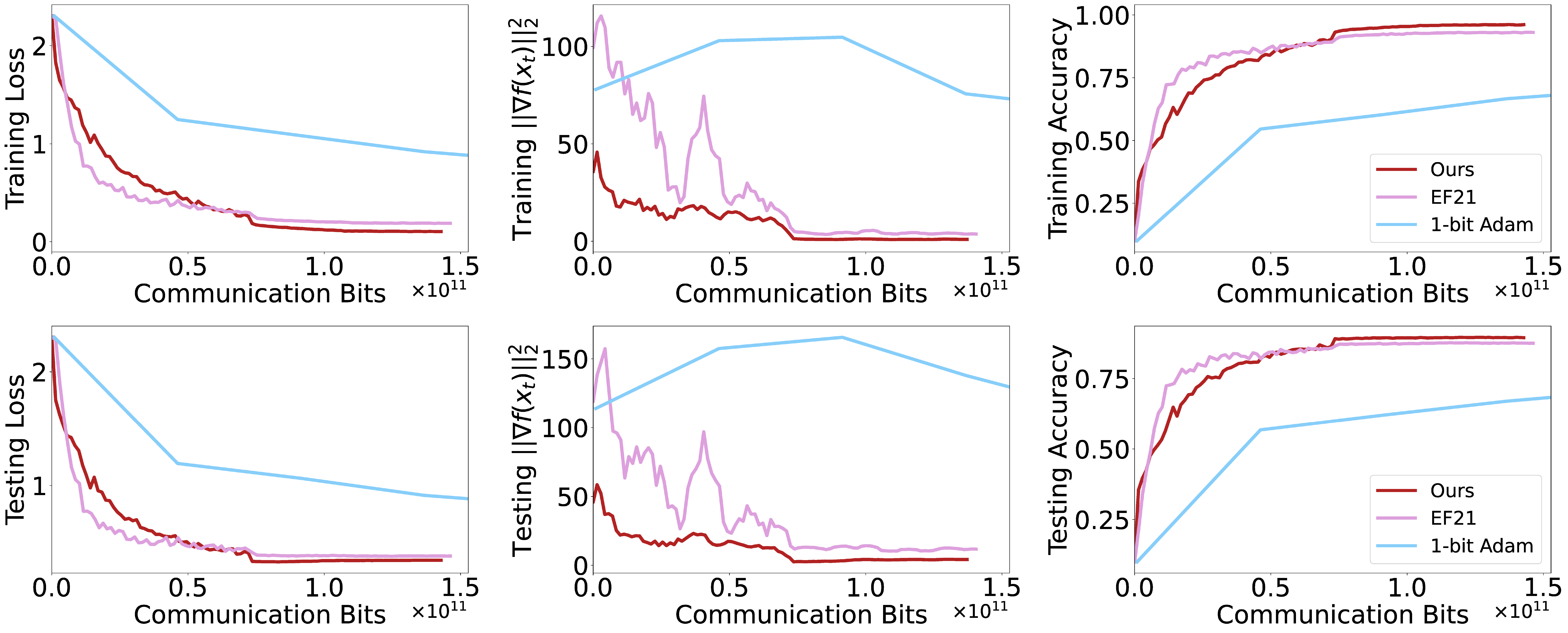}
 \caption{Comparison on training/testing loss, gradient norm and accuracy with respect to communication cost among our proposed method and the baselines when training \texttt{VGG-16} model on \texttt{CIFAR10}. 
}
 \label{fig:vgg_bits}
\end{figure*}

In Figure \ref{fig:vgg_bits}, the left two plots show that our proposed method obtains better training/testing loss than EF21 and 1-bit Adam, though EF21 intermediately shows a faster decrease in the loss function. From the middle two plots, we can observe that our proposed CD-Adam obtains smaller training/testing gradient norms than EF21 and 1-bit Adam under the same communication budget. The right two plots show that CD-Adam obtains better training/testing accuracy than EF21 and 1-bit Adam, though EF21 intermediately shows a faster increase in accuracy as well.

\begin{figure*}[ht]
 \centering
  \includegraphics[width=1.0\textwidth]{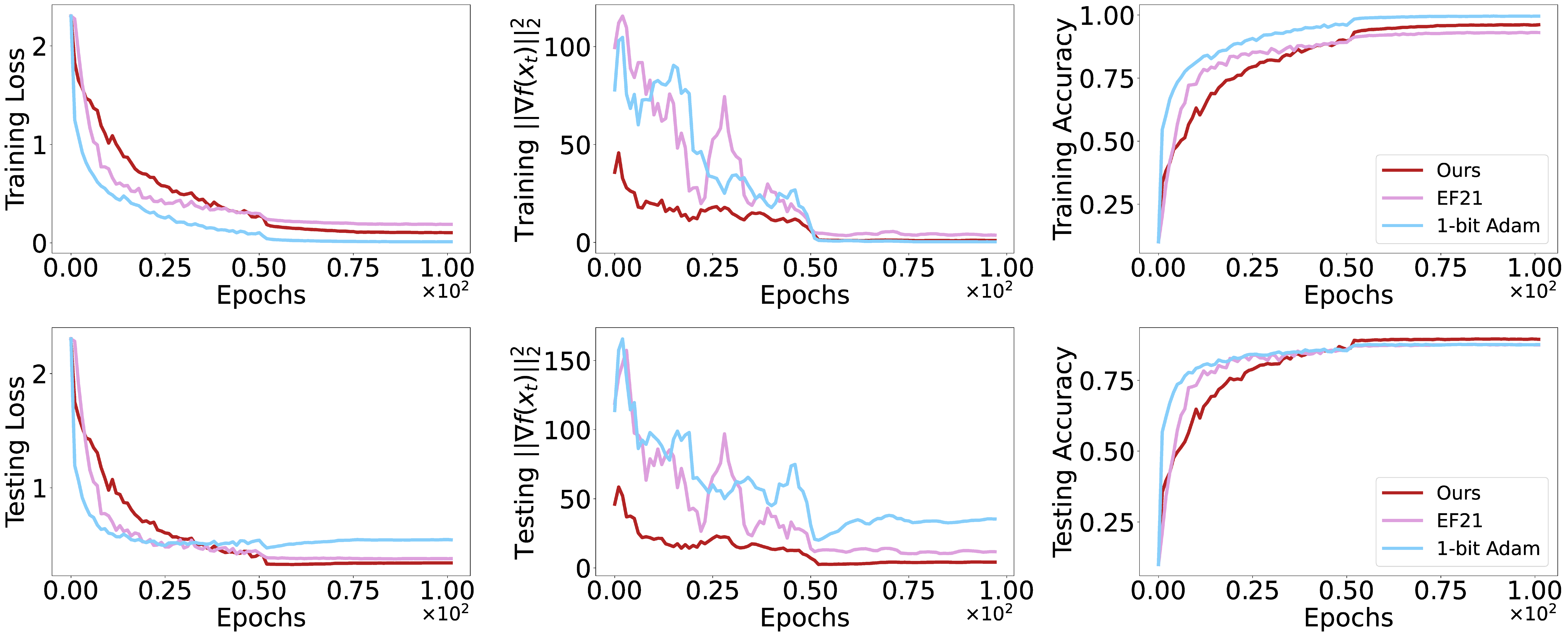}
 \caption{Comparison on training/testing loss, gradient norm and accuracy with respect to epochs among our proposed method and the baselines when training \texttt{VGG-16} model on \texttt{CIFAR10}.}
 \label{fig:vgg_epoch}
\end{figure*}

In Figure \ref{fig:vgg_epoch}, the left two plots show that our proposed method achieves better training/testing loss than EF21, though EF21 intermediately shows a faster decrease in the loss function. Our method shows better performance on testing loss compared with 1-bit Adam. The middle two plots in Figure \ref{fig:vgg_epoch} show that our proposed CD-Adam obtains a smaller gradient norm than EF21 and 1-bit Adam under the same epochs. The right two plots show that our proposed method achieves high accuracy on both the training and testing sets. It shows that our proposed method achieves better training/testing loss than EF21, though EF21 intermediately shows a faster increase in accuracy. Our CD-Adam also shows a better testing accuracy than 1-bit Adam.

\begin{figure*}[ht]
 \centering
  \includegraphics[width=1.0\textwidth]{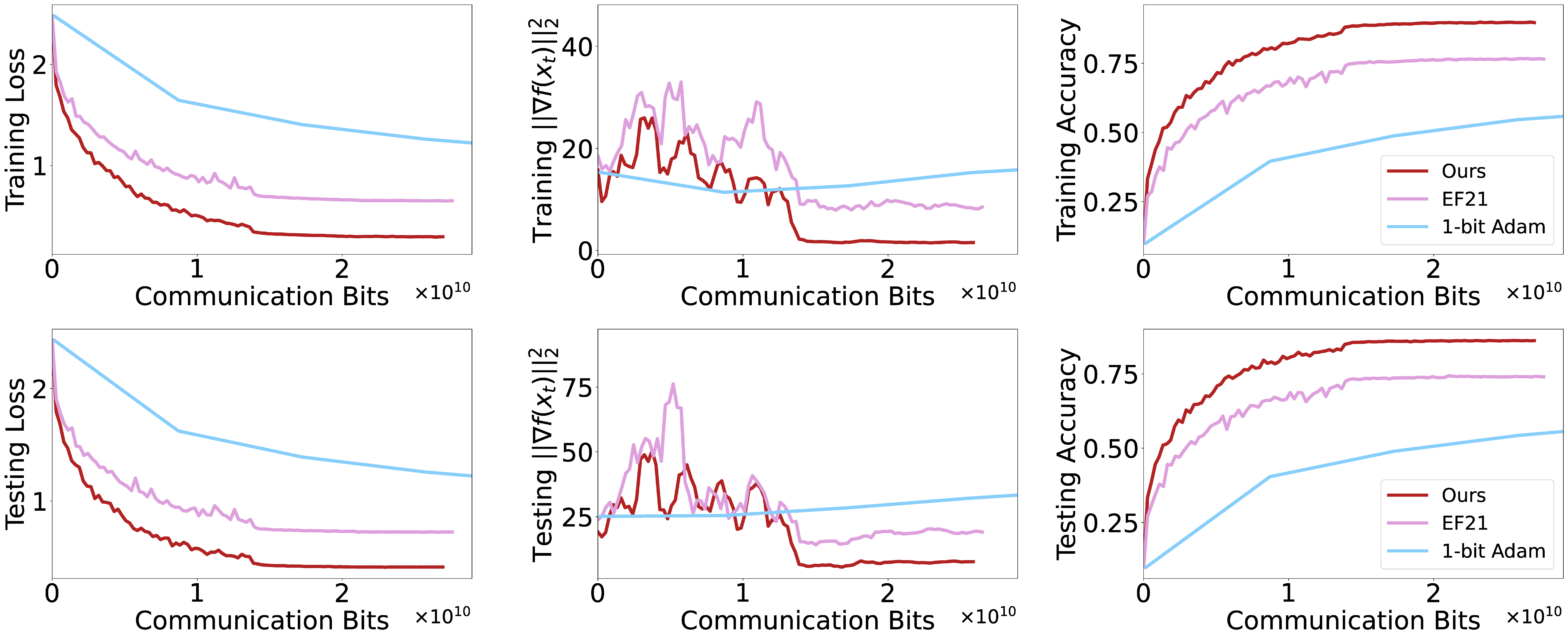}
 \caption{Comparison on training/testing loss, gradient norm and accuracy with respect to communication cost among our proposed method and the baselines when training \texttt{WideResNet-16-4} model on \texttt{CIFAR10}.}
 \label{fig:widern_bits}
\end{figure*}

In Figure \ref{fig:widern_bits}, the left two plots and the right two plots show that our proposed method maintains a significant advantage on training/testing loss and accuracy compared with EF21 and 1-bit Adam. The middle two plots show that our proposed CD-Adam achieves smaller gradient norms than EF21 and 1-bit Adam. In addition, 1-bit Adam initially shows a lower gradient norm while its gradient norm diverges later. 

\begin{figure*}[ht]
 \centering
  \includegraphics[width=1.0\textwidth]{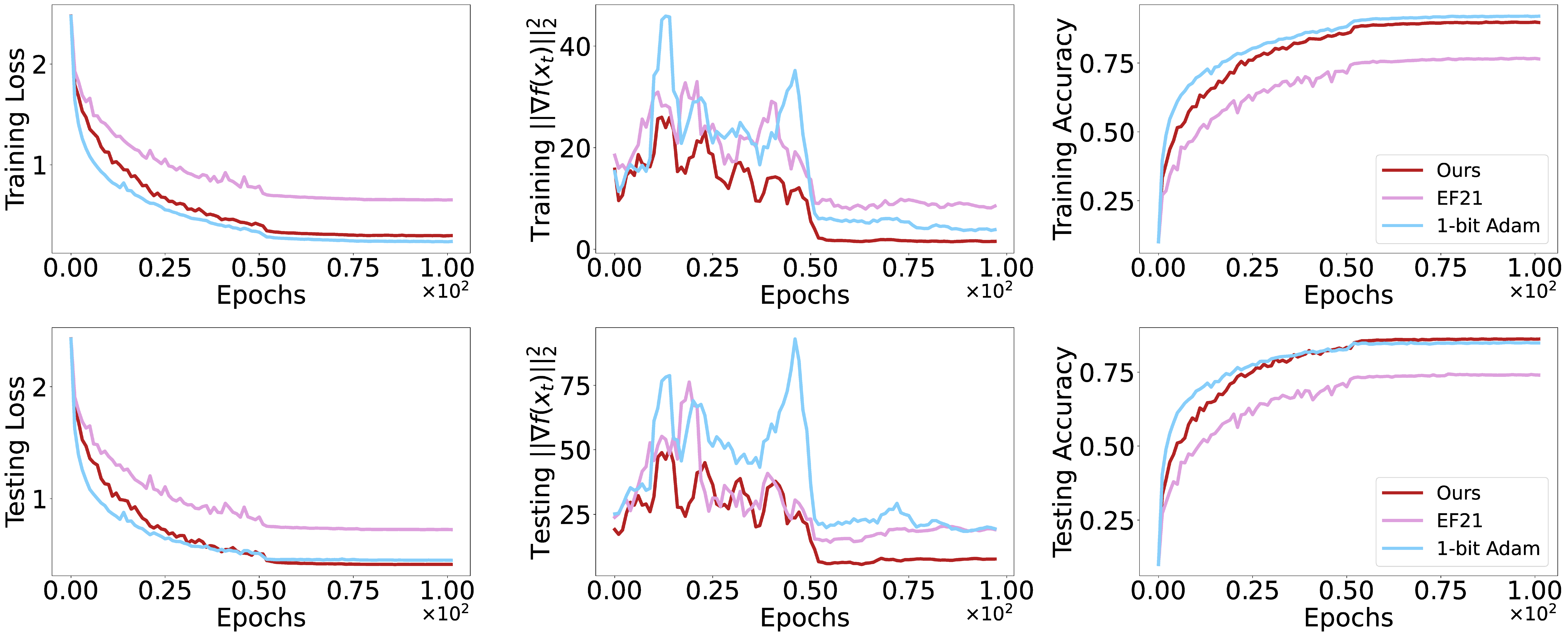}
 \caption{Comparison on training/testing loss, gradient norm and accuracy with respect to epochs among our proposed method and the baselines when training \texttt{WideResNet-16-4} model on \texttt{CIFAR10}.}
\label{fig:widern_epoch}
\end{figure*}

In Figure \ref{fig:widern_epoch}, the left two plots and right two plots show that our proposed method achieves better training/testing loss and accuracy than EF21. Furthermore, CD-Adam still achieves better performance on testing loss and testing accuracy than 1-bit Adam, while 1-bit Adam keeps a slighter better performance on training loss and training accuracy. The middle two plots in Figure \ref{fig:widern_bits} show that CD-Adam achieves better training/testing gradient norms compared with EF21 and 1-bit Adam. 

\subsection{Additional results}
\textbf{Average runtime and total bits: } The overhead on compression as well as the worker-side update is quite small, as shown from Table \ref{tab:runtime}. Note that EF21 takes more time since it adopts top-$k$ which is harder to compute compared to scaled sign compressor.

\begin{table}[ht]
    \centering
    \begin{tabular}{c|cc}
        \toprule
        Method & Avg Time (s/iter)  & Total Bits \\
        \midrule
        Uncompressed & 1.015 & $32d$$\times$$2T$ \\
        EF21 & 1.402 & $\approx(32k$$\times$$2)$$\times$$2T$ \\
        1-bit Adam &  1.041 & $32d$$\times$$2T_1$+$(32$+$d)$$\times$$2(T$-$T_1)$\\
        \textbf{CD-Adam} &  1.134 & $\mathbf{(32}$\textbf{+}$\mathbf{d)}$$\mathbf{\times}$$\mathbf{2T}$\\
        \bottomrule
    \end{tabular}
    \caption{Average runtime and total bits}
    \label{tab:runtime}
\end{table}

\textbf{Ablation on $n$ and $\tau$: } The left plot in Figure \ref{fig:n-tau} shows the effect of the total amount of workers $n$ on the convergence of training loss. It shows that a larger $n$ leads to a faster decrease on training loss but not absolutely leads to a better convergence rate. The right plot in Figure \ref{fig:n-tau} shows the effect of batch size $\tau$ on the convergence of training loss. We observe that a larger number of batch size achieves a faster convergence rate.
\begin{figure}[ht!]
    \centering
    \includegraphics[width=0.6\textwidth]{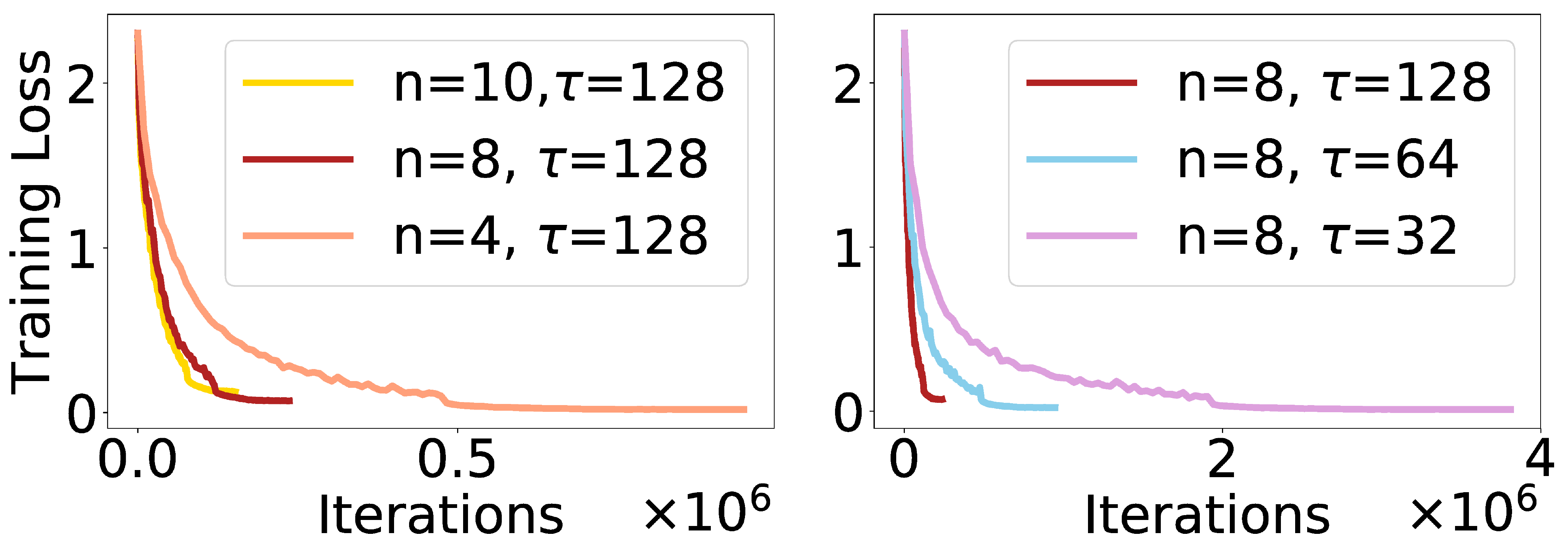}
    \caption{Ablation on $n$ and $\tau$}
    \label{fig:n-tau}
\end{figure}

\clearpage

\end{document}